\newcommand{\tabincell}[2]{\begin{tabular}{@{}#1@{}}#2\end{tabular}}
\setlist[enumerate]{leftmargin=.5in}
\setlist[itemize]{leftmargin=.5in}
\crefname{hypothesis}{Hypothesis}{Hypotheses}
\title{NeurTV: Total Variation on the Neural Domain
\thanks{This research is supported by the National Key R\&D Program of China (2022YFA1004100), the NSFC (No. 124B2029, 12371456, 12171072, 62131005, 12226004, 62272375, 32125009), the Major Key Project of PCL under Grant PCL2024A06, the Sichuan Science and Technology Program (No. 2024NSFJQ0038, 2023ZYD0007, 2024NSFSC0038), and the National Key Research and Development Program of China (No. 2020YFA0714001).}}
\author{Yisi Luo\thanks{School of Mathematics and Statistics, Xi'an Jiaotong University, Xi'an, Shaanxi, China.}
\and Xile Zhao\thanks{Corresponding author. School of Mathematical Sciences, University of
Electronic Science and Technology of China, Chengdu, Sichuan, China.}
\and Kai Ye\thanks{School of Automation Science and Engineering, Faculty of Electronic and Information Engineering, Xi'an Jiaotong University, Xi'an, Shaanxi, China.}
\and Deyu Meng\thanks{Corresponding author. School of Mathematics and Statistics, Xi'an Jiaotong University,  Xi'an, Shaanxi, China; Pengcheng Laboratory, Shenzhen, China; Macao Institute of Systems Engineering, Macau University of Science and Technology, Taipa, Macao.}}
\begin{document}
	
	\maketitle
	
	\begin{abstract}
Recently, we have witnessed the success of total variation (TV) for many imaging applications. However, traditional TV is defined on the original pixel domain, which limits its potential. In this work, we suggest a new TV regularization defined on the neural domain. Concretely, the discrete data is implicitly and continuously represented by a deep neural network (DNN), and we use the derivatives of DNN outputs w.r.t. input coordinates to capture local correlations of data. As compared with classical TV on the original domain, the proposed TV on the neural domain (termed NeurTV) enjoys the following advantages. First, NeurTV is free of discretization error induced by the discrete difference operator. Second, NeurTV is not limited to meshgrid but is suitable for both meshgrid and non-meshgrid data. Third, NeurTV can more exactly capture local correlations across data for any direction and any order of derivatives attributed to the implicit and continuous nature of neural domain. We theoretically reinterpret NeurTV under the variational approximation framework, which allows us to build the connection between NeurTV and classical TV and inspires us to develop variants (e.g., space-variant NeurTV). Extensive numerical experiments with meshgrid data (e.g., color and hyperspectral images) and non-meshgrid data (e.g., point clouds and spatial transcriptomics) showcase the effectiveness of the proposed methods.
\end{abstract}
\begin{keywords}
Total variation, deep neural network, continuous representation
\end{keywords}
	\begin{MSCcodes}
		94A08, 68U10, 68T45
	\end{MSCcodes}
\section{Introduction} Total variation (TV) regularization \cite{Rudin,SIAM_TV} has been a very successful method for a wide range of imaging applications \cite{SIAM_SVTV,SIAM_TTV,SIIMS_Tai,SIIMS_TV_2024}. The main aim of TV is to capture local correlations of data to reconstruct signals with degradations, e.g., signals with noise \cite{SIIMS_Elad} or missing pixels \cite{TIP_Inpainting}. {Here, the local correlations of data refer to the fact that objects that are close to each other are more likely to be similar or have some kind of spatial relationship compared to objects that are farther apart. This comes from the first law of geography introduced by Waldo R. Tobler in 1969 \cite{geo1}. For instance, the adjacent pixels of an image or the adjacent points of a point cloud tend to have similar color values, which naturally results in visually smooth local patterns. To illustrate such local correlations, we plot the histograms of the local differences of an image in Fig. \ref{fig_hist}. {Here, the local differences refer to the discrete gradients of the image by calculating the differences between adjacent pixel values}. It can be observed that the local differences are concentrated around zero, which validates the strong local correlations of the image. Such local correlations of data actually have been widely employed to alleviate the ill-posedness of many inverse imaging problems.} As symbolic work along this line, vast TV-based models have been proposed to address image processing problems, such as image denoising \cite{SIAM_MultiNoise}, deblurring \cite{TIP_09_TV}, deconvolution \cite{TIP_deconv}, unmixing \cite{TGRS_Unmixing}, retinex \cite{TV_Retinex}, etc. Among these models, some representative methods beyond the classical TV configuration were developed, such as the higher-order total generalized variation \cite{TGV}, directional total variation \cite{DTV}, weighted total variation \cite{PR_WTV,TGRS_S2S} and so on \cite{SIAM_higherorder,SIAM_Zhao_14,SIAM_TTV,SIAM_SVTV,SIIMS_TV_2024,TVTLS}. These methods successfully address some limitations of classical TV such as staircase effects \cite{NA_21}, loss of contrast \cite{SIAM_TV}, and isotropy property \cite{DTV}. Nowadays, TV has also been widely utilized in many science fields, e.g., gene analysis \cite{SR}, low-dose computed tomography \cite{SR_2}, and seismic inversion \cite{seismic}, serving as a fundamental tool for many tasks in real life.\par 
Classical TV regularizations are defined on the original pixel domain, which may limit their potential in two aspects. First, classical TV is limited to meshgrid data such as images and is not suitable for emerging non-meshgrid data, e.g., point clouds and spatial transcriptomics \cite{NMI_ST}, since it is difficult to define the discrete difference operator for non-meshgrid data. Second, the classical TV solely utilizes vertical and horizontal differences, which can not sufficiently capture local correlations of data existed in more comprehensive directions and orders of derivatives. A remedy to enhance the flexibility of TV is to consider improved discretization schemes, e.g., by using directional differences \cite{SPIC}, second-order differences \cite{TGV,NA_21}, and learning adaptive discretizations from paired datasets \cite{SIAM_discrete}. However, these improved discretization methods rely on carefully designed discrete approximation operators, and hence their characterizations for directional and high-order derivatives are generally quite complex.\par 
\begin{figure}[t]
	\vspace{-0.2cm}
		
	\scriptsize
	\setlength{\tabcolsep}{0.9pt}
	\begin{center}
		\includegraphics[width=0.70\textwidth]{dxdy.pdf}
		\vspace{-0.2cm}
	\end{center}
	\caption{The histograms of the local differences between adjacent pixels of the image ``Peppers''. \label{fig_hist}}
\vspace{-0.5cm}
\end{figure} 
In this work, we suggest a new TV regularization by using a deep neural network (DNN) to continuously represent data. {Here, the continuous representation refers to using a DNN to represent data by feeding the coordinate of data into the DNN and outputting the corresponding value, {allowing the network to continuously represent discrete data \cite{NeRF,CVPR_LIIF,sine}}. The continuous representation perspective allows us to readily deconstruct and reconstruct the classical TV. Concretely, we use the derivatives of DNN outputs w.r.t. input coordinates to capture local correlations of data, and propose the TV on the neural domain (termed NeurTV). {The proposed NeurTV encodes prior information of data (i.e., the local correlations of data) into the continuous representation model, which can help alleviate the overfitting of classical continuous representation methods (e.g., alleviate overfitting to noise; see for example Fig. \ref{fig_Barbara}) by virtue of the encoded prior information of data.} Besides, our NeurTV regularization can be readily combined with different DNN structures used in existing continuous representation methods; see examples in Sec. \ref{sec_neurtv}.}\par 
As compared with classical TV on the original pixel domain, the proposed NeurTV enjoys the following advantages (as intuitively shown in Fig. \ref{fig_NeurTV}). First, the NeurTV is free of discretization error induced by the discrete difference operator used in classical TV. Second, NeurTV is suitable for both meshgrid and non-meshgrid data attributed to the continuous representation. Third, NeurTV can be more readily extended to capture local correlations across any direction and any order of derivatives due to the differentiable nature of the DNN without designing additional discrete difference operators.\par
To justify the proposed NeurTV, we reinterpret NeurTV from the variational approximation perspective, which allows us to build connections between NeurTV and classical TV regularizations. Moreover, the variational approximation motivates us to develop variants of NeurTV such as space-variant NeurTV by designing spatially varying scale and directional parameters, which further improves the flexibility of NeurTV. Furthermore, NeurTV is flexibly integrated as a plug-and-play module, allowing a wide range of applications. In this work, we consider multiple applications including image denoising, inpainting, hyperspectral image mixed noise removal (on meshgrid), point cloud recovery, and spatial transcriptomics reconstruction (beyond meshgrid), which are validated through a series of numerical experiments. \par 
	\begin{figure}[t]
		\vspace{-0.2cm}
		\scriptsize
		\setlength{\tabcolsep}{0.9pt}
		\begin{center}
			\includegraphics[width=0.95\textwidth]{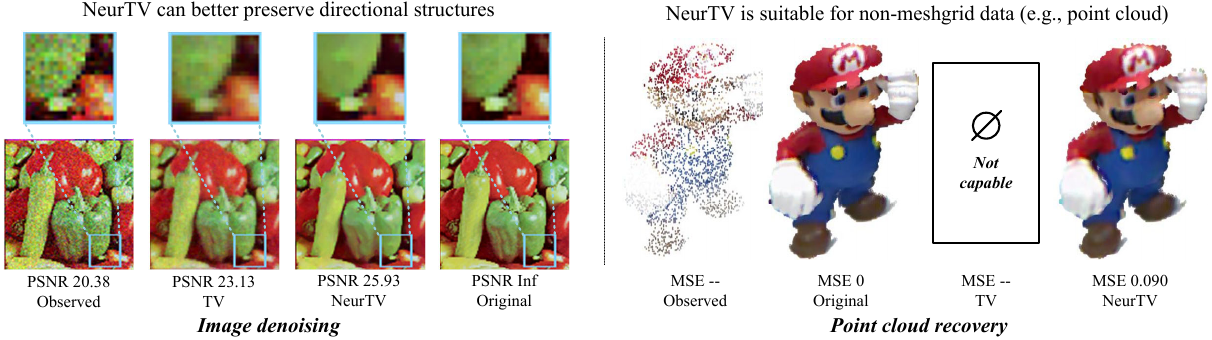}
			\vspace{-0.2cm}
		\end{center}
		\caption{The results of image denoising on ``Peppers'' and the results of point cloud recovery on ``Mario'' by using classical TV and the proposed NeurTV. NeurTV is suitable for both meshgrid and non-meshgrid data (e.g., point cloud), while TV is not suitable for non-meshgrid data. Moreover, NeurTV can better capture directional features by using the directional derivatives of DNN in the continuous domain.\label{fig_NeurTV}\vspace{-0.5cm}}
	\end{figure}
In summary, the contributions of this work are as follows.
\begin{itemize}
\item We propose the NeurTV regularization by using a DNN to continuously and implicitly represent data. Instead of using image differences on the pixel domain, we utilize the derivatives of DNN outputs w.r.t. input coordinates on the continuous neural domain to more comprehensively capture local correlations of data. Attributed to the continuous representation, NeurTV is suitable for both meshgrid and non-meshgrid data. Meanwhile, NeurTV can more exactly capture local correlations of data for any direction and any order of derivatives.
\item We theoretically reinterpret NeurTV from the variational approximation perspective, which allows us to draw connections between NeurTV and classical TV and also motivates us to develop more variants, e.g., arbitrary resolution NeurTV and space-variant NeurTV, which further improve the effectiveness and flexibility of NeurTV.
\item The suggested NeurTV regularization constitutes a basic building block that allows a wide range of applications. Specifically, we consider different applications including image denoising, inpainting, hyperspectral image mixed noise removal (on meshgrid), point cloud recovery, and spatial transcriptomics reconstruction (non-meshgrid). Extensive numerical experiments demonstrate the effectiveness of NeurTV methods. 
	\end{itemize}\par 
	The rest of the paper is organized as follows. Sec. \ref{sec_rela} introduces some related works. Sec. \ref{sec_method} presents the proposed NeurTV regularization and discusses its advantages as compared with classical TV. Moreover, we introduce some variants of NeurTV to more effectively capture local correlations of data. Sec. \ref{sec_exp} conducts numerical experiments using meshgrid and non-meshgrid data to show the effectiveness of NeurTV. Sec. \ref{sec_con} finally concludes the study.
	\section{Related Works}\label{sec_rela}
	\subsection{Interpolation and Rotation-based Methods} In the literature, some methods have attempted to enhance the flexibility of the TV from the perspective of interpolation or rotation \cite{DTV,SIIMS_TDV}. For example, Hosseini \cite{SPIC} considered interpolation to construct TV with four directions. Zhuang et al. \cite{SPL_derain,JCAM} proposed the learned gradient prior by interpolating over a rectangle space to define directional derivatives. The well-known directional TV \cite{DTV} can capture local directional smoothness of data by weighting the vertical and horizontal differences to create directional differences. Another type of directional TV is based on rotations. For instance, Jiang et al. \cite{FastDeRain} and Chang et al. \cite{ACMMM} rotated the image and then performed TV regularization to depict directional information. Zhuang et al. \cite{UConNet} rotated the feature maps of a deep neural network and then performed the TV to implicitly model directional derivatives. The main aim of these methods is to characterize directional derivative information by using elaborately designed interpolation or rotation techniques. As compared, our NeurTV can more accurately and concisely capture local correlations of data for arbitrary directions by using derivatives of DNN in the continuous space, which does not need to perform interpolation, rotation, or other operators to calculate directional derivatives.
	\subsection{Basis Function-based Methods} Another type of methods that related to this work are the basis function-based regularization methods \cite{sp,spl_smooth,TSP_CP,STD}. The pioneer works \cite{sp,spl_smooth} utilized basis functions (e.g., Gaussian basis \cite{sp}) to reveal the implicit smoothness between adjacent elements of the matrix. Other basis functions such as Fourier
	series \cite{TSP_CP} or Chebyshev expansions \cite{SIAM_cheb} were also utilized to represent matrices/tensors under function representations. The main aim of these basis function-based methods is to represent discrete meshgrid data via basis functions with adaptively learned coefficients to implicitly keep the local smoothness of the representation brought from the smoothness of basis functions. As compared with these methods, our NeurTV may hold stronger representation abilities brought from the DNN to characterize more complex structures of data. Also, NeurTV can be readily extended to capture local correlations for any direction and any order of derivatives. Such explicit directional and higher-order derivatives-based local correlations, however, are relatively hard to be excavated by basis function-based methods.
	{\subsection{Deep Learning-based Total Variation Methods} Recently, some deep learning-based TV regularizations have been proposed by leveraging the powerful representation abilities of deep neural networks to capture local features of data. For instance, Batard et al. \cite{DIP-VBTV} proposed a vector bundle total variation regularized deep image prior method for image denoising, which assumes that the restored image can be generated through a neural network in an unsupervised manner. Kobler et al. \cite{TPAMI_TDV} proposed a deep learning method for image denoising by introducing a data-driven total deep variation regularizer. In this regularizer, a convolutional neural network is used to extract local features after a supervised training process. Liu et al. \cite{JMIV_Liu} proposed a novel soft threshold dynamics framework which can integrate many spatial regularizations of the classical variational models (such as TV regularization) into the deep convolutional neural network for image segmentation. Grossmann et al. \cite{NIPS_Spectral} proposed a neural network approximation of a non-linear spectral total variation decomposition model for spectral decomposition of images. All the previous methods are working on the discrete meshgrid. Comparatively, the proposed NeurTV is designed based on the continuous representation using a deep neural network, which enjoys the following advantages. First, our NeurTV is free of discretization error induced by the discrete difference operator. Second, NeurTV is suitable for both meshgrid and non-meshgrid data, while existing deep learning-based methods are solely suitable for meshgrid data. Third, the proposed NeurTV can be more readily extended to capture local correlations for any direction and any order of derivatives due to the differentiable nature of the neural network.
\subsection{Graph and Mesh-based Methods} The graph and mesh-based regularization methods have been widely considered to handle unstructured representations beyond regular meshgrid. The flexibility of graph and mesh-based regularizations makes them suitable for unstructured data such as point clouds. For example, Zeng et al. \cite{TIP_point} proposed a patch-based graph Laplacian regularization for point cloud denoising. Dinesh et al. \cite{TIP_point_2} proposed a feature graph Laplacian regularization for point cloud denoising. Liu et al. \cite{mesh} proposed a mesh total generalized variation for 3D data denoising on triangular meshes. {Graph-based methods could handle both meshgrid data and non-meshgrid data (such as point clouds) effectively, and one can also define regularization on graphs (such as the graph Laplacian regularization). In this work, we propose an alternative regularization based on the continuous representation using deep neural network, which can also handle both meshgrid data and non-meshgrid data. It would be interesting to combine graph-based methods and the NeurTV regularization to further boost the performance, which we will investigate in future research.}}
\section{The Proposed Methods}\label{sec_method} In this section, we first introduce a general data recovery model by using the DNN to continuously represent data in Sec. \ref{sec_model_tv}. In Sec. \ref{sec_neurtv}, we introduce the proposed NeurTV and discuss its advantages. In Sec. \ref{sec_relation}, we reinterpret NeurTV from the variational approximation perspective, which allows us to draw connections between NeurTV and classical TV and also motivates us to develop variants of NeurTV.
	\subsection{Notations} We use $x,{\bf x},{\bf X},{\cal X}$ to denote scalar, vector, matrix, and tensor, respectively. The $i$-th element of a vector $\bf x$ is denoted by ${\bf x}_{(i)}$. The $(i_1,i_2,\cdots,i_N)$-th element of an $N$-th order tensor ${\cal X}$ is denoted by ${\cal X}_{(i_1,i_2,\cdots,i_N)}$. The notation ${\bf X}_{(i,:)}$ refers to the $i$-th row of the matrix $\bf X$, and it is similar for tensors. The discrete difference operator of a tensor ${\cal X}\in{\mathbb R}^{n_1\times n_2\times\cdots\times n_N}$ along its $d$-th dimension ($d=1,2,\cdots,N$) is denoted by ${\rm D}_d{\cal X} := {\cal X}_{(:,\cdots,1:n_d-1,\cdots,:)}-{\cal X}_{(:,\cdots,2:n_d,\cdots,:)}$, which returns a tensor of size ${n_1\times\cdots\times (n_d-1)\times\cdots\times n_N}$. The $\ell_1$-norm of a tensor $\cal X$ is denoted by $\lVert{\cal X}\rVert_{\ell_1}:=\sum_{i_1,i_2,\cdots,i_N}|{\cal X}_{(i_1,i_2,\cdots,i_N)}|$. Given a differentiable multivariate function $f({\bf x}):{\mathbb R}^N\rightarrow {\mathbb R}$, the $\frac{\partial f({\bf x})}{\partial {\bf x}_{(d)}}$ ($d=1,2,\cdots,N$) denotes the partial derivative of $f(\cdot)$ w.r.t. its $d$-th input variable ${\bf x}_{(d)}$. 
	\subsection{Data Recovery Model Based on Continuous Representation}\label{sec_model_tv}
	TV is a widely-used regularization for many imaging applications \cite{SIAM_TV,SIAM_SVTV,SIIMS_TV_2024}. The traditional TV-based data recovery model \cite{SIAM_TV,TV_Retinex,SIIMS_TV_2024} can be generally formulated as
	\begin{equation}\small\label{model_classical}
		\min_{{\bf X}\in{\mathbb R}^{n_1\times n_2}}\;{\rm Fidelity}({\bf O},{\bf X})+\lambda\;{\rm TV}({\bf X}),\;\;\;{\rm TV}({\bf X}):=\lVert{\rm D}_1{\bf X}\rVert_{\ell_1}+\lVert{\rm D}_2{\bf X}\rVert_{\ell_1},
	\end{equation}\noindent
where ${\bf O}\in{\mathbb R}^{n_1\times n_2}$ denotes the observed data. The TV is defined by performing discrete differences between adjacent pixels of $\bf X$. As aforementioned, classical TV is not suitable for non-meshgrid data such as point clouds, and may not be flexible and accurate enough to capture local correlations of data for any direction and any order of derivatives.\par 
	To address these shortcomings, we propose a new regularization method, termed NeurTV regularization, based on the continuous representation using DNN. Our NeurTV regularization is suitable for both meshgrid and non-meshgrid data, and can be extended to more comprehensively capture local correlations of data for any direction and any order of derivatives more accurately. We first introduce the general data recovery model based on the continuous representation. Specifically, the discrete data can be seen as a point set sampled on a multivariate function defined in a continuous domain. Taking the image data as an example and denoting ${\bf O}\in{\mathbb R}^{n_1\times n_2}$ as the observed image. We assume that the $(i,j)$-th element of $\bf O$ is the function value of a function on the coordinate $(i,j)$, i.e., there exists a function $f:\Omega\rightarrow{\mathbb R}$ where $\Omega \subset{\mathbb R}^2$ such that ${\bf O}_{(i,j)}=f(i,j)$ holds for any $i,j$ on meshgrid. Then we can rationally reconstruct the signal by optimizing the function $f(\cdot)$ instead of the discrete matrix. Achieve an effective continuous representation needs to select a function $f(\cdot)$ with strong representation abilities. Inspired by recent continuous representation methods in deep learning \cite{NeRF,CVPR_LIIF,sine,NIPS_PE}, we propose to use a DNN to parameterize the function $f(\cdot)$. A direct way to learn such continuous representation of the data $\bf O$ is to optimize the following objective with a DNN $f_\Theta:{\mathbb R}^2\rightarrow{\mathbb R}$ (where $\Theta$ denotes the DNN weights): 
	\begin{equation}\small\label{model_1}
		\min_{\Theta}\sum_{(i,j)\in{\rm meshgrid}}({\bf O}_{(i,j)}-f_\Theta(i,j))^2+\lambda\;\Psi(\Theta),
	\end{equation}\noindent
	where the first term is the fidelity loss to keep the consistency between the observed data $\bf O$ and the reconstructed signal, and $\Psi(\Theta)$ denotes a regularization term conditioned on DNN parameters $\Theta$. The $\lambda$ is a trade-off parameter. As compared with traditional TV model \eqref{model_1}, we optimize the underlying continuous representation $f_\Theta(\cdot)$ instead of the discrete signal $\bf X$. This allows us to deconstruct and reconstruct classical TV from the continuous representation perspective, offering potential advantages in terms of applicability and flexibility. 
	\par 
For images, the regularization $\Psi(\Theta)$ can be simply set as the traditional discrete difference-based TV on meshgrid:
	\begin{equation}\small
		\Psi(\Theta)=\sum_{(i,j)\in{\rm meshgrid}} |f_\Theta(i+1,j)-f_\Theta(i,j)|+|f_\Theta(i,j+1)-f_\Theta(i,j)|.
	\end{equation}\noindent
	However, for non-meshgrid data (e.g., point cloud), the observed points are not arranged on meshgrid, which makes discrete difference-based TV hard to apply. \par 
	To make the data recovery model \eqref{model_1} applicable for both meshgrid and non-meshgrid data, we reformulate it as follows. First, the observed $N$-dimensional data can be stored in an $n$-by-$(N+1)$ matrix ${\bf O}\in{\mathbb R}^{n\times(N+1)}$, where $n$ denotes the number of observed points and $N$ denotes the dimension number (e.g., $N=2$ for gray images). Each row ${\bf O}_{(i, :)}$ denotes the $i$-th observed point, where the first $N$ elements of ${\bf O}_{(i, :)}$ (i.e., ${\bf O}_{(i,1:N)}$) contain the coordinates and the last element of ${\bf O}_{(i, :)}$ (i.e., ${\bf O}_{(i,N+1)}$) is the value at this coordinate. Using this notation, both meshgrid and non-meshgrid data can be uniformly represented and \eqref{model_1} can be formulated in a more general form by using an $N$-dimensional DNN $f_\Theta:{\mathbb R}^N\rightarrow{\mathbb R}$:
	\begin{equation}\small\label{model_2}
		\min_{\Theta}\sum_{i=1}^n({\bf O}_{(i,N+1)}-f_\Theta({\bf O}_{(i,1:N)}))^2+\lambda\;\Psi(\Theta).
	\end{equation}\noindent
In this new continuous representation model \eqref{model_2}, the classical discrete difference-based TV can no longer be directly applied. Hence, it needs to develop a new regularization that can capture local correlations of both meshgrid and non-meshgrid data under model \eqref{model_2}\footnote{We remark that the data recovery models \eqref{model_1} \& \eqref{model_2} can be readily tackled by standard gradient descent method by optimizing the parameters of DNN, and we consistently employ the Adam optimizer \cite{Adam} to tackle the data recovery models presented in this paper.}.\par 
\subsection{Continuous Representation-based NeurTV}\label{sec_neurtv}
This section introduces the NeurTV regularization, which penalizes the derivatives of DNN outputs w.r.t. input coordinates to more sufficiently keep local smoothness of the continuous representation. Specifically, we introduce the following basic formulation of the NeurTV regularization.
{\begin{definition}[NeurTV]\label{def_NeurTV}
Given a function $f_\Theta({\bf x}):{\Omega\rightarrow{\mathbb R}}$ that is differentiable w.r.t. the input $\bf x$ and a point set $\Gamma\subset{\Omega}$, where $\Omega\subset{\mathbb R}^N$, the NeurTV regularization conditioned on $\Theta$ is defined as
		\begin{equation}\small\label{NeurTV}
			\Psi_{\rm NeurTV}(\Theta) := \sum_{d=1}^N\sum_{{\bf x}\in\Gamma}\left| \frac{\partial f_\Theta({\bf x})}{\partial {\bf x}_{(d)}}\right|,
		\end{equation}\noindent
which is based on the derivative of function output w.r.t. the input coordinate, i.e., $\frac{\partial f_\Theta({\bf x})}{\partial {\bf x}_{(d)}}$, where ${\bf x}_{(d)}$ denotes the $d$-th element of the input vector $\bf x$. If not otherwise specified, all the NeurTV regularizations appeared in this work refer to the above derivative-based NeurTV. \par 
On the other hand, we can also define the difference-based NeurTV. Let $N=2$ and $\Gamma$ be a meshgrid point set in the two-dimensional space (for simplicity we consider $\Gamma=\{(i,j)|i=0,1,\cdots,n_1,\;j=0,1,\cdots,n_2\}$), the difference-based NeurTV regularization is defined as 
		\begin{equation}\small\label{Diff-NeurTV}
			\Psi_{\rm Diff\mbox{-}NeurTV}(\Theta) := \sum_{i=0}^{n_1-1}\sum_{j=0}^{n_2}|f_\Theta(i+1,j)-f_\Theta(i,j)|+\sum_{j=0}^{n_2-1}\sum_{i=0}^{n_1}|f_\Theta(i,j+1)-f_\Theta(i,j)|,
		\end{equation}\noindent
which is based on the difference operators between adjacent points of the meshgrid.
	\end{definition}\par
We remark that the difference-based NeurTV has stronger requirements to be defined, i.e., it needs a meshgrid point set to define the difference-based regularization. Comparatively, the NeurTV based on derivatives can be easily applied to both meshgrid and non-meshgrid data. Hence, in the main experiments, we have consistently used the derivative-based NeurTV if not otherwise specified. The difference-based NeurTV is only used in the ablation study (i.e., Sec. \ref{sec_ablation}). Meanwhile, we remark that in Definition \ref{def_NeurTV}, only the local derivatives w.r.t. the input $\bf x$ are required to define the NeurTV, which is a mild assumption. In practice, we need to use the gradient descent-based algorithm to estimate the DNN parameters $\Theta$. Hence, it is necessary that the DNN $f_\Theta(\cdot)$ should be also differentiable w.r.t. the parameters $\Theta$ in practice. In all main experiments, we have used the sine activation function, which is differentiable w.r.t. both the input and the DNN parameters. To test the influence of different DNN structures (see Figs. \ref{fig_inr_img}-\ref{fig_inr_result}), we have used the ReLU activation function only in this ablation study. When using the ReLU activation function, we actually have used its subgradient for calculation, which is a standard protocol in deep learning.\par}
By plugging the derivative-based NeurTV into the continuous representation-based data recovery model \eqref{model_2}, we obtain the following model by giving the observed data $\bf O$:
	\begin{equation}\small\label{model_3}
		\min_{\Theta}\sum_{i=1}^n({\bf O}_{(i,N+1)}-f_\Theta({\bf O}_{(i,1:N)}))^2+\lambda\sum_{d=1}^N\sum_{{\bf x}\in\Gamma}\left|\frac{\partial f_\Theta({\bf x})}{\partial {\bf x}_{(d)}}\right|.
	\end{equation}\noindent\noindent
Here, the recovered data can be obtained by querying $f_\Theta(\cdot)$ on discrete coordinates. Our NeurTV-based data recovery model \eqref{model_3} preserves the local smoothness of the recovered data by penalizing the local derivatives of DNN outputs w.r.t. inputs. Besides, NeurTV has three advantages as compared with classical difference-based TV, as detailed below.\par 
First, NeurTV is free of discretization error induced by the discrete difference operator. Especially, the classical TV has discretization error when approximating the derivative using the difference operator. As compared, the proposed NeurTV directly utilizes the derivatives of DNN, which gets rid of the discretization process. Second, the NeurTV-based model \eqref{model_3} is not limited to meshgrid, but is suitable for both meshgrid and non-meshgrid data processing, e.g., image recovery and point cloud recovery. If the observed data $\bf O$ represents meshgrid data, then the sampling set $\Gamma$ could be set as meshgrid coordinates. While if the observed data $\bf O$ represents non-meshgrid data, then the sampling set $\Gamma$ could be set as non-meshgrid coordinates. Third, our NeurTV can be more readily and exactly extended to capture local correlations of data for any direction and any order of derivatives. Specifically, classical TV-based methods need to develop more complicated discrete difference \cite{TGV,SIIMS_TDV,SIAM_higherorder}, interpolation \cite{DTV,SPL_derain}, or rotation operators \cite{ACMMM,FastDeRain} to define directional or higher-order TV. As compared, NeurTV can directly make use of directional and higher-order derivatives of DNN outputs w.r.t. inputs to more easily and concisely design directional or higher-order derivatives-based NeurTV. We will introduce the corresponding extensions of our NeurTV to directional and higher-order NeurTV regularizations in Sec. \ref{sec_extend}. \par
{Here, we further discuss the historical revolution of TV family methods. Initially, the TV was defined over a continuous domain \cite{TV}, which benefits the solution of many variational problems such as finding minimal surfaces \cite{TV}. To develop more efficient algorithms, researchers have transferred the TV-constrained problem into discrete optimizations by using the difference operator to approximate the derivative \cite{SIAM_NA_1990,Rudin,JAMS_12}. In this work, we return to the continuous domain and leverage the powerful representation abilities of neural networks to {continuously represent data}, which allows us to deconstruct and reconstruct TV by using the function derivatives of DNN outputs w.r.t. inputs. The proposed NeurTV does not need numerical discretization and thus is free of the discretization error.}\par
{At the end of this section, we clarify the detailed structure of the DNN $f_\Theta(\cdot)$ used in this work. The proposed NeurTV regularization is a basic building block that can be combined with different DNN architectures to capture local correlations of data. Here, we take three representative DNN structures as examples, i.e., the sine function-based \cite{sine}, the positional encoding (PE)-based \cite{NIPS_PE}, and the tensor factorization (TF)-based \cite{TPAMI_Luo} DNNs, which have been shown to be effective for continuous representation of data. A general illustration of these DNNs is shown in Fig. \ref{fig_INR}. The detailed structures of these DNNs are as below.\par 
The sine function-based DNN \cite{sine} leverages periodic sine activation functions for continuous data representation. The sine function-based DNN is ideally suited for representing complex natural signals and their derivatives \cite{sine} due to its powerful representation abilities and differentiability. The detailed structure of the sine function-based DNN is formulated as
	\begin{equation}\small
		\begin{split}
			{\rm (Sine\;function\mbox{-}based\;DNN)\;}f_\Theta({\bf x}) = {\bf W}_{K}\sin({\bf W}_{K-1}\cdots\sin({\bf W}_1{\bf x})),
		\end{split}
	\end{equation}\noindent
where {${\Theta:=\{{\bf W}_k\}_{k=1}^K}$} denote the weight matrices of the fully connected layers. {Since the sine activation function is used, the sine function-based DNN $f_\Theta({\bf x})$ is a continuous function and is differentiable w.r.t. the input coordinates ${\bf x}$.} Hence we can directly deploy the NeurTV regularization by using the derivatives of DNN outputs $f_\Theta({\bf x})$ w.r.t. input coordinates.\par
The positional encoding-based DNN \cite{NIPS_PE} is a type of DNN that passes the input coordinates through a Fourier feature mapping, which can help the DNN better represent high-frequency details of the data. The detailed structure of the PE-based DNN is formulated as
	\begin{equation}\small
		\begin{split}
			{\rm (PE\mbox{-}based\;DNN)\;}f_\Theta({\bf x}) = {\bf W}_{K}{\rm ReLU}({\bf W}_{K-1}\cdots{\rm ReLU}({\bf W}_1\;{\rm PE}({\bf x}))),
		\end{split}
	\end{equation}\noindent
where $\{{\bf W}_k\}_{k=1}^K$ denote the weight matrices of the fully connected layers and ${\rm PE}({\bf x}):= [a_1\cos(2\pi {\bf b}_1^T{\bf x}),a_1\sin(2\pi{\bf b}_1^T{\bf x}),\cdots,a_m\cos(2\pi{\bf b}_m^T{\bf x}),a_m\sin(2\pi{\bf b}_m^T{\bf x})]^T$ denotes the positional encoding layer \cite{NIPS_PE}. Here, the ReLU is used as the activation function and hence the PE-based DNN outputs $f_\Theta({\bf x})$ is not differentiable w.r.t. the input coordinate $\bf x$. In this work, we have employed the subgradient of ReLU to calculate the derivatives of DNN outputs w.r.t. inputs for the PE-based DNN to deploy the proposed NeurTV.\par
	\begin{figure}[!t]
	\vspace{-0.2cm}
	\scriptsize
	\setlength{\tabcolsep}{0.9pt}
	\begin{center}
		\includegraphics[width=0.9\textwidth]{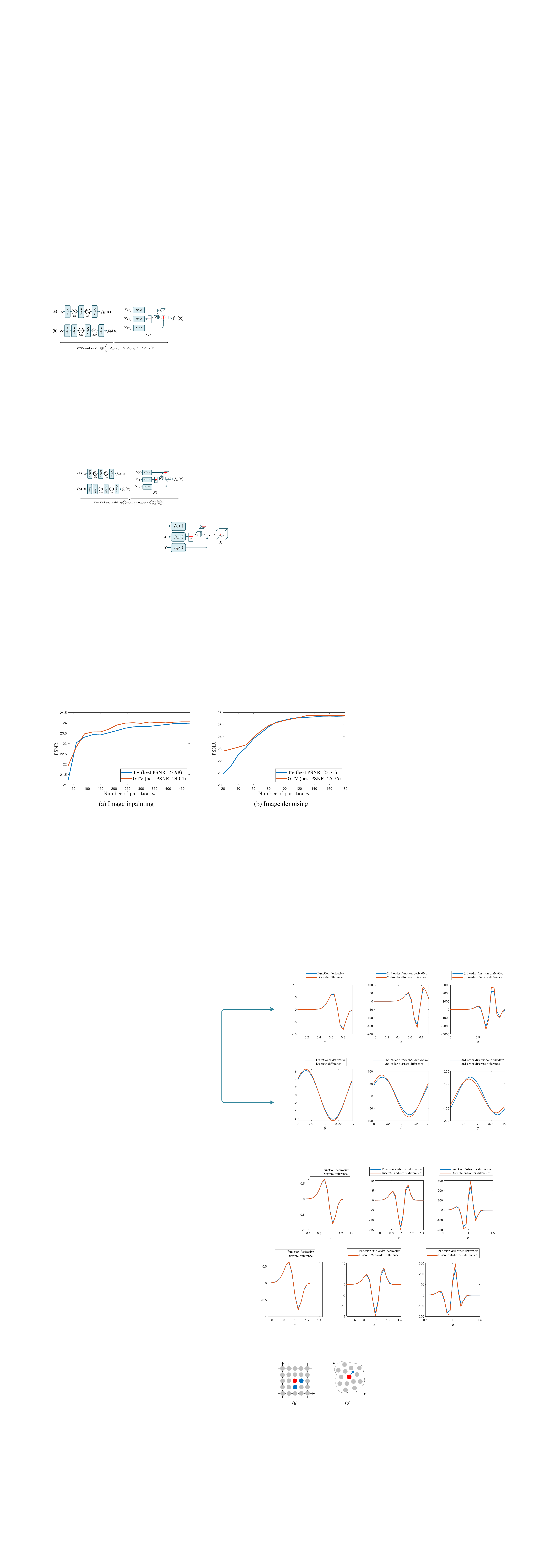}
		\vspace{-0.4cm}
	\end{center}
	\caption{Our NeurTV regularization is a basic building block that can be combined with different DNNs $f_\Theta(\cdot)$. For instance we consider (a) the sine function-based DNN \cite{sine}, (b) the positional encoding (PE)-based DNN \cite{NIPS_PE}, and (c) the tensor factorization (TF)-based DNN \cite{TPAMI_Luo} to test the proposed NeurTV method.\label{fig_INR}\vspace{-0.2cm}}
\end{figure}
The tensor factorization-based DNN \cite{TPAMI_Luo} is a type of DNN based on the tensor Tucker factorization \cite{1927}. It decomposes a continuous multivariant function into a core tensor and multiple factor functions, and then uses fully connected DNNs to parameterize the factor functions. The resulting tensor factorization-based DNN takes the coordinates as inputs and outputs the corresponding value, implicitly representing the data while encoding the low-rankness. The detailed structure of the tensor factorization-based DNN is formulated as
\begin{equation}\small
{\rm (Tensor\;factorization\mbox{-}based\;DNN)\;}f_\Theta({\bf x}) = {\cal C}\times_1f_{\Theta_1}({\bf x}_{(1)})\times_2\cdots\times_Nf_{\Theta_N}({\bf x}_{(N)}),
\end{equation}\noindent
		\begin{figure}[t]
	\scriptsize
	\setlength{\tabcolsep}{0.9pt}
	\begin{center}
		\begin{tabular}{cccc}
			\includegraphics[width=0.145\textwidth]{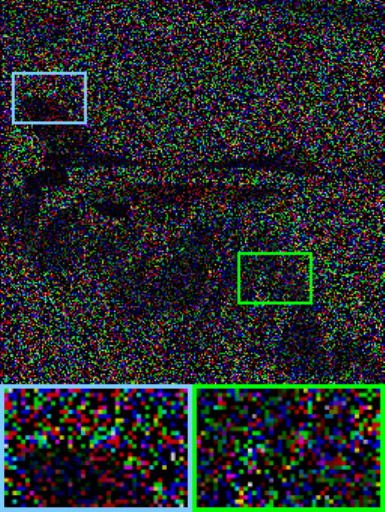}
			&\includegraphics[width=0.145\textwidth]{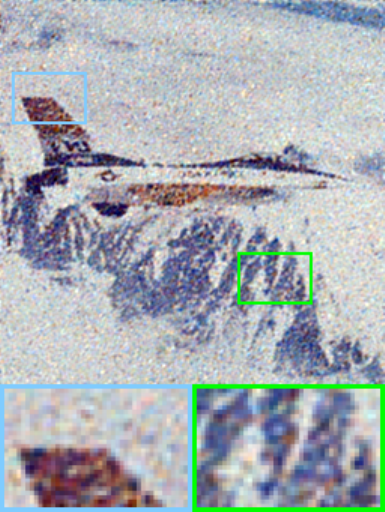}
			&\includegraphics[width=0.145\textwidth]{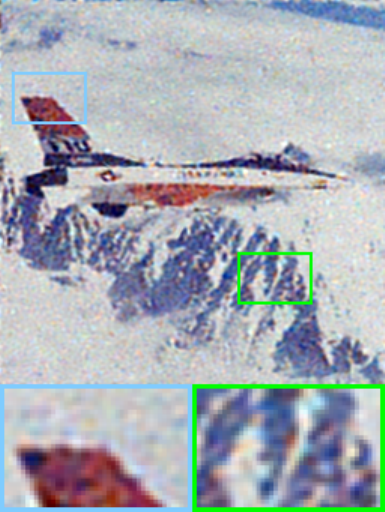}
			&\includegraphics[width=0.145\textwidth]{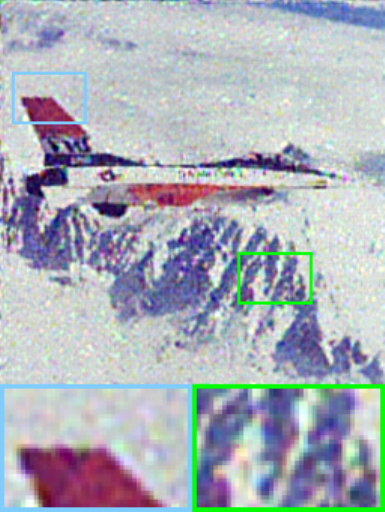}\\	
			PSNR 3.65 &
			PSNR 21.32 &
			PSNR 23.41  &
			PSNR 22.97  \\
			Observed&Sine&PE&TF\\
			\includegraphics[width=0.145\textwidth]{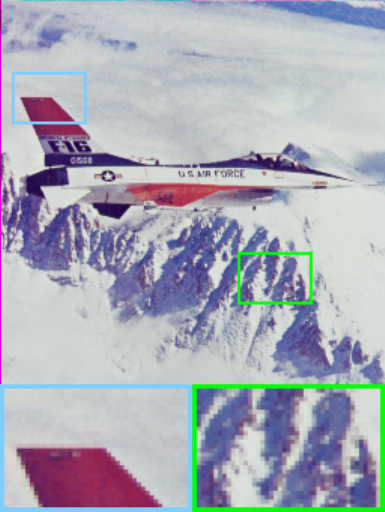}
			&\includegraphics[width=0.145\textwidth]{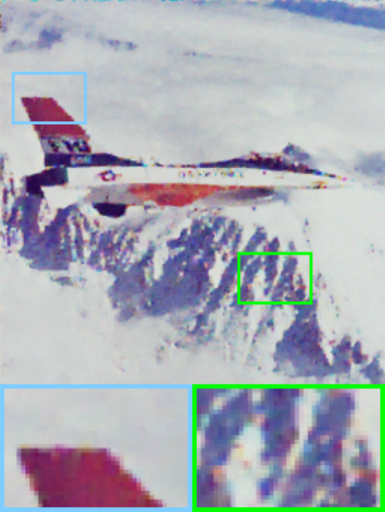}
			&\includegraphics[width=0.145\textwidth]{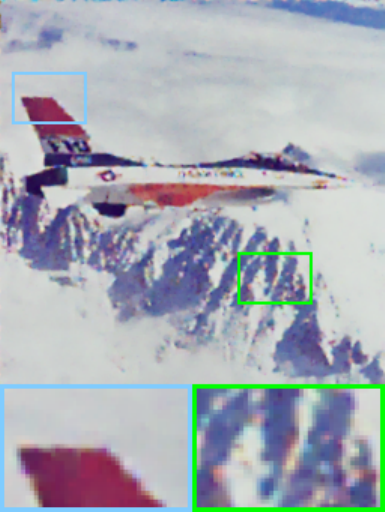}
			&\includegraphics[width=0.145\textwidth]{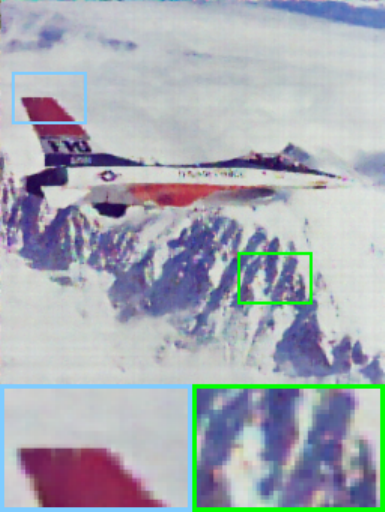}\\	
			PSNR Inf&
			PSNR 25.51&
			PSNR 25.79 &
			PSNR 26.13 \\
			\vspace{0.3cm}
			Original&Sine+${\rm NeurTV}$&PE+${\rm NeurTV}$&TF+${\rm NeurTV}$\\
		\end{tabular}
		\vspace{-0.4cm}
	\end{center}
	\caption{The results of image inpainting using different DNNs (sine function-based \cite{sine}, PE-based \cite{NIPS_PE}, and TF-based \cite{TPAMI_Luo} DNNs) with and without the NeurTV regularization \eqref{WGV}. \label{fig_inr_img}}
	\vspace{-0.2cm}
\end{figure} 
\begin{figure}[t]
	\scriptsize
	\setlength{\tabcolsep}{0.9pt}
	\begin{center}
		\begin{tabular}{cccc}
			\vspace{-0.4cm}
			\includegraphics[width=0.155\textwidth]{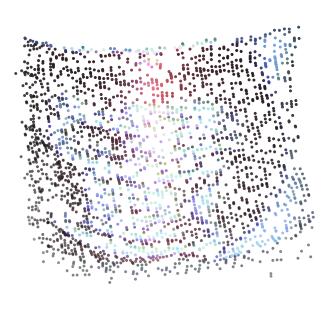}			&\includegraphics[width=0.155\textwidth]{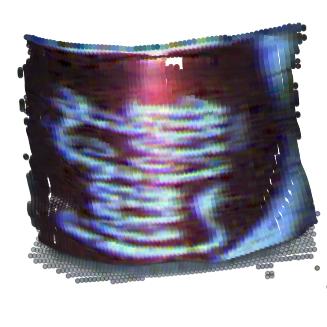}
			&\includegraphics[width=0.155\textwidth]{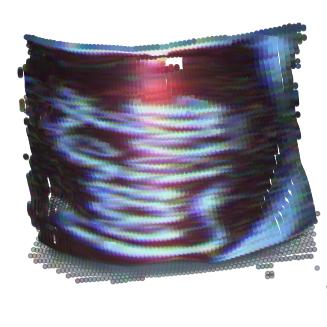}
			&\includegraphics[width=0.155\textwidth]{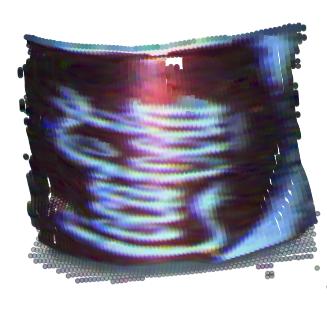}\\
			MSE \--\-- &MSE 0.173& MSE 0.173&MSE 0.150\\
			Observed&Sine&PE&TF\\
			\vspace{-0.4cm}
			\includegraphics[width=0.155\textwidth]{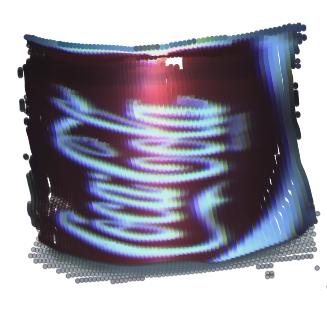}
			&\includegraphics[width=0.155\textwidth]{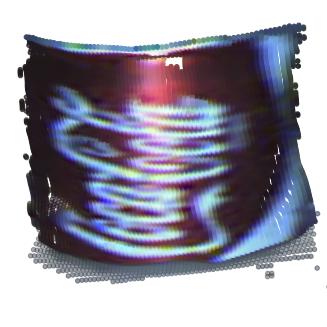}
			&\includegraphics[width=0.155\textwidth]{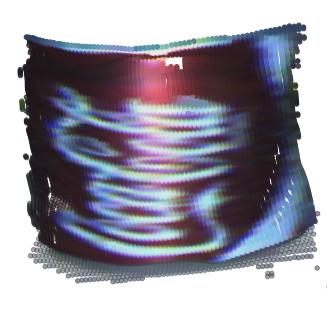}
			&\includegraphics[width=0.155\textwidth]{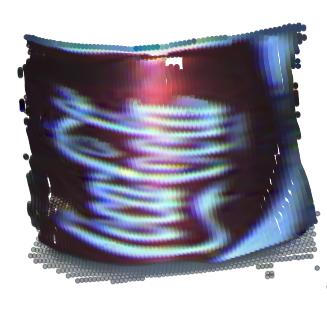}\\
			MSE 0 &MSE 0.136 & MSE 0.125&MSE 0.118\\
			\vspace{0.3cm}
			Original&Sine+${\rm NeurTV}$&PE+${\rm NeurTV}$&TF+${\rm NeurTV}$\\
		\end{tabular}
		\vspace{-0.4cm}
	\end{center}
	\caption{The results of point cloud recovery using different DNNs (sine function-based \cite{sine}, PE-based \cite{NIPS_PE}, and TF-based \cite{TPAMI_Luo} DNNs) with and without the NeurTV regularization \eqref{WGV}. \label{fig_inr_result}}
	\vspace{-0.2cm}
\end{figure}
where $\times_d$ ($d=1,2,\cdots,N$) denotes the mode-$d$ tensor-matrix product, ${\cal C}$ is the core tensor, and $\{f_{\Theta_d}\}_{d=1}^N$ are factor functions parameterized by fully connected DNNs, i.e., $f_{\Theta_d}({\bf x}_{(d)}) = {\bf W}_{d,K}\sin({\bf W}_{d,K-1}\cdots\sin({\bf W}_{d,1}{\bf x}_{(d)}))$ ($d=1,2,\cdots,N$), where {${\Theta}_d:=\{{\bf W}_{d,k}\}_{k=1}^K$} denote the weight matrices of the $d$-th factor DNN, and ${\Theta}:=\{{\cal C},{\Theta}_1,\cdots,{\Theta}_N\}$ are network parameters. {Since the sine activation function is used, the tensor factorization-based DNN $f_\Theta({\bf x})$ is a continuous function and is differentiable w.r.t. each input variable, i.e., ${\bf x}_{(1)},{\bf x}_{(2)},\cdots,{\bf x}_{(N)}$.} Hence we can directly deploy the NeurTV regularization for the tensor factorization-based DNN by taking the derivatives of DNN outputs $f_\Theta({\bf x})$ w.r.t. input coordinates.
\par 
Our NeurTV regularization can be easily combined with these DNNs by using the NeurTV-based data recovery model, e.g., model \eqref{model_3}. Here, we take image inpainting (meshgrid data) and point cloud recovery (non-meshgrid data) as examples by using the proposed NeurTV with different DNNs. The results are shown in Figs. \ref{fig_inr_img}-\ref{fig_inr_result}. It can be observed that the NeurTV regularization consistently improves the performances by using different DNN architectures, demonstrating its effectiveness and good compatibility. Since the tensor factorization-based DNN with NeurTV could attain better performances over the other two DNNs with NeurTV, we have adopted the tensor factorization-based DNN as the backbone to test the proposed NeurTV in all experiments conducted in this work.}
	\subsection{Extensions of NeurTV}\label{sec_extend}
	Our NeurTV can be easily extended to capture local correlations for any order of derivatives and any direction. In this section, we give some examples of the higher-order and directional derivatives-based NeurTV regularizations.
	\begin{figure}[t]
	\tiny
	\setlength{\tabcolsep}{0.95pt}
	\begin{center}
		\begin{tabular}{cccccc}
			\includegraphics[width=0.14\textwidth]{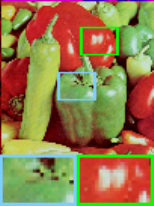}
			&\includegraphics[width=0.14\textwidth]{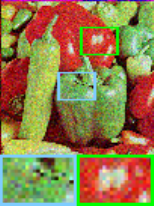}
			&\includegraphics[width=0.14\textwidth]{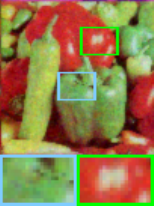}
			&\includegraphics[width=0.14\textwidth]{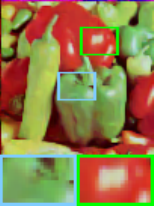}
			&\includegraphics[width=0.14\textwidth]{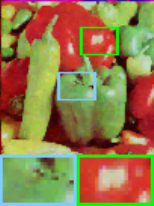}
			&\includegraphics[width=0.14\textwidth]{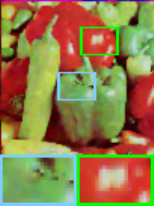}\\
			PSNR Inf &PSNR 20.38 &
			PSNR 23.13 &
			PSNR 25.47 &
			PSNR 25.43 &
			PSNR 25.66 \\
			Noisy&Original&TV&HDTV \cite{SIIMS_TDV}&NeurTV&2nd-order NeurTV\\
		\end{tabular}
		\vspace{-0.2cm}
	\end{center}
	\caption{The results of image denoising with Gaussian noise deviation 0.1 using the classical TV, the higher-order directional TV (HDTV), the NeurTV, and the proposed second-order NeurTV \eqref{2ndNeurTV}. \label{fig_htv}}
	\vspace{-0.2cm}
\end{figure}
	\subsubsection{Higher-order NeurTV}\label{sec_HNeurTV}
Our NeurTV can be readily extended to higher-order NeurTV regularization. Specifically, we can directly utilize higher-order derivatives of DNN outputs w.r.t. inputs, which avoids the discrete approximation process used in classical higher-order TV-based methods \cite{TGV,SIAM_higherorder,SIIMS_TDV,NA_21}. The higher-order NeurTV could ease the staircase effect \cite{TGV,NA_21} induced by first-order methods. As a basic example, we consider the following second-order NeurTV regularization by taking the second-order derivatives of DNN.
\begin{definition}
	For a two-dimensional function $f_\Theta({\bf x}):\Omega\rightarrow{\mathbb R}$ that is differentiable w.r.t. the input $\bf x$, where $\Omega\subset{\mathbb R}^{2}$, the second-order NeurTV of $f_\Theta(\cdot)$ is defined as
	\begin{equation}\small\label{2ndNeurTV}
		\Psi_{{\rm \tiny 2nd\mbox{-}NeurTV}}(\Theta) := \sum_{{\bf x}\in\Gamma} \left(\sum_{i=1}^2\left|\frac{\partial f_\Theta({\bf x})}{\partial{\bf x}_{(i)}}\right|+\kappa\sum_{i,j=1}^2\left|\frac{\partial^2 f_\Theta({\bf x})}{\partial{\bf x}_{(i)}\partial{\bf x}_{(j)}}\right|\right),
	\end{equation}\noindent\noindent
	where $\Gamma\subset{\Omega}$ is a point set and $\kappa$ is a trade-off constant.
\end{definition}\par
The proposed second-order NeurTV regularization uses a weighted hybrid of the first-order and the second-order derivatives of the DNN $f_\Theta(\cdot)$ to convey local correlations underlying data. The use of second-order derivatives is expected to alleviate the staircase effect. To validate the effectiveness of our second-order NeurTV regularization, we show the image denoising results by using the classical TV, the higher-order TV-based method \cite{SIIMS_TDV}, the NeurTV, and the second-order NeurTV regularizations in Fig. \ref{fig_htv}. It can be observed that the second-order NeurTV \eqref{2ndNeurTV} obtains a better result than NeurTV. Especially, the second-order NeurTV evidently alleviates the staircase effect of the first-order NeurTV method. 
	\subsubsection{Directional NeurTV}\label{sec_DNeurTV}
Analogously, NeurTV can be easily extended to directional NeurTV to capture directional local correlations. Traditional TV methods depend on image rotation \cite{FastDeRain} or interpolation \cite{SIIMS_TDV} to define the directional TV. As compared, our NeurTV can be more easily extended to directional NeurTV by utilizing the directional derivative of DNN outputs w.r.t. inputs. Here, we take the two-dimensional case as an example.
	\begin{lemma}\label{lemma_DTV}
		Given a function $f({\bf x}):\Omega\rightarrow{\mathbb R}$ that is differentiable w.r.t. the input $\bf x$, where $\Omega \subset{\mathbb R}^2$, the directional derivative along the direction ${\bf e}=(\cos\theta,\sin\theta)$ where $\theta\in[0,2\pi)$ is defined as
		\begin{equation}\small
			\nabla_{\bf e}f({\bf x}) := \lim_{t\rightarrow0^+}({f({\bf x}+(t\cos\theta,t\sin\theta))-f({\bf x})})\;{t^{-1}}.
		\end{equation}\noindent
		Such directional derivative of $f(\cdot)$ is equivalent to $\nabla_{\bf e}f({\bf x}) = \frac{\partial f({\bf x})}{\partial {\bf x}_{(1)}}\cos\theta+\frac{\partial f({\bf x})}{\partial {\bf x}_{(2)}}\sin\theta$.
	\end{lemma}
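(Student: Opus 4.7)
The plan is to reduce the two-variable limit defining $\nabla_{\bf e}f({\bf x})$ to a one-variable derivative that can be evaluated by the chain rule, exploiting the fact that $f$ is assumed differentiable w.r.t.\ the input $\bf x$ (not merely that its partial derivatives exist).

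First I would define the auxiliary one-variable function $g(t) := f({\bf x} + t{\bf e})$ with ${\bf e} = (\cos\theta,\sin\theta)$, so that the defining limit
\begin{equation*}
\nabla_{\bf e}f({\bf x}) = \lim_{t\rightarrow 0^+} \frac{f({\bf x}+(t\cos\theta,t\sin\theta))-f({\bf x})}{t}
\end{equation*}
is exactly $g'(0^+)$, the one-sided derivative of $g$ at $0$. The task thus reduces to computing $g'(0)$ for the composition $g = f \circ \gamma$, where $\gamma(t) := {\bf x} + t{\bf e}$ is the affine parametrization of the ray.

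Next I would invoke differentiability of $f$ at ${\bf x}$ to write the first-order Taylor expansion
\begin{equation*}
f({\bf x}+{\bf h}) = f({\bf x}) + \frac{\partial f({\bf x})}{\partial {\bf x}_{(1)}}\,{\bf h}_{(1)} + \frac{\partial f({\bf x})}{\partial {\bf x}_{(2)}}\,{\bf h}_{(2)} + r({\bf h}),
\end{equation*}
where the remainder satisfies $r({\bf h})/\lVert {\bf h}\rVert \to 0$ as ${\bf h}\to 0$. Substituting ${\bf h} = t{\bf e}$, noting that $\lVert t{\bf e}\rVert = |t|$ since $\bf e$ is a unit vector, dividing by $t$, and letting $t\to 0^+$ kills the remainder term and leaves precisely
\begin{equation*}
\nabla_{\bf e}f({\bf x}) = \frac{\partial f({\bf x})}{\partial {\bf x}_{(1)}}\cos\theta + \frac{\partial f({\bf x})}{\partial {\bf x}_{(2)}}\sin\theta,
\end{equation*}
which is the claimed identity. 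Equivalently, one could apply the multivariate chain rule to $g = f\circ\gamma$ directly and evaluate at $t=0$, obtaining the same expression.

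There is no substantive obstacle; the only subtlety worth flagging is that the result genuinely requires differentiability of $f$ at $\bf x$ rather than mere existence of the two partial derivatives, because in two variables the latter is strictly weaker and does not in general imply that the directional derivative along an oblique direction equals the gradient–direction inner product. Since Lemma \ref{lemma_DTV} explicitly assumes differentiability of $f$ w.r.t.\ $\bf x$, the Taylor-expansion argument above applies verbatim and completes the proof.
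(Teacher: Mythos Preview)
Your argument is correct and is the standard textbook route: use the definition of total differentiability to write the first-order Taylor expansion with a $o(\lVert{\bf h}\rVert)$ remainder, specialize ${\bf h}=t{\bf e}$, and take the limit. The paper proceeds differently: it telescopes $f({\bf x}+(t\cos\theta,t\sin\theta))-f({\bf x})$ through the intermediate point ${\bf x}+(0,t\sin\theta)$, applies the mean value theorem in each coordinate direction to produce partial derivatives evaluated at intermediate points $\xi,\eta$, and then lets $t\to 0^+$ so that $\xi,\eta\to{\bf x}$. Your approach uses exactly the stated hypothesis of differentiability and nothing more; the paper's decomposition implicitly needs continuity of the partial derivatives at ${\bf x}$ for the final limit step $\frac{\partial f(\xi)}{\partial\xi_{(1)}}\to\frac{\partial f({\bf x})}{\partial{\bf x}_{(1)}}$ to go through, which is a slightly stronger assumption (harmless in the intended DNN setting, but worth noting). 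In that sense your proof is both shorter and better matched to the hypothesis as written.
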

	\begin{figure}[t]
	\tiny
	\setlength{\tabcolsep}{0.95pt}
	\begin{center}
		\begin{tabular}{cccccc}
			\includegraphics[width=0.14\textwidth]{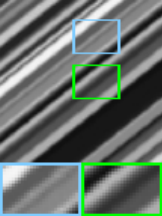}
			&\includegraphics[width=0.14\textwidth]{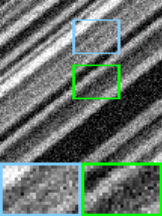}
			&\includegraphics[width=0.14\textwidth]{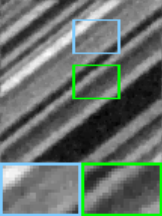}
			&\includegraphics[width=0.14\textwidth]{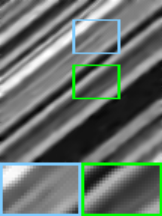}
			&\includegraphics[width=0.14\textwidth]{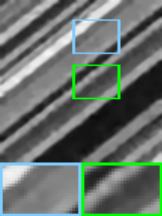}
			&\includegraphics[width=0.14\textwidth]{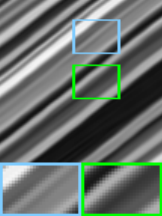}\\
			PSNR Inf&PSNR 20.04 &
			PSNR 25.11 &
			PSNR 30.75 &
			PSNR 27.39 &
			PSNR 31.89 \\
			Noisy&Original&TV&HDTV \cite{SIIMS_TDV}&NeurTV&NeurDTV\\
		\end{tabular}
		\vspace{-0.2cm}
	\end{center}
	\caption{The results of image denoising with Gaussian noise deviation 0.1 using the classical TV, the higher-order directional TV (HDTV) \cite{SIIMS_TDV}, the NeurTV, and the proposed directional NeurTV \eqref{DNeurTV} (NeurDTV) with $\theta = -\frac{3\pi}{10}$. \label{fig_dtv}}
	\vspace{-0.2cm}
\end{figure}
	\begin{proof}
Direct calculate yields
		\begin{equation}\small
			\begin{split}
				\nabla_{\bf e}f({\bf x}) &= \lim_{t\rightarrow0^+}({f({\bf x}+(t\cos\theta,t\sin\theta))-f({\bf x})})\;{t^{-1}}\\ 
				&=\lim_{t\rightarrow0^+}({f({\bf x}+(t\cos\theta,t\sin\theta))-f({\bf x}+(0,t\sin\theta))})\;{t^{-1}}+ \lim_{t\rightarrow0^+}({f({\bf x}+(0,t\sin\theta))-f({\bf x})})\;{t^{-1}}.
				\end{split}\end{equation}\noindent
By using the {intermediate value theorem}, we further have
\begin{equation}\small\begin{split}
\nabla_{\bf e}f({\bf x})=\lim_{t\rightarrow0^+}\left({\frac{\partial f({\bf\xi})}{\partial{\bf\xi}_{(1)}}\;t\cos\theta}\right)\;{t^{-1}}+\lim_{t\rightarrow0^+}\left({\frac{\partial f({\bf\eta})}{\partial{\bf\eta}_{(2)}}\;t\sin\theta}\right)\;{t^{-1}},
			\end{split}
		\end{equation}\noindent
where $\xi$ and $\eta$ satisfy $\lim_{t\rightarrow0^+}\xi = \lim_{t\rightarrow0^+}\eta = {\bf x}$. This then concludes that $\nabla_{\bf e}f({\bf x}) = \frac{\partial f({\bf x})}{\partial {\bf x}_{(1)}}\cos\theta+\frac{\partial f({\bf x})}{\partial {\bf x}_{(2)}}\sin\theta$.
	\end{proof}\par  
	\begin{definition}[Directional NeurTV]
		For a two-dimensional function $f_\Theta({\bf x}):\Omega\rightarrow{\mathbb R}$ that is differentiable w.r.t. the input $\bf x$, where $\Omega\subset{\mathbb R}^{2}$, the directional NeurTV (NeurDTV) of $f_\Theta(\cdot)$ along the direction $\theta\in[0,2\pi)$ is defined as
		\begin{equation}\small\label{DNeurTV}
			\Psi_{{\rm NeurDTV}_{\theta}}(\Theta) := \sum_{{\bf x}\in\Gamma}\left|\frac{\partial f_\Theta({\bf x})}{\partial{\bf x}_{(1)}}\cos\theta+\frac{\partial f_\Theta({\bf x})}{\partial{\bf x}_{(2)}}\sin\theta\right|,
		\end{equation}\noindent
where $\Gamma\subset{\Omega}$ is a discrete point set.
	\end{definition}\par
	According to Lemma \ref{lemma_DTV}, the directional function derivative can be analytically performed along any direction of the two-dimensional continuous space. We can then naturally introduce the directional function derivative of DNN into NeurTV as shown in \eqref{DNeurTV}. Such directional NeurTV can more concisely and effectively characterize directional structures of data. In Fig. \ref{fig_dtv}, we show the image denoising results by using the classical TV, the directional TV-based method \cite{SIIMS_TDV}, the NeurTV, and the proposed NeurDTV regularization, which shows the superiority of our NeurDTV for capturing directional local correlations of data.
	\begin{figure}[t]
		\tiny
		\setlength{\tabcolsep}{0.95pt}
		\begin{center}
			\begin{tabular}{cccccc}
				\includegraphics[width=0.14\textwidth]{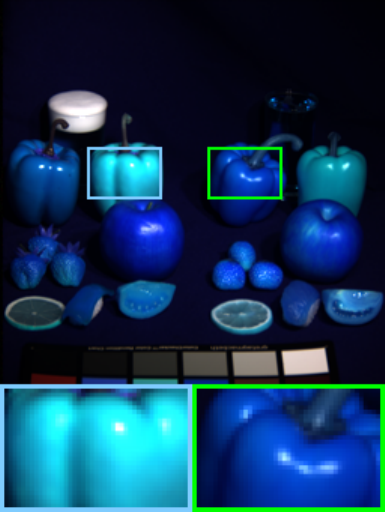}
				&\includegraphics[width=0.14\textwidth]{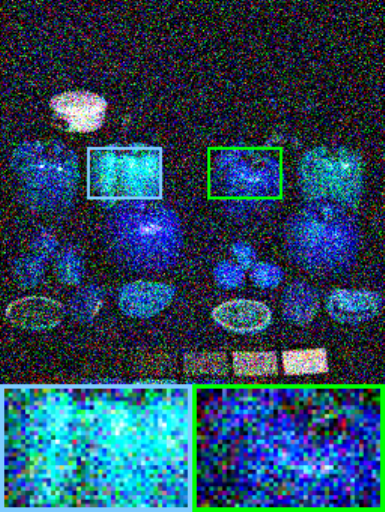}
				&\includegraphics[width=0.14\textwidth]{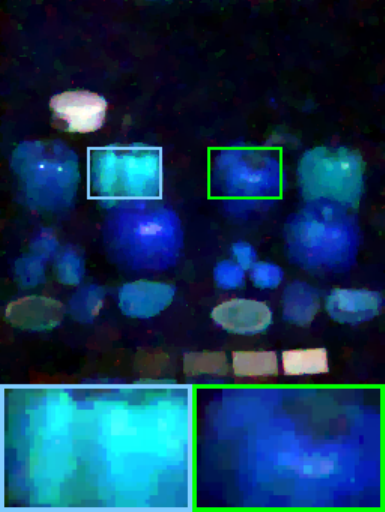}
				&\includegraphics[width=0.14\textwidth]{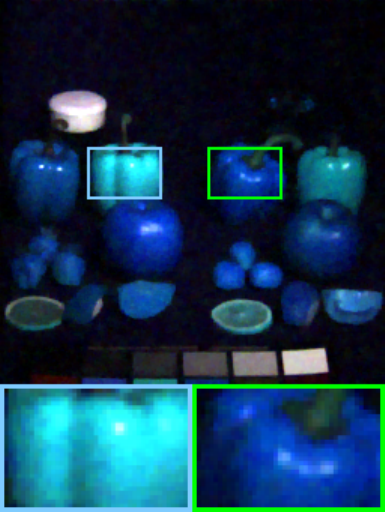}
				&\includegraphics[width=0.14\textwidth]{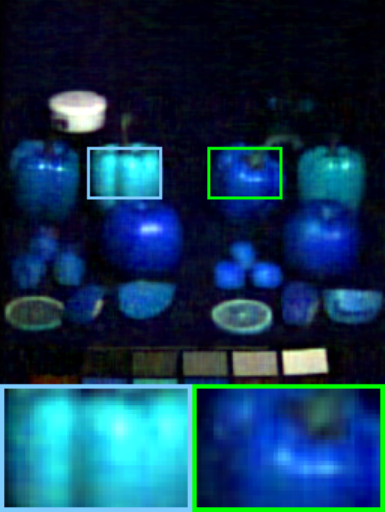}
				&\includegraphics[width=0.14\textwidth]{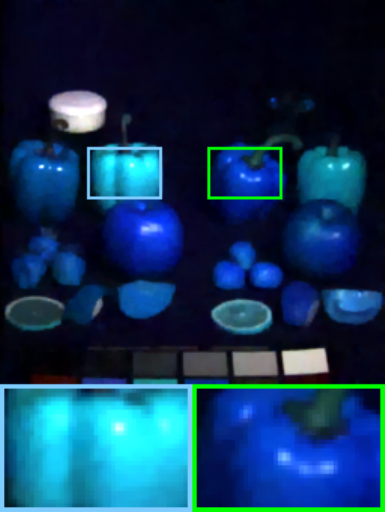}\\
				PSNR Inf&PSNR 21.89 &
				PSNR 35.43 &
				PSNR 38.77  &
				PSNR 37.83 &
				PSNR 41.18\\
				Original&Noisy&TV&LRTDTV \cite{LRTDTV}&NeurTV&NeurSSTV\\
			\end{tabular}
			\vspace{-0.2cm}
		\end{center}
		\caption{The results of hyperspectral image denoising with Gaussian noise deviation 0.2 using the classical TV, the SSTV-based method LRTDTV \cite{LRTDTV}, the NeurTV, and the proposed NeurSSTV \eqref{SSNeurTV}.  \label{HSI}}
		\vspace{-0.2cm}
	\end{figure}
	\subsubsection{Spatial-spectral NeurTV}
	Then, we introduce a new higher-order NeurTV regularization for multi-dimensional data. Specifically, we take the hyperspectral image (HSI) \cite{LRTDTV,LRTF-DFR} as an example. The HSI contains intrinsic spatial-spectral local smooth structures \cite{SSTV}. To represent the spatial-spectral local correlations of HSI, we propose the following second-order spatial-spectral NeurTV (termed NeurSSTV) regularization.
	\begin{definition}[Spatial-spectral NeurTV]
		Given a three-dimensional function $f_\Theta({\bf x}):\Omega\rightarrow{\mathbb R}$ (with the first two dimensions correspond to the spatial dimensions and the last dimension corresponds to the spectral dimension) that is differentiable w.r.t. the input $\bf x$ and a point set $\Gamma\subset{\Omega}$, where $\Omega\subset{\mathbb R}^{3}$, the NeurSSTV of $f_\Theta(\cdot)$ is defined as
		\begin{equation}\small\label{SSNeurTV}
			\Psi_{\rm \tiny NeurSSTV}(\Theta) := \sum_{{\bf x}\in\Gamma} \left(\left|\frac{\partial^2 f_\Theta({\bf x})}{\partial{\bf x}_{(1)}\partial{\bf x}_{(3)}}\right|+\left|\frac{\partial^2 f_\Theta({\bf x})}{\partial{\bf x}_{(2)}\partial{\bf x}_{(3)}}\right|\right),
		\end{equation}\noindent\noindent
		where $\frac{\partial}{\partial {\bf x}_{(1)}},\frac{\partial}{\partial {\bf x}_{(2)}}$ are two spatial partial derivative operators and $\frac{\partial}{\partial {\bf x}_{(3)}}$ is the partial derivative operator along the spectral dimension of the HSI.
	\end{definition}\par
	The proposed NeurSSTV is defined by viewing the HSI as a three-dimensional differentiable function $f_\Theta(\cdot)$. Different from the classical SSTV methods \cite{SSTV,LRTDTV}, our NeurSSTV directly penalizes the higher-order derivatives of DNN. A numerical validation of our NeurSSTV as compared with the classical SSTV-based method \cite{LRTDTV} is shown in Fig. \ref{HSI}. It is obvious to observe that NeurSSTV obtains a better recovered result over the traditional SSTV \cite{LRTDTV}, which validates the effectiveness of our NeurSSTV for multi-dimensional HSI recovery.
\subsection{Justification of NeurTV from Variational Approximation}\label{sec_relation}
To further justify our NeurTV, we reinterpret NeurTV from the variational approximation perspective, which allows us to draw connections between NeurTV and classical TV and motivates us to develop effective variants (e.g., space-variant NeurTV). {Specifically, based on the definition and theory of classical functional TV (i.e., Lemma \ref{lemma_TV_DF}) in the literature \cite{TV}, our contributions here are the following theoretical results, which are different from the contents proposed in previous work. First, we have theoretically presented the connections between NeurTV and classical TV (i.e., Sec. \ref{sec_connect}). Second, we have theoretically presented the variational approximation error analysis to reveal the rationality of the arbitrary resolution NeurTV (i.e., Sec. \ref{sec_ar}). Third, we have presented theoretical analysis that inspires us to develop space-variant NeurTV with second-order derivative-based parameter selection (i.e., Sec. \ref{sec_wtv}).}
\subsubsection{Connections Between NeurTV and Classical TV}\label{sec_connect}
First, we introduce the classical functional TV \cite{TV} in Lemma \ref{lemma_TV_DF}, which is a variational regularization that penalizes the derivatives of function.
{\begin{lemma}[Total variation of function \cite{TV}]\label{lemma_TV_DF}
Given a function $f_\Theta({\bf x}):\Omega\rightarrow{\mathbb R}$ parameterized by $\Theta$, where $\Omega\subset{\mathbb R}^N$, the TV of $f_\Theta(\cdot)$ in $\Omega$ is defined as
		\begin{equation}\small\label{FunTV}
			V(f_\Theta;\Omega) := \sup\{\int_\Omega f_\Theta({\bf x})\;{\rm div}\phi({\bf x}) d{\bf x}: \phi\in C_c^\infty(\Omega,{\mathbb R}^N),\lVert\phi({\bf x})\rVert_{\infty}\leq1\},
		\end{equation}\noindent
		where $\rm div$ is the divergence operator and $\phi(\cdot)$ is a differentiable vector field with compact support. If the function $f_\Theta({\bf x}):\Omega\rightarrow{\mathbb R}$ is differentiable w.r.t. the input $\bf x$, then its functional TV \eqref{FunTV} is equal to $
			V(f_\Theta;\Omega) = \sum_{d=1}^N\int_\Omega \left|\frac{\partial f_\Theta({\bf x})}{\partial {\bf x}_{(d)}}\right| d{\bf x}.$
	\end{lemma}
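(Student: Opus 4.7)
The plan is to establish the asserted equality by combining integration by parts with a density approximation, which is the standard route to the $L^1$ representation of the variational total variation. Since the definition \eqref{FunTV} is the stated starting point, the only substantive task is to show that, under differentiability of $f_\Theta$, the supremum equals $\sum_{d=1}^N\int_\Omega |\partial f_\Theta/\partial {\bf x}_{(d)}|\,d{\bf x}$.

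First, I would exploit the compact support of every admissible $\phi\in C_c^\infty(\Omega,\mathbb{R}^N)$ to integrate by parts componentwise, obtaining
\begin{equation*}
\int_\Omega f_\Theta({\bf x})\,{\rm div}\,\phi({\bf x})\,d{\bf x} \;=\; -\sum_{d=1}^N\int_\Omega \frac{\partial f_\Theta({\bf x})}{\partial {\bf x}_{(d)}}\,\phi_d({\bf x})\,d{\bf x},
\end{equation*}
with no boundary contributions since $\phi$ vanishes outside a compact set in $\Omega$. Interpreting $\lVert\phi({\bf x})\rVert_\infty\leq 1$ componentwise as $|\phi_d({\bf x})|\leq 1$ for every $d$ (the pointwise $\ell^\infty$ constraint that produces the anisotropic sum on the right-hand side), the triangle inequality immediately yields
\begin{equation*}
\left|\int_\Omega f_\Theta\,{\rm div}\,\phi\,d{\bf x}\right| \;\leq\; \sum_{d=1}^N\int_\Omega \left|\frac{\partial f_\Theta({\bf x})}{\partial {\bf x}_{(d)}}\right|\,d{\bf x},
\end{equation*}
so taking the supremum over admissible $\phi$ delivers the upper bound $V(f_\Theta;\Omega)\leq \sum_d\int_\Omega |\partial f_\Theta/\partial{\bf x}_{(d)}|\,d{\bf x}$.

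For the reverse inequality, the formal optimizer is the componentwise sign $\phi_d^\star({\bf x}) = -\operatorname{sign}(\partial f_\Theta/\partial {\bf x}_{(d)})$, which is neither smooth nor compactly supported. I would therefore construct an admissible approximating sequence as follows: pick an exhaustion $K_k\Subset\Omega$ and smooth cutoffs $\chi_k\in C_c^\infty(\Omega)$ with $0\leq\chi_k\leq 1$ and $\chi_k\equiv 1$ on $K_k$; mollify the truncated field $\chi_k\phi_d^\star$ by convolution with a standard mollifier $\rho_{\epsilon}$, obtaining $\phi_d^{k,\epsilon}\in C_c^\infty(\Omega)$ with $\lVert\phi_d^{k,\epsilon}\rVert_\infty\leq 1$ (mollification does not increase the $L^\infty$ norm). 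Plugging $\phi^{k,\epsilon}$ into the identity above, dominated convergence (as $\epsilon\to 0$, then $k\to\infty$) drives $\phi_d^{k,\epsilon}\to\phi_d^\star$ almost everywhere on $\Omega$, so the right-hand side converges to $\sum_d\int_\Omega |\partial f_\Theta/\partial{\bf x}_{(d)}|\,d{\bf x}$. This realises the supremum up to arbitrarily small error, yielding the matching lower bound and hence equality.

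The main obstacle is the density step, because one must simultaneously preserve smoothness, compact support, and the pointwise constraint while approximating a discontinuous optimizer. This is handled by the standard cutoff-then-mollify construction from BV theory (see, e.g., the proof of the analogous representation in Ambrosio--Fusco--Pallara), and the case where the right-hand side is infinite is treated by monotone exhaustion of $\Omega$ to force the supremum to diverge as well. The remaining computations (commuting derivatives with convolution, bounding the residuals on $\Omega\setminus K_k$, and verifying the dominated convergence hypothesis via integrability of $|\nabla f_\Theta|$ on each $K_k$) are routine and will not be carried out in detail.
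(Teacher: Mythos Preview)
Your proposal is correct and follows essentially the same route as the paper: integration by parts to move the derivative from $\phi$ to $f_\Theta$, followed by the componentwise bound using $|\phi_d|\leq 1$. The paper simply asserts that equality is attained at $\phi_d = -(\partial f_\Theta/\partial{\bf x}_{(d)})/|\partial f_\Theta/\partial{\bf x}_{(d)}|$ without addressing that this choice is not in $C_c^\infty$; your cutoff-and-mollify construction supplies precisely the density step the paper glosses over, so your version is in fact the more careful of the two.
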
\par
We provide a proof of Lemma \ref{lemma_TV_DF} in Sec. \ref{proof_lemma_TV_DF} of the appendix.} It can be seen that the TV of a differentiable function is the sum of integration of partial derivatives of the function. Recall that our NeurTV \eqref{NeurTV} is defined as the sum of the partial derivatives of DNN over a discrete set, and hence we can build the connection between the classical TV \eqref{FunTV} and NeurTV by viewing NeurTV as the numerical integration for approximating the variational regularization \eqref{FunTV}. Here, we consider uniformly sampled partitions in the one-dimensional case to study the variational approximation. The numerical integration in the one-dimensional case has the following form by taking quadratures at right-hand values at each interval:
\begin{equation}\small\label{V_approx}
V(f_\Theta;[a,b])={\int_a^b \left|\frac{df_\Theta({x})}{dx}\right| d{x}}\approx\frac{b-a}{n}\underbrace{\sum_{i=1}^n\left|\frac{df_\Theta(x_i)}{dx_i}\right|}_{\rm NeurTV},
\end{equation}\noindent
where $f_\Theta(x):\Omega\rightarrow{\mathbb R}$ is a differentiable function w.r.t. the input $x$, $\Omega=[a,b]\subset{\mathbb R}$, and $x_0,x_1,\cdots,x_n$ are some uniformly sampled points over $[a,b]$, where $x_i=a+\frac{i(b-a)}{n}$. The following lemma shows that the approximation is exact in the infinite condition (i.e., $n\rightarrow\infty$).
	\begin{lemma}\label{lemma_n}
For a function $f_\Theta(x):[a,b]\rightarrow{\mathbb R}$ that is differentiable w.r.t. the input $x$, we consider uniformly sampled points over $[a,b]$, i.e., $x_i=a+\frac{i(b-a)}{n}$ $(i=0,1,\cdots,n)$. Then 
\begin{equation}\small\label{gtv_equal}
V(f_\Theta;[a,b])=\int_a^b \left|\frac{df_\Theta({x})}{dx}\right| d{x}=\lim_{n\rightarrow\infty}\frac{b-a}{n}{\sum_{i=1}^n\left|\frac{df_\Theta(x_i)}{dx_i}\right|}. 
\end{equation}\noindent
	\end{lemma}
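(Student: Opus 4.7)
The plan is to recognize the right-hand side of \eqref{gtv_equal} as a right-endpoint Riemann sum for the integral $\int_a^b |f'_\Theta(x)|\,dx$ associated to the uniform partition of mesh $h_n = (b-a)/n$, and then invoke the standard convergence theorem for Riemann sums of continuous integrands on a compact interval. The first equality in \eqref{gtv_equal} is already provided by Lemma \ref{lemma_TV_DF} specialized to $N=1$ and $\Omega = [a,b]$, so the whole statement reduces to the purely analytic claim that the Riemann sum converges to the integral.

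First I would set $g(x) := \left|\frac{df_\Theta(x)}{dx}\right|$ and note that the Riemann sum on the right of \eqref{gtv_equal} is exactly $\sum_{i=1}^n g(x_i)(x_i - x_{i-1})$ under the uniform partition $a = x_0 < x_1 < \cdots < x_n = b$ with $x_i - x_{i-1} = (b-a)/n$. In the neural-network setting used throughout the paper (with sine activations, or analogously smooth activations), $f'_\Theta$ is continuous on $[a,b]$, so $g$ is continuous on the compact interval $[a,b]$, hence uniformly continuous and Riemann integrable.

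Next I would argue the convergence quantitatively using uniform continuity: given $\varepsilon > 0$, pick $\delta > 0$ such that $|g(x) - g(y)| < \varepsilon/(b-a)$ whenever $|x-y| < \delta$, and then choose $n$ so large that $(b-a)/n < \delta$. On each subinterval $[x_{i-1}, x_i]$ the integrand $g$ differs from its value $g(x_i)$ at the right endpoint by at most $\varepsilon/(b-a)$, and summing these bounds yields
\begin{equation*}
\left|\int_a^b g(x)\,dx - \frac{b-a}{n}\sum_{i=1}^n g(x_i)\right| \le \sum_{i=1}^n \int_{x_{i-1}}^{x_i} \bigl|g(x) - g(x_i)\bigr|\,dx \le \varepsilon,
\end{equation*}
which proves the second equality in \eqref{gtv_equal} after letting $\varepsilon \to 0$.

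The only subtle point — and the step I view as the main potential obstacle — is the regularity assumption: being merely differentiable is not in general enough for $|f'_\Theta|$ to be Riemann integrable (Volterra-type constructions exhibit bounded derivatives that are discontinuous on a set of positive measure). The proof therefore tacitly uses the stronger property that $f'_\Theta$ is continuous, which is automatic for the smooth DNN parameterizations considered in this paper and should be stated explicitly; under this mild strengthening the argument is standard and the rest of the estimates are routine.
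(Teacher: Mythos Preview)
Your approach is essentially the paper's: invoke Lemma~\ref{lemma_TV_DF} for the first equality and identify the right-hand side as a (right-endpoint) Riemann sum for the second. The paper's own proof is a two-line appeal to ``the definition of Riemann integration,'' so your uniform-continuity argument and your explicit flag that one really needs $f'_\Theta$ continuous (not merely existent) add rigor that the paper leaves implicit.
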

	\begin{proof}
		The first equality of \eqref{gtv_equal} follows from Lemma \ref{lemma_TV_DF} and the second equality of \eqref{gtv_equal} follows from the definition of Riemann integration. 
	\end{proof}\par
It should be noted that the classical functional TV, $V(f_\Theta;[a,b])$, can also be approximated via difference operators without using the function derivatives, as stated below.
{\begin{lemma}[Total variation based on uniform partitions]\label{lemma_uniform}
For a function $f_\Theta(x):[a,b]\rightarrow{\mathbb R}$ that is differentiable w.r.t. the input $x$, we consider the uniform partitions over $[a,b]$, i.e., $x_i = a+\frac{i(b-a)}{n}$ $(i=0,1,\cdots,n)$. Then we have 
\begin{equation}\small\label{uniformTV}
V(f_\Theta;[a,b])=\lim_{n\rightarrow \infty}\sum_{i=1}^n{|f_\Theta(x_i)-f_\Theta(x_{i-1})|}.
\end{equation}\noindent
	\end{lemma}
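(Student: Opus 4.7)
The plan is to invoke Lemma \ref{lemma_TV_DF} to reduce the claim to showing that
\[
\int_a^b \left|\frac{df_\Theta(x)}{dx}\right| dx = \lim_{n\to\infty}\sum_{i=1}^n |f_\Theta(x_i)-f_\Theta(x_{i-1})|,
\]
and then to recognize the right-hand sums as Riemann sums for the left-hand integral via the mean value theorem. Since $f_\Theta$ is assumed differentiable on $[a,b]$, Lemma \ref{lemma_TV_DF} already gives the identification $V(f_\Theta;[a,b]) = \int_a^b |f_\Theta'(x)|\,dx$, so only the Riemann-sum identity needs to be established.

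First I would apply the mean value theorem on each subinterval $[x_{i-1},x_i]$ of the uniform partition. This yields a sample point $\xi_i\in(x_{i-1},x_i)$ with
\[
f_\Theta(x_i)-f_\Theta(x_{i-1}) = f_\Theta'(\xi_i)\cdot (x_i-x_{i-1}) = \frac{b-a}{n}\,f_\Theta'(\xi_i).
\]
Taking absolute values and summing over $i$ gives
\[
\sum_{i=1}^n |f_\Theta(x_i)-f_\Theta(x_{i-1})| = \frac{b-a}{n}\sum_{i=1}^n |f_\Theta'(\xi_i)|,
\]
which is a tagged Riemann sum for $|f_\Theta'|$ with tag points $\{\xi_i\}$ and mesh size $(b-a)/n\to 0$.

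Next I would pass to the limit $n\to\infty$. Under the standing assumption that $f_\Theta'$ is continuous (as in the paper, where the sine-activated DNN renders $f_\Theta$ smooth, so $|f_\Theta'|$ is Riemann integrable on the compact interval $[a,b]$), any sequence of tagged Riemann sums with vanishing mesh converges to $\int_a^b |f_\Theta'(x)|\,dx$. Combining with the identification from Lemma \ref{lemma_TV_DF} concludes the proof. The main subtlety is purely analytic: if one only assumes $f_\Theta$ is differentiable (not continuously differentiable), then $|f_\Theta'|$ need not be Riemann integrable and a more careful argument via lower/upper Darboux sums, or an appeal to the Lebesgue characterization of total variation for absolutely continuous functions, would be required. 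Under the smoothness of the DNN representations used in this paper, however, this is not a genuine obstruction.
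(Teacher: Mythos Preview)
Your proof is correct and, in fact, cleaner than the paper's argument, but it takes a genuinely different route. The paper does not recognize the sum $\sum_i|f_\Theta(x_i)-f_\Theta(x_{i-1})|$ as a Riemann sum via the mean value theorem. Instead it invokes an auxiliary lemma (Lemma~\ref{lemma_P}) giving the supremum-over-partitions characterization $V(f_\Theta;[a,b])=\sup_\Gamma\sum_i|f_\Theta(x_i)-f_\Theta(x_{i-1})|$, and then runs a classical $\epsilon$-refinement argument: pick a partition $\Gamma'$ whose variation is within $\epsilon/2$ of the supremum, use uniform continuity of $f_\Theta$ on $[a,b]$ to control the effect of inserting the finitely many points of $\Gamma'$ into a sufficiently fine uniform partition $\Gamma''$, and squeeze $V_{\Gamma''}$ between $V-\epsilon$ and $V$.

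The trade-off is this: your approach is shorter and conceptually aligned with Lemma~\ref{lemma_n} (both reduce to Riemann-sum convergence), but it leans on continuity of $f_\Theta'$ to guarantee Riemann integrability of $|f_\Theta'|$, which you correctly flag. The paper's argument superficially avoids this by using only uniform continuity of $f_\Theta$ itself, though its appeal to Lemma~\ref{lemma_P} (whose proof also ends in a Riemann-integral argument for $|f_\Theta'|$) ultimately carries the same hidden regularity assumption. Under the smooth DNN representations used throughout, both proofs are sound.
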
}
{The proof of Lemma \ref{lemma_uniform} can be found in Sec. \ref{proof_lemma_uniform} of the appendix.} From \eqref{gtv_equal} \& \eqref{uniformTV} we can see that both the derivative-based and difference-based NeurTV have tight connections with the classical functional TV (i.e., $V(f_\Theta;[a,b])$), and they are all equivalent when $n\rightarrow\infty$. However, in the finite case (i.e., with finite $n$), the derivative-based NeurTV and difference-based NeurTV are not equivalent, which rationally explains their performance gap in the finite condition (see for example Fig. \ref{fig_n}). 
\subsubsection{Arbitrary Resolution NeurTV Inspired by Variational Approximation}\label{sec_ar}
Inspired by the variational approximation, we develop some variants of NeurTV to further improve its effectiveness. First, we consider the arbitrary resolution NeurTV by using larger number of partitions $n$ in \eqref{V_approx}. Specifically, we show that the truncation error of the numerical integration in \eqref{V_approx} is inversely proportional to the number of partitions $n$. Hence a larger $n$ is expected to attain lower approximation error of the numerical integration.
	\begin{lemma}\label{lemma_error_gv}
		Suppose that $f_\Theta(x):[a,b]\rightarrow{\mathbb R}$ is differentiable w.r.t. the input $x$. The truncation error $R$ of numerical integration by using uniform partitioned quadratures for approximating the functional TV (i.e., approximating $V(f_\Theta;[a,b])$) satisfies
		\begin{equation}\small
			R:=\left|\left(V(f_\Theta;[a,b])-\frac{b-a}{n}{\sum_{i=1}^n\left|\frac{df_\Theta(x_i)}{dx_i}\right|}\right)\right|\leq
			\frac{(b-a)^2}{2n}\left|\frac{df^2_\Theta(\eta)}{d\eta^2}\right|,
		\end{equation}\noindent
		where $\eta\in(a,b)$ and $\frac{df^2_\Theta(\eta)}{d\eta^2}$ denotes the second-order derivative.
	\end{lemma}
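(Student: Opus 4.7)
The plan is to invoke Lemma \ref{lemma_TV_DF} first to rewrite $V(f_\Theta;[a,b])$ as the Riemann integral $\int_a^b |df_\Theta(x)/dx|\,dx$, so that the proposed quadrature $\tfrac{b-a}{n}\sum_{i=1}^n |df_\Theta(x_i)/dx_i|$ is recognized as the right-endpoint Riemann sum of the integrand $g(x):=|df_\Theta(x)/dx|$ with uniform step $h=(b-a)/n$ on the partition $a=x_0<x_1<\cdots<x_n=b$. The truncation error $R$ then reduces to a classical right-endpoint quadrature estimate for $\int_a^b g(x)\,dx$.

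Next, I would work on a single subinterval $[x_{i-1},x_i]$ and apply integration by parts with the antiderivative $x-x_{i-1}$ (chosen to vanish at the left endpoint) to obtain the identity
\begin{equation*}
h\,g(x_i)-\int_{x_{i-1}}^{x_i} g(x)\,dx \;=\; \int_{x_{i-1}}^{x_i} (x-x_{i-1})\,g'(x)\,dx,
\end{equation*}
valid wherever $g$ is absolutely continuous. Since $g'(x)=\mathrm{sign}(df_\Theta(x)/dx)\,(d^2f_\Theta(x)/dx^2)$ whenever $df_\Theta(x)/dx\neq 0$, we have $|g'(x)|=|d^2f_\Theta(x)/dx^2|$ almost everywhere, and bounding $|x-x_{i-1}|\leq h$ gives
\begin{equation*}
\left|\,h\,g(x_i)-\int_{x_{i-1}}^{x_i} g(x)\,dx\,\right|\;\leq\;\tfrac{h^2}{2}\sup_{\zeta\in[x_{i-1},x_i]}\left|\tfrac{d^2 f_\Theta(\zeta)}{d\zeta^2}\right|.
\end{equation*}

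Summing these $n$ local bounds and substituting $h=(b-a)/n$ yields $R\leq \tfrac{(b-a)^2}{2n}\sup_{[a,b]}|d^2 f_\Theta/dx^2|$, and the existence of a point $\eta\in(a,b)$ realising (or dominating) the supremum follows from the extreme value theorem under the implicit $C^2$ regularity of $f_\Theta$. The main obstacle will be the non-smoothness of the integrand $g=|df_\Theta/dx|$ at zero-crossings of $df_\Theta/dx$, which prevents a naive application of integration by parts. I would circumvent this either by partitioning $[a,b]$ into maximal subintervals on which $df_\Theta/dx$ has constant sign, so that $g=\pm df_\Theta/dx$ is smooth on each piece and the extra boundary contributions from the sign changes telescope, or by observing that the zero set has Lebesgue measure zero and working with the weak derivative $g'=\mathrm{sign}(df_\Theta/dx)\,(d^2f_\Theta/dx^2)$; in either resolution, the per-subinterval estimate above is unchanged and the global bound follows.
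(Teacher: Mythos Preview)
Your proposal is correct and arrives at the same bound, but the key technical step differs from the paper's. You use integration by parts on $g=|f_\Theta'|$ to obtain the per-interval identity $h\,g(x_i)-\int_{x_{i-1}}^{x_i}g=\int_{x_{i-1}}^{x_i}(x-x_{i-1})g'(x)\,dx$, which then forces you to confront the non-differentiability of $g$ at zero-crossings of $f_\Theta'$ (and you outline reasonable workarounds). The paper instead writes the per-interval error as $\int_{x_{i-1}}^{x_i}\bigl(|f_\Theta'(x)|-|f_\Theta'(x_i)|\bigr)dx$, applies the reverse triangle inequality $\bigl||f_\Theta'(x)|-|f_\Theta'(x_i)|\bigr|\le |f_\Theta'(x)-f_\Theta'(x_i)|$, and then the mean value theorem on $f_\Theta'$ itself, so the absolute value is never differentiated and the smoothness issue does not arise. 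After that the paper uses the integral mean value theorem (with weight $x_i-x\ge 0$) to extract a single $\eta_i$, and the intermediate value theorem to replace $\sum_i|f_\Theta''(\eta_i)|$ by $n|f_\Theta''(\eta)|$; you instead pass through $\sup_{[a,b]}|f_\Theta''|$ and invoke the extreme value theorem. Both routes need $f_\Theta\in C^2$ implicitly. One minor wording slip: your phrase ``bounding $|x-x_{i-1}|\le h$'' would give $h^2$, not $h^2/2$; what you actually need (and clearly intend) is to bound $|g'|\le\sup|f_\Theta''|$ and integrate $(x-x_{i-1})$ exactly to get $h^2/2$.
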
\par
{The proof of Lemma \ref{lemma_error_gv} can be found in Sec. \ref{proof_lemma_error_gv} of the appendix.} Lemma \ref{lemma_error_gv} shows that a larger number of partitions $n$ leads to potentially lower approximation error of the numerical integration. For difference-based NeurTV, we have an analogous lemma.
	\begin{lemma}\label{lemma_error_tv}
		Given a function $f_\Theta(x):[a,b]\rightarrow{\mathbb R}$ that is differentiable w.r.t. the input $x$, we consider the uniform partitions ${\Gamma}_n := (x_0,x_1,\cdots,x_n)$ of $[a,b]\subset{\mathbb R}$, i.e., $x_i = a+\frac{i(b-a)}{n}$ $(i=0,1,\cdots,n)$. Then we have $
			\sum_{{\Gamma}_n}|f_\Theta(x_{i})-f_\Theta(x_{i-1})|\leq\sum_{{\Gamma}_{2n}}|f_\Theta(x_{i})-f_\Theta(x_{i-1})|\leq V(f_\Theta;[a,b]).$
	\end{lemma}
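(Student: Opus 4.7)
The plan is to establish the two inequalities separately and in order. For the left inequality, I would observe that the refinement $\Gamma_{2n}$ is obtained from $\Gamma_n$ by inserting the midpoint $y_i := a + \frac{(2i-1)(b-a)}{2n}$ into each interval $[x_{i-1}, x_i]$. A one-line triangle inequality then gives
\begin{equation*}\small
|f_\Theta(x_i) - f_\Theta(x_{i-1})| = |(f_\Theta(x_i) - f_\Theta(y_i)) + (f_\Theta(y_i) - f_\Theta(x_{i-1}))| \le |f_\Theta(x_i) - f_\Theta(y_i)| + |f_\Theta(y_i) - f_\Theta(x_{i-1})|.
\end{equation*}
Summing over $i = 1, \ldots, n$ shows that the $\Gamma_n$-sum is bounded by the $\Gamma_{2n}$-sum. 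The same argument applied iteratively yields the stronger fact that $\{S_{2^k n}\}_{k \ge 0}$, with $S_m := \sum_{i=1}^m |f_\Theta(x_i) - f_\Theta(x_{i-1})|$ along $\Gamma_m$, is a monotone non-decreasing sequence.

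For the right inequality, I would appeal directly to Lemma \ref{lemma_uniform}, which states that $V(f_\Theta;[a,b]) = \lim_{m\to\infty} S_m$. Since this limit exists, the subsequence $\{S_{2^k n}\}_{k\ge 0}$ must converge to the same value $V(f_\Theta;[a,b])$. Combining this with the monotonicity established above yields $S_{2n} \le \sup_{k \ge 1} S_{2^k n} = V(f_\Theta;[a,b])$, which is precisely the right inequality.

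The main obstacle — which is really more a matter of care than of difficulty — is justifying that one may bound a fixed partition sum by $V(f_\Theta;[a,b])$ via Lemma \ref{lemma_uniform}, since that lemma gives a limit along the full sequence $m \to \infty$ and not along the doubling subsequence $2^k n$. The monotonicity argument from the first step resolves this cleanly by forcing the subsequence to have a supremum equal to the full limit. If one were willing to invoke the standard equivalence between the distributional definition \eqref{FunTV} and the supremum of difference-based sums over all finite partitions (a classical BV result), the right inequality would follow immediately as every partition sum is dominated by that supremum; however, keeping the argument self-contained via Lemma \ref{lemma_uniform} is preferable to maintain consistency with the paper's exposition.
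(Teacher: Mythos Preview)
Your argument is correct. The first inequality is handled identically to the paper, via the triangle inequality applied after inserting the midpoint of each subinterval.

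For the second inequality, however, you take a different route than the paper. The paper simply invokes Lemma~\ref{lemma_P}, which establishes the classical identity $V(f_\Theta;[a,b]) = \sup_{\Gamma}\sum_i |f_\Theta(x_i)-f_\Theta(x_{i-1})|$ over all finite partitions; the bound $S_{2n}\le V(f_\Theta;[a,b])$ is then immediate since $\Gamma_{2n}$ is one such partition. You instead use Lemma~\ref{lemma_uniform} (the limit characterization along uniform partitions) together with the monotonicity of the doubling subsequence $\{S_{2^k n}\}_k$ that you already obtained from the first step. This is a perfectly valid detour: monotone and convergent implies each term is bounded by the limit. The trade-off is that your argument is slightly longer and leans on the existence of the limit, whereas the paper's approach is a one-line appeal to a supremum. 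Ironically, the ``classical BV result'' you mention as an alternative at the end of your proposal is precisely Lemma~\ref{lemma_P}, which the paper proves in its appendix and uses here --- so invoking it would be fully consistent with the paper's exposition rather than a departure from it.
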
\par 
{The proof of Lemma \ref{lemma_error_tv} can be found in Sec. \ref{proof_lemma_error_tv} of the appendix.} Lemma \ref{lemma_error_tv} shows that using the difference operators with larger number of partitions (i.e., from $n$ to $2n$) leads to lower error for approximating the functional TV, $V(f_\Theta;[a,b])$. Inspired by the variational approximation analysis in Lemmas \ref{lemma_error_gv}-\ref{lemma_error_tv}, we can set a larger number of partitions $n$ to reduce the approximation error for approximating the functional TV $V(f_\Theta;[a,b])$, hence leading to potentially better numerical results. Taking the image recovery as an example, we can sample arbitrary resolution meshgrid beyond the original image resolution to perform the NeurTV based on the continuous representation $f_\Theta(\cdot)$. More specifically, we consider the following two image recovery models by using the arbitrary resolution NeurTV regularization. {First, the arbitrary resolution derivative-based NeurTV model can be formulated as
\begin{equation}\small\label{model_gtv}
	\begin{split}	
	\min_{\Theta}\sum_{(i,j)\in\Gamma}{\rm Fidelity}({\bf O}_{(i,j)}, f_\Theta(i,j))+\lambda\;\Psi_{\rm NeurTV}(\Theta),\;
	\Psi_{\rm NeurTV}(\Theta)={\sum_{{\bf x}\in\Gamma_n}\left| \frac{\partial f_\Theta({\bf x})}{\partial {\bf x}_{(1)}}\right|+\left| \frac{\partial f_\Theta({\bf x})}{\partial {\bf x}_{(2)}}\right|},
	\end{split}
\end{equation}\noindent
where ${\bf O}$ denotes the observed data, ${\Gamma}$ denotes the original image meshgrid points, $\lambda$ is a trade-off parameter, and $\Psi_{\rm NeurTV}(\Theta)$ is the derivative-based NeurTV regularization. Here, $\Gamma_{n}$ denotes a point set that contains points in a denser meshgrid with number of meshgrid partitions $n$. In the derivative-based NeurTV regularization, we construct higher-resolution NeurTV beyond the image resolution by using local derivatives of DNN $f_\Theta(\cdot)$ on denser meshgrid. The higher-resolution regularization aims to reduce the approximation error of the numerical integration based on Lemma \ref{lemma_error_gv}. \par 
Similarly, we consider the following arbitrary resolution difference-based NeurTV model:
\begin{equation}\small\label{model_tv}
	\begin{split}	
	&\min_{\Theta}\sum_{(i,j)\in\Gamma}{\rm Fidelity}({\bf O}_{(i,j)}, f_\Theta(i,j))+\lambda\;\Psi_{\rm Diff\mbox{-}NeurTV}(\Theta),\\&
	\Psi_{\rm Diff\mbox{-}NeurTV}(\Theta)=\sum_{{\bf x}\in\Gamma_n}|f_\Theta({\bf x}_{(1)}+\Delta x,{\bf x}_{(2)})-f_\Theta({\bf x})|+|f_\Theta({\bf x}_{(1)},{\bf x}_{(2)}+\Delta y)-f_\Theta({\bf x})|,
	\end{split}
	\end{equation}\noindent
where $\Psi_{\rm Diff\mbox{-}NeurTV}(\Theta)$ is the difference-based NeurTV regularization. Here, $\Gamma_{n}$ denotes a point set that contains points in a denser meshgrid with number of meshgrid partitions $n$, and $\Delta x$ and $\Delta y$ denote the lengthes of the partitioned interval.} In the difference-based regularization $\Psi_{\rm Diff\mbox{-}NeurTV}(\Theta)$, we construct higher-resolution difference-based NeurTV beyond the original image resolution by using the local differences of the DNN $f_\Theta(\cdot)$. The numerical experiments in Fig. \ref{fig_n} shall show that a higher-resolution NeurTV regularization (i.e., a larger number of partitions $n$) inclines to obtain better performances. This can be rationally explained by the lower variational approximation error induced by the larger number of partitions $n$; see Lemmas \ref{lemma_error_gv}-\ref{lemma_error_tv}. In summary, classical TV-based methods \cite{SIAM_TV,SIIMS_TDV,SIIMS_TV_2024,TVTLS} utilize discrete representation to construct TV with the original image resolution. Differently, we use the DNN $f_\Theta(\cdot)$ to {continuously represent data}, which allows us to construct arbitrary resolution NeurTV to obtain potentially better performances for data recovery.
	\subsubsection{Space-variant NeurTV Inspired by Variational Approximation}\label{sec_wtv}
	Inspired by the variational approximation analysis, we further develop the space-variant NeurTV regularization, which imposes spatially varying scale and directional parameters to better describe local structures. We consider the two-dimensional case as an example.\par 
	\begin{definition}[Space-variant NeurTV]
		Given a two-dimensional function $f_\Theta({\bf x}):\Omega\rightarrow{\mathbb R}$ that is differentiable w.r.t. the input $\bf x$ where $\Omega \subset{\mathbb R}^2$, a point set $\Gamma\subset\Omega$, and some scale parameters $\alpha_{\bf x}$ and directional parameters $a_{\bf x},\theta_{\bf x}$ depending on ${\bf x}\in\Gamma$, the space-variant NeurTV regularization is defined as 
		\begin{equation}\small\label{WGV}
			\Psi_{{\rm NeurTV}_\theta^\alpha}(\Theta)=\sum_{{\bf x}\in\Gamma} \alpha_{\bf x}\left\lVert\begin{pmatrix}
				{a_{\bf x}}&0\\
				0&2-a_{\bf x}\\
			\end{pmatrix}
			\begin{pmatrix}
				\cos\theta_{\bf x}&\sin\theta_{\bf x}\\
				\sin\theta_{\bf x}&-\cos\theta_{\bf x}\\
			\end{pmatrix}
			\begin{pmatrix}
				\frac{\partial f_\Theta({\bf x})}{\partial{\bf x}_{(1)}}\\
				\frac{\partial f_\Theta({\bf x})}{\partial{\bf x}_{(2)}}
			\end{pmatrix}
			\right\rVert_{\ell_1}.
		\end{equation}\noindent
	\end{definition}\par 
\begin{algorithm}[t]
	\begin{spacing}{0.85}
		\renewcommand\arraystretch{1.2}
		\caption[Caption for LOF]{{Space-variant NeurTV regularization-based method for image denoising}}\label{alg}
		\begin{algorithmic}[1]
			\renewcommand{\algorithmicrequire}{\textbf{Input:}} 
			\REQUIRE
			Observed image ${\bf O}\in{\mathbb R}^{n_1\times n_2}$, trade-off parameter $\lambda$, number of partitions $n$;
			\renewcommand{\algorithmicrequire}{\textbf{Initialization:}} 
			\REQUIRE Randomly initialize the DNN parameters $\Theta$ of an untrained DNN $f_\Theta(\cdot):{\mathbb R}^2\rightarrow{\mathbb R}$;
			\STATE Construct a meshgrid coordinate set $\Gamma\subset{\mathbb R}^2$ based on the partition number $n$; 
			\STATE Construct the loss function with the space-variant NeurTV regularization:
			\begin{equation}\small\label{model_alg}
				\begin{small}
					\min_{\Theta}\sum_{(i,j)}({\bf O}_{(i,j)}-f_\Theta(i,j))^2+\lambda\underbrace{\sum_{{\bf x}\in\Gamma} \alpha_{\bf x}\left\lVert\begin{pmatrix}
							{a_{\bf x}}&0\\
							0&2-a_{\bf x}\\
						\end{pmatrix}
						\begin{pmatrix}
							\cos\theta_{\bf x}&\sin\theta_{\bf x}\\
							\sin\theta_{\bf x}&-\cos\theta_{\bf x}\\
						\end{pmatrix}
						\begin{pmatrix}
							\frac{\partial f_\Theta({\bf x})}{\partial{\bf x}_{(1)}}\\
							\frac{\partial f_\Theta({\bf x})}{\partial{\bf x}_{(2)}}
						\end{pmatrix}
						\right\rVert_{\ell_1}}_{{\rm Space\mbox{-}variant\;NeurTV\;regularization}\;\Psi_{{\rm NeurTV}_{\theta}^\alpha}(\Theta)}.
				\end{small}
			\end{equation}\noindent
			\WHILE {not converge}
			\STATE Calculate the DNN output $f_\Theta({\bf x})$ for each input coordinate ${\bf x}\in\Gamma$;
			\STATE Calculate the function derivatives of DNN outputs w.r.t. input coordinates, i.e., $\frac{\partial f_\Theta({\bf x})}{\partial{\bf x}_{(1)}},\frac{\partial f_\Theta({\bf x})}{\partial{\bf x}_{(2)}}$, using the automatic differentiation system;
			\STATE Calculate the unsupervised loss in \eqref{model_alg} with the observed image $\bf O$ and the calculated function derivatives;
			\STATE Update DNN parameters $\Theta$ by Adam based on the computed unsupervised loss;
			\STATE Update scale parameters $\alpha_{\bf x}$ of the space-variant NeurTV via \eqref{WTV_para2};
			\STATE Update directional parameters $a_{\bf x},\theta_{\bf x}$ of the space-variant NeurTV via \eqref{DTV_para};
			\ENDWHILE
			\renewcommand{\algorithmicrequire}{\textbf{Output:}}
			\REQUIRE The recovered image by querying $f_\Theta(\cdot)$ on meshgrid points.
		\end{algorithmic}
	\end{spacing}
\end{algorithm}
The aim of the space-variant NeurTV is to make distinction between smooth and non-smooth regions by assigning different scale parameters $\alpha_{\bf x}$ to different elements $\bf x$. Meanwhile, the space-variant NeurTV is also capable of distinguishing the local directions of the patterns by assigning different direction parameters $a_{\bf x},\theta_{\bf x}$, where $\theta_{\bf x}$ and $a_{\bf x}$ correspond to the direction and the magnitude imposed on the directional derivative with direction $\theta_{\bf x}$, respectively. According to the directional derivative Lemma \ref{lemma_DTV}, the regularization $\Psi_{{\rm NeurTV}_\theta^\alpha}(\Theta)$ penalizes the derivative along the direction $\theta_{\bf x}$ with magnitude $a_{\bf x}$ and penalizes the derivative along the orthogonal direction of $\theta_{\bf x}$ with magnitude $2-a_{\bf x}$. The next step is to design effective strategies to determine the parameters $\alpha_{\bf x},a_{\bf x},\theta_{\bf x}$. Inspired by the weighted total variation methods \cite{PR_WTV,TGRS_S2S}, the scale parameter $\alpha_{\bf x}$ can be simply set as the reciprocal of the local derivative:
	\begin{equation}\small\label{WTV_para1}
		\alpha_{\bf x} = \left(\left|\frac{\partial f_\Theta({\bf x})}{\partial{\bf x}_{(1)}}\right|+\left|\frac{\partial f_\Theta({\bf x})}{\partial{\bf x}_{(2)}}\right|+\epsilon\right)^{-1},
	\end{equation}\noindent\noindent
	where $\epsilon$ is a small constant. The philosophy of this scale parameter selection is to enhance the smoothness over smoother regions while keep the scale parameter smaller in sharper regions with larger derivatives. This can keep the sharpness of the image edges and promote the smoothness of relatively flat regions. In practice, we can update the weight $\alpha_{\bf x}$ based on the learned continuous representation $f_\Theta(\cdot)$ in the last iteration.\par 
We also introduce another scale parameter selection strategy for NeurTV. Specifically, we use the second-order derivative information to determine scale parameters, which was not considered in previous weighted TV-based methods \cite{PR_WTV,TGRS_S2S}. This selection strategy is inspired by the approximation error analysis of Lemma \ref{lemma_error_gv}, where the variational approximation error is shown to be related to the second-order function derivative. We show the following example that if we enlarge the interval of the numerical integration in regions with smaller second-order derivative, then a lower variational approximation error tends to be obtained.
\begin{lemma}\label{lemma_WTV}
Consider the numerical integration for $V(f_\Theta,[a,b])=\int_a^b|\frac{df_\Theta({x})}{dx}|d{x}$ with uniform partitions $(x_0,x_1,\cdots,x_n)$. Assume that $|\frac{d f_\Theta({x})}{d{x}}|$ is non-decreasing in $[a,b]$ and there exists $j$ such that $\frac{d^2f(t_1)}{dt_1^2}>\frac{d^2f(t_2)}{dt_2^2}\geq0$ for any $t_1\in[x_{j-1},x_j]$ and $t_2\in(x_j,x_{j+1}]$. Then the truncation error using non-uniform partitions $(x_0,x_1,\cdots,x_{j-1},x_j-\delta,x_{j+1},\cdots,x_n)$ is less than the truncation error using uniform partitions provided that $\delta>0$ is small enough.
\end{lemma}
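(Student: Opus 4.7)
The plan is to reduce the comparison of the two truncation errors to the two intervals where the partitions actually differ, namely $[x_{j-1}, x_j]$ and $[x_j, x_{j+1}]$ in the uniform case versus $[x_{j-1}, x_j - \delta]$ and $[x_j - \delta, x_{j+1}]$ in the non-uniform case; all other subintervals are identical and their contributions cancel. I would first use that $|f'_\Theta|$ is non-decreasing on $[a,b]$ together with $f''_\Theta \geq 0$ throughout $[x_{j-1}, x_{j+1}]$ (both from the hypothesis, since $f''_\Theta(t_1) > f''_\Theta(t_2) \geq 0$ forces $f''_\Theta > 0$ on the left interval) to deduce that $f'_\Theta \geq 0$ on this region, so $|f'_\Theta| = f'_\Theta$ locally and the right-endpoint quadrature overestimates the exact integral on each affected subinterval. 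Writing $f'_\Theta(q) - f'_\Theta(s) = \int_s^q f''_\Theta(u)\,du$ and swapping the order of integration, the per-subinterval truncation error on $[p,q]$ takes the clean form $\int_p^q f''_\Theta(u)(u-p)\,du$.

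Next, I would plug this representation into the four relevant subintervals (two for each partition) and subtract. Writing $h = (b-a)/n$, the shift of the junction from $x_j$ to $x_j - \delta$ changes the multipliers on $[x_j - \delta, x_j]$ from $u - x_{j-1}$ (in the uniform scheme's $j$-th interval) to $u - x_j + \delta$ (as part of the non-uniform scheme's $(j+1)$-th interval), while the $(j+1)$-th interval's weight on $[x_j, x_{j+1}]$ shifts uniformly by $\delta$. A direct computation shows the two contributions on $[x_j - \delta, x_j]$ combine to $(\delta - h)\int_{x_j - \delta}^{x_j} f''_\Theta(u)\,du$, so the total difference simplifies to
\begin{equation*}
R_{\mathrm{non}} - R_{\mathrm{uni}} = -(h - \delta)\int_{x_j - \delta}^{x_j} f''_\Theta(u)\,du \;+\; \delta \int_{x_j}^{x_{j+1}} f''_\Theta(u)\,du.
\end{equation*}
The lemma then reduces to showing the negative first term strictly dominates the positive second term for all sufficiently small $\delta > 0$.

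Finally, I would extract a uniform positive gap from the strict inequality in the hypothesis. Specialising to $t_1 = x_j$ gives $f''_\Theta(u) < f''_\Theta(x_j)$ for every $u \in (x_j, x_{j+1}]$, and since this strict pointwise inequality holds on a set of positive measure, integrating yields $\int_{x_j}^{x_{j+1}} f''_\Theta(u)\,du = h f''_\Theta(x_j) - C_0$ for some constant $C_0 > 0$ that is independent of $\delta$. On the other side, left continuity of $f''_\Theta$ at $x_j$ gives $\int_{x_j - \delta}^{x_j} f''_\Theta(u)\,du = \delta f''_\Theta(x_j) + o(\delta)$ as $\delta \to 0^+$. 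Substituting, the displayed difference becomes $-\delta C_0 + o(\delta)$, which is strictly negative once $\delta$ is small enough.

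The main obstacle I anticipate is that the strict hypothesis $f''_\Theta(t_1) > f''_\Theta(t_2)$ does not in general translate into a positive separation between $\inf_{[x_{j-1},x_j]} f''_\Theta$ and $\sup_{(x_j,x_{j+1}]} f''_\Theta$, because the two quantities can coincide as a non-attained limit when $f''_\Theta$ is continuous across $x_j$; any attempt to bound the two integrals by crude sup/inf estimates will therefore fail to produce the needed strict inequality. The resolution, as above, is to avoid sup/inf bounds entirely and instead integrate the pointwise strict inequality $f''_\Theta(u) < f''_\Theta(x_j)$ directly on $(x_j, x_{j+1}]$, which is precisely the mechanism that yields the $\delta$-independent constant $C_0 > 0$ driving the sign of $R_{\mathrm{non}} - R_{\mathrm{uni}}$.
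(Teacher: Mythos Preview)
Your proof is correct and follows essentially the same route as the paper: both reduce the comparison to the two affected subintervals, remove the absolute value by observing $f'_\Theta\ge 0$ there, and arrive at the same expression for the error difference (your integral form $(h-\delta)\int_{x_j-\delta}^{x_j}f''_\Theta - \delta\int_{x_j}^{x_{j+1}}f''_\Theta$ is exactly the paper's $f''_\Theta(\eta_1)\delta(h-\delta)-f''_\Theta(\eta_2)h\delta$ after applying the mean value theorem for integrals). The one substantive difference is the final step: the paper writes down the condition $\delta< h\bigl(f''_\Theta(\eta_1)-f''_\Theta(\eta_2)\bigr)/f''_\Theta(\eta_1)$, which is slightly awkward because $\eta_1$ itself depends on $\delta$; your asymptotic argument extracting the $\delta$-independent constant $C_0=\int_{x_j}^{x_{j+1}}\bigl(f''_\Theta(x_j)-f''_\Theta(u)\bigr)\,du>0$ and showing $R_{\mathrm{non}}-R_{\mathrm{uni}}=-\delta C_0+o(\delta)$ is a cleaner way to make ``$\delta$ small enough'' precise, and it correctly identifies that sup/inf bounds would fail here.
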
\par 
{The proof of Lemma \ref{lemma_WTV} can be found in Sec. \ref{proof_lemma_WTV} of the appendix.} Lemma \ref{lemma_WTV} indicates that under mild conditions, a lower approximation error can be obtained by sparing some length $\delta$ from intervals with larger second-order derivative to those intervals with smaller second-order derivative. This motivates us to assign larger scale parameters $\alpha_{\bf x}$ to elements with smaller second-order derivative:
	\begin{equation}\small\label{WTV_para2}
		\alpha_{\bf x} = \left(\left|\frac{\partial^2 f_\Theta({\bf x})}{\partial{\bf x}_{(1)}^2}\right|+\left|\frac{\partial^2 f_\Theta({\bf x})}{\partial{\bf x}_{(2)}^2}\right|+\epsilon\right)^{-1}.
	\end{equation}\noindent\noindent
Similarly, the scale parameter can be updated based on the learned continuous representation $f_\Theta(\cdot)$ in the last iteration. \par
	Next, we further discuss the selection of the directional parameters $a_{\bf x},\theta_{\bf x}$. We propose the following selection schemes for the directional parameters:
	\begin{equation}\small\label{DTV_para}
		\theta_{\bf x} = \arctan\left({\frac{\partial f_\Theta({\bf x})}{\partial{\bf x}_{(2)}}}\Big/{\frac{\partial f_\Theta({\bf x})}{\partial{\bf x}_{(1)}}}\right),
		\;a_{\bf x} = \frac{\left|(\sin\theta_{\bf x},-\cos\theta_{\bf x})	(\frac{\partial f_\Theta({\bf x})}{\partial{\bf x}_{(1)}},\frac{\partial f_\Theta({\bf x})}{\partial{\bf x}_{(2)}})^T\right|}{\left|(\cos\theta_{\bf x},\sin\theta_{\bf x})
				(\frac{\partial f_\Theta({\bf x})}{\partial{\bf x}_{(1)}},\frac{\partial f_\Theta({\bf x})}{\partial{\bf x}_{(2)}})^T\right|}.
	\end{equation}\noindent
	In \eqref{DTV_para}, the directional parameters $\theta_{\bf x}$ and $a_{\bf x}$ are determined by the local derivatives of $f_\Theta(\cdot)$. Specifically, when $\theta_{\bf x}$ admits \eqref{DTV_para}, the directional derivative of $f_\Theta(\cdot)$ along the direction $(\cos\theta_{\bf x},\sin\theta_{\bf x})$ attains the maximum intensity for all $\theta_{\bf x}\in[0,2\pi)$, and hence we assign a lower weight $a_{\bf x}<1$ to this direction to keep the sharpness. In contrast, we assign a larger weight $2-a_{\bf x}$ to its orthogonal direction, i.e., $(\sin\theta_{\bf x},-\cos\theta_{\bf x})$, to enhance the smoothness along the orthogonal direction of the gradient. The directional parameters $a_{\bf x},\theta_{\bf x}$ can be updated based on the learned continuous representation $f_\Theta(\cdot)$ in the last iteration by using \eqref{DTV_para}. \par 
{Finally, we describe the overall procedure of our space-variant NeurTV by taking the image denoising as an example. Especially, we would like to clarify that the DNN $f_\Theta(\cdot)$ is learned by solely using the observed data. Given a noisy image ${\bf O}\in{\mathbb R}^{n_1\times n_2}$, we form the following unsupervised loss function based on the space-variant NeurTV regularization: 
	\begin{equation}\small\label{loss}
		\min_{\Theta}\sum_{(i,j)}({\bf O}_{(i,j)}-f_\Theta(i,j))^2+\lambda\Psi_{{\rm NeurTV}_{\theta}^\alpha}(\Theta),
\end{equation}\noindent
where $\sum_{(i,j)}({\bf O}_{(i,j)}-f_\Theta(i,j))^2$ is the fidelity term and $\Psi_{{\rm NeurTV}_{\theta}^\alpha}(\Theta)$ is the space-variant NeurTV regularization defined in \eqref{WGV}. We use this loss function to train the DNN $f_\Theta(\cdot)$ in a purely unsupervised manner (i.e., the loss function solely uses the single observed data $\bf O$ without any other training data). Especially, the derivative-based NeurTV relies on calculating the function derivatives of the DNN output $f_\Theta({\bf x})$ w.r.t. the input coordinate $\bf x$. At each iteration of the algorithm, the function derivatives of the DNN output $f_\Theta({\bf x})$ w.r.t. the input coordinate $\bf x$ are explicitly calculated using the automatic differentiation system in Pytorch (e.g., by using the {\tt torch.autograd.grad()} command in Pytorch\footnote{\url{https://pytorch.org/docs/stable/generated/torch.autograd.grad.html}}). More details of the automatic differentiation system can refer to \cite{ADS}. The observed data $\bf O$ is used to formulate the unsupervised loss function \eqref{loss}. The denoised image can be obtained by querying the DNN $f_\Theta(\cdot)$ on meshgrid points after training. To more clearly introduce the proposed method, we provide a workflow of the space-variant NeurTV regularization-based model for image denoising in Algorithm \ref{alg}. \par }
To numerically validate the effectiveness of our new space-variant NeurTV, we show the results of image recovery using space-variant NeurTV with first-order or second-order derivatives-based scale parameters selection schemes in Fig. \ref{fig_wgtv}. It can be observed that the space-variant NeurTV obtains better performances than the original NeurTV. Meanwhile, our second-order derivatives-based scale parameters selection scheme \eqref{WTV_para2} obtains a better result than the first-order derivatives-based scheme \eqref{WTV_para1}.
\begin{figure}[t]
	\tiny
	\setlength{\tabcolsep}{0.9pt}
	\begin{center}
		\begin{tabular}{cccccc}
			\includegraphics[width=0.14\textwidth]{peppers_5c1gt.pdf}
			&\includegraphics[width=0.14\textwidth]{peppers_5c1.pdf}
			&\includegraphics[width=0.14\textwidth]{peppers_5c1TV.pdf}
			&\includegraphics[width=0.14\textwidth]{peppers_5c1SIREN-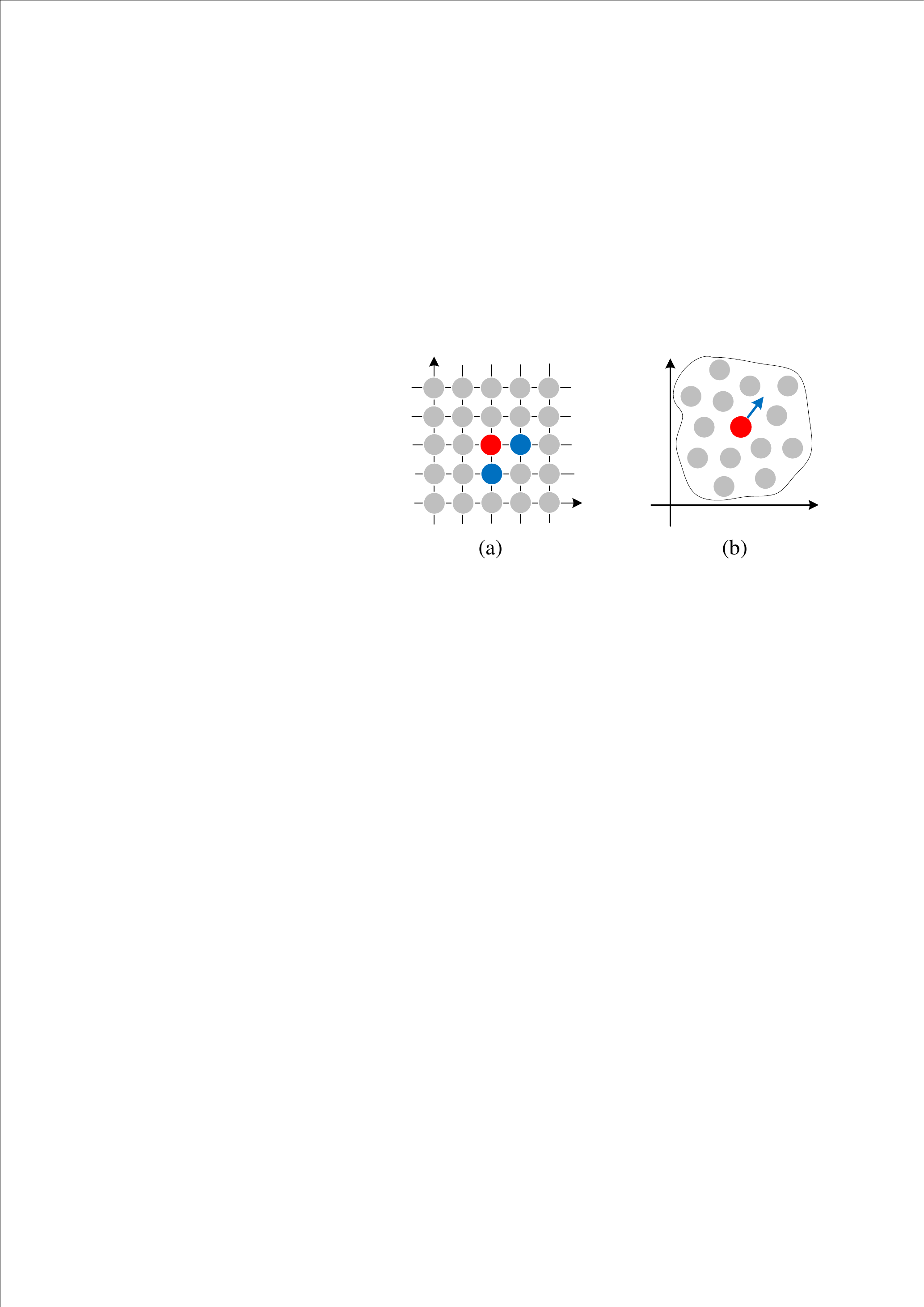}
			&\includegraphics[width=0.14\textwidth]{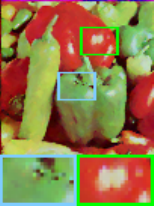}
			&\includegraphics[width=0.14\textwidth]{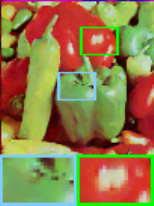}\\	
			PSNR Inf&PSNR 20.38 &
			
			PSNR 23.13 &
			PSNR 25.43 & 
			PSNR 25.77 &
			PSNR 25.93 \\
			Original&Noisy&TV&NeurTV&${{\rm NeurTV}_\theta^\alpha}$ \eqref{WTV_para1}&${{\rm NeurTV}_\theta^\alpha}$ \eqref{WTV_para2}\\
		\end{tabular}
		\vspace{-0.2cm}
	\end{center}
\caption{The results of image denoising with Gaussian noise deviation $0.1$ using classical TV, the proposed NeurTV regularization, and the proposed space-variant NeurTV regularization (denoted by ${\rm NeurTV}_\theta^\alpha$). We consider the first-order derivatives-based scale parameter selection strategy \eqref{WTV_para1} or the second-order derivatives-based scale parameter selection strategy \eqref{WTV_para2} in the space-variant NeurTV. \label{fig_wgtv}}
\vspace{-0.2cm}
\end{figure}
\section{Numerical Experiments}\label{sec_exp}
In this section, we conduct numerical experiments on different applications involving both meshgrid data and non-meshgrid data. For easy reference, we first give an intuitive summarization of different applications and the corresponding NeurTV configurations in Table \ref{tab_exp}. {By using a DNN to continuously represent data, NeurTV can capture intrinsic local correlations of data with different modalities, e.g., meshgrid data such as images and non-meshgrid data such as point clouds. Thus, the proposed NeurTV is naturally suitable for different modalities of data. Since different data exhibits different local correlations, e.g., the spatial local correlation underlying color images and the spatial-spectral local correlation underlying hyperspectral images, different NeurTV variants can be considered (e.g., the space-variant NeurTV for color images and the spatial-spectral NeurTV for hyperspectral images).}
	\subsection{Image Denoising (on Meshgrid)} {For the image denoising problem, we consider the following optimization model based on the derivative-based space-variant NeurTV:}
	\begin{equation}\small\label{denoising_model}
		\min_{\Theta}\sum_{(i,j)\in{\rm meshgrid}}({\bf O}_{(i,j)}-f_\Theta(i,j))^2+\lambda\Psi_{{\rm NeurTV}_{\theta}^\alpha}(\Theta),
	\end{equation}\noindent
where ${\bf O}\in{\mathbb R}^{n_1\times n_2}$ denotes the observed noisy image and $\Psi_{{\rm NeurTV}_{\theta}^\alpha}(\Theta)$ denotes the proposed space-variant NeurTV regularization defined in \eqref{WGV}. $\lambda$ is a trade-off parameter. The scale parameter $\alpha$ and the directional parameters $\theta,a$ in space-variant NeurTV regularization are determined through the schemes presented in \eqref{WTV_para2} \& \eqref{DTV_para}. We use the tensor factorization-based DNN \cite{TPAMI_Luo} (see Fig. \ref{fig_INR} for details) as the function $f_\Theta:{\mathbb R}^2\rightarrow{\mathbb R}$ for continuous representation. The overall algorithm of tackling the denoising model \eqref{denoising_model} is illustrated in Algorithm \ref{alg}. We use a higher-resolution meshgrid (three times larger than that of the image) to construct the space-variant NeurTV regularization $\Psi_{{\rm NeurTV}_{\theta}^\alpha}(\Theta)$.    
\begin{table}[t]
	\caption{Summary of the applications considered in experiments and the corresponding NeurTV configuration.\label{tab_exp}}\vspace{-0.2cm}
	\begin{center}
	\begin{spacing}{1.45}
		\tiny
		\setlength{\tabcolsep}{6pt}
		\begin{tabular}{ccccc}
			\toprule
		Application&Characteristic&NeurTV configuration&Model&Algorithm\\
		\midrule
		Image denoising&On meshgrid&\tabincell{c}{Space-variant NeurTV \eqref{WGV}\vspace{-0.1cm}\\(with space-variant scales and directions)}&\eqref{denoising_model}&\tabincell{c}{Iterative update+Adam\vspace{-0.1cm}\\(Algorithm \ref{alg})}\\
		Image inpainting&On meshgrid&\tabincell{c}{Space-variant NeurTV \eqref{WGV}\vspace{-0.1cm}\\(with space-variant scales and directions)}&\eqref{inpainting_model}&\tabincell{c}{Iterative update+Adam\vspace{-0.1cm}\\(Algorithm \ref{alg})}\\
		
		\tabincell{c}{Hyperspectral image\vspace{-0.1cm}\\mixed noise removal}&On meshgrid&Spatial-spectral NeurTV \eqref{SSNeurTV}&\eqref{HSI_model}&\tabincell{c}{Alternating minimization}\\
		
		Point cloud recovery&Beyond meshgrid&\tabincell{c}{Space-variant NeurTV \eqref{NeurTV_point}\vspace{-0.1cm}\\(with space-variant scales)}&\eqref{point_model}&Iterative update+Adam\\
		
		\tabincell{c}{Spatial transcriptomics\vspace{-0.1cm}\\reconstruction}&Beyond meshgrid&\tabincell{c}{Space-variant NeurTV \eqref{WGV}\vspace{-0.1cm}\\(with space-variant scales and directions)}&\eqref{ST_model}&Iterative update+Adam\\
			\bottomrule
		\end{tabular}
		\end{spacing}
	\end{center}
	\vspace{-0.6cm}
\end{table}    
By simple parameter tuning, we fix the trade-off parameter $\lambda$ as $4\times10^{-4}$ for all datasets. {The width and depth of the DNN are set to 150 and 3, respectively.} We use three baselines for image denoising, including the classical TV based on ADMM\footnote{\url{https://htmlpreview.github.io/?https://github.com/Yunhui-Gao/total-variation-image-denoising/}}, the weighted TV regularized deep image prior (DIPWTV) method \cite{DIP,DIPWTV}, and the higher-order directional TV (HDTV) method \cite{SIIMS_TDV}. {Here, the classical TV and HDTV are model-based methods. DIPWTV is a deep learning and TV-based method which uses space-variant weights to improve the flexibility of TV, and uses the observed image to train its DNN in an unsupervised manner.} We use five color images to test these methods, including ``Peppers'', ``Boat'', ``House'', ``Plane'', and ``Earth'', which are online available\footnote{\url{https://sipi.usc.edu/database/database.php}}. The resolution of images is downsampled to fit our computing resources. We add Gaussian noise with the noise standard deviation 0.1 and 0.2 to test different methods. For fair comparison, we denoise the image channel by channel for all methods. The results are quantitatively evaluated by PSNR and SSIM.\par 
\begin{figure}[t]
	\scriptsize
	\setlength{\tabcolsep}{0.9pt}
	\begin{center}
		\begin{tabular}{cccccc}
			\includegraphics[width=0.14\textwidth]{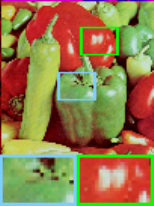}
			&\includegraphics[width=0.14\textwidth]{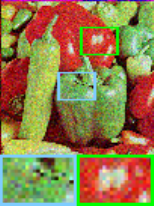}
			&\includegraphics[width=0.14\textwidth]{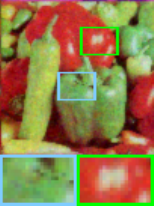}
			&\includegraphics[width=0.14\textwidth]{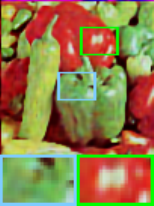}
			&\includegraphics[width=0.14\textwidth]{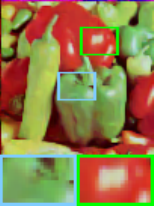}
			&\includegraphics[width=0.14\textwidth]{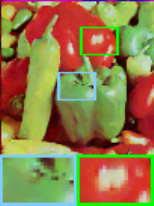}\\	
			PSNR Inf &PSNR 20.38 &
			
			PSNR 23.13 &
			PSNR 25.26&
			PSNR 25.47 &
			PSNR 25.93 \\
			\includegraphics[width=0.14\textwidth]{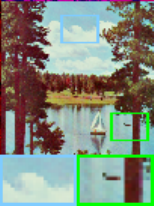}
			&\includegraphics[width=0.14\textwidth]{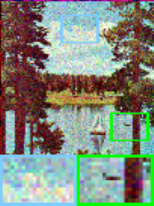}
			&\includegraphics[width=0.14\textwidth]{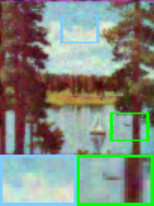}
			&\includegraphics[width=0.14\textwidth]{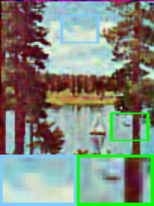}
			&\includegraphics[width=0.14\textwidth]{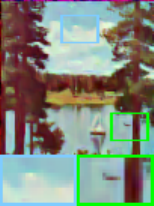}
			&\includegraphics[width=0.14\textwidth]{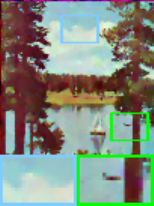}\\	
			PSNR Inf&PSNR 20.28 &			
			
			PSNR 21.92 &
			PSNR 23.77 &
			PSNR 23.39&
			PSNR 24.67\\
			\includegraphics[width=0.14\textwidth]{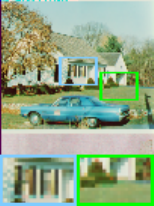}
			&\includegraphics[width=0.14\textwidth]{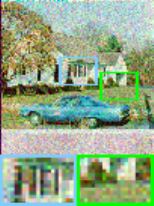}
			&\includegraphics[width=0.14\textwidth]{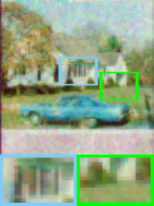}
			&\includegraphics[width=0.14\textwidth]{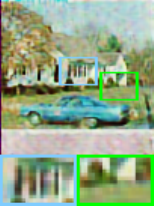}
			&\includegraphics[width=0.14\textwidth]{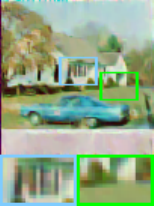}
			&\includegraphics[width=0.14\textwidth]{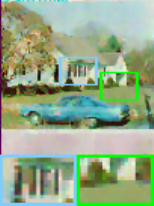}\\	
			PSNR Inf&PSNR 20.20 &
			
			PSNR 22.43 &
			PSNR 23.98&
			PSNR 23.83  &
			PSNR 24.82 \\
			Original&Noisy&TV&DIPWTV&HDTV&${\rm NeurTV}_\theta^\alpha$\\
		\end{tabular}
		\vspace{-0.2cm}
	\end{center}
	\caption{The results of image denoising by different methods on ``Peppers'', ``Boat'', and ``House'' with Gaussian noise under noise deviation 0.1. Here, DIPWTV is a weighted TV-based method with space-variant scales, and HDTV is a higher-order directional TV-based method. \label{fig_image_denoising}}
	\vspace{-0.2cm}
\end{figure}
\begin{table}[t]
	\caption{The quantitative results by different methods for image denoising.\label{tab_denoising}}\vspace{-0.2cm}
	\begin{center}
		\tiny
		\setlength{\tabcolsep}{6pt}
		\begin{tabular}{clcccccccccc}
			\toprule
			\multirow{2}*{Noise level}&\multirow{2}*{Method}&\multicolumn{2}{c}{``Peppers''}&\multicolumn{2}{c}{``Boat''}&\multicolumn{2}{c}{``House''}
			&\multicolumn{2}{c}{``Plane''}&\multicolumn{2}{c}{``Earth''}\\
			~&~&PSNR&SSIM&PSNR&SSIM&PSNR&SSIM&PSNR&SSIM&PSNR&SSIM\\
			\midrule
			\multirow{5}*{0.1}&Noisy&20.38 & 0.610 &
			20.28 & 0.642 &
			20.20 & 0.616 &
			20.12 & 0.489 &
			20.09 & 0.562  \\
			~&TV&23.13 & 0.768 &
			21.92 & 0.731 &
			22.43 & 0.710 &
			23.19 & 0.660 &
			23.97 & 0.685 \\
			~&DIPWTV&25.26 & 0.813 &
			23.77 & 0.790 &
			23.98 & 0.787 &
			25.47 & 0.804 &
			24.52 & 0.724 \\
			~&HDTV&25.47 & 0.822 &
			23.39 & 0.781 &
			23.83 & 0.792 &
			25.06 & 0.785 &
			24.57 & 0.695  \\
			~&${\rm NeurTV}_\theta^\alpha$&\bf25.93 & \bf0.826 &
			\bf24.67 & \bf0.809 &
			\bf24.82 & \bf0.823 &
			\bf26.21 & \bf0.806 &
			\bf25.12 & \bf0.731 \\
			\midrule
			\multirow{5}*{0.2}&Noisy&14.90 & 0.385 &
			14.98 & 0.431 &
			14.71 & 0.399 &
			14.92 & 0.309 &
			14.47 & 0.298 \\
			~&TV&20.48 & 0.608 &
			19.69 & 0.583 &
			20.04 & 0.559 &
			20.42 & 0.470 &
			21.01 & 0.508 \\			
			~&DIPWTV&22.21 & 0.706 &
			20.56 & 0.656 &
			20.96 & 0.651 &
			22.36 & 0.651 &
			22.23 & 0.544  \\
			~&HDTV&22.34 & 0.714 &
			20.37 & 0.630 &
			20.73 & 0.636 &
			22.12 & 0.653 &
			22.54 & 0.541 \\
			~&${\rm NeurTV}_\theta^\alpha$&\bf22.46 & \bf0.705 &
			\bf21.14 &\bf 0.664 &
			\bf21.36 &\bf 0.665 &
			\bf22.60 &\bf 0.662 &
			\bf22.60 &\bf 0.546 \\
			\bottomrule
		\end{tabular}
	\end{center}
	\vspace{-0.2cm}
\end{table}  
\begin{figure}[t]
	\scriptsize
	\setlength{\tabcolsep}{0.9pt}
	\begin{center}
		\begin{tabular}{cccccc}
			\includegraphics[width=0.14\textwidth]{plane_2p2gt.pdf}
			&\includegraphics[width=0.14\textwidth]{plane_2p2.pdf}
			&\includegraphics[width=0.14\textwidth]{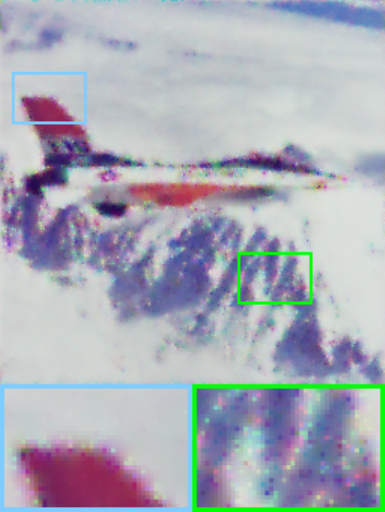}
			&\includegraphics[width=0.14\textwidth]{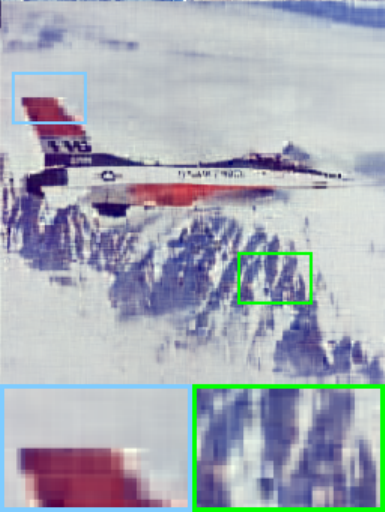}
			&\includegraphics[width=0.14\textwidth]{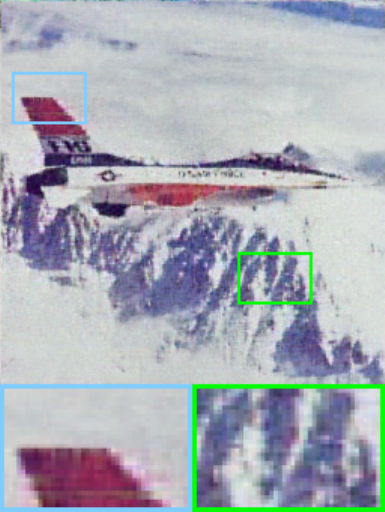}
			&\includegraphics[width=0.14\textwidth]{plane_2p2gtv.pdf}\\	
			PSNR Inf&PSNR 3.65 &
			
			PSNR 23.66&
			PSNR 25.43  &
			PSNR 25.65  &
			PSNR 26.13  \\
			\includegraphics[width=0.14\textwidth]{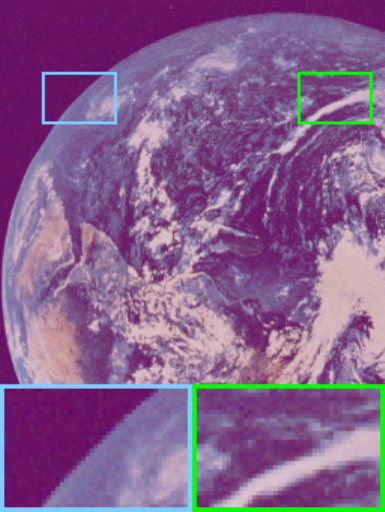}
			&\includegraphics[width=0.14\textwidth]{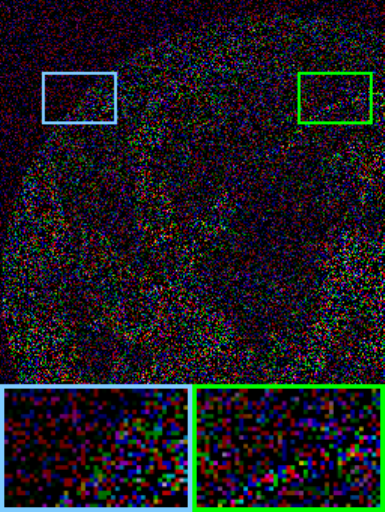}
			&\includegraphics[width=0.14\textwidth]{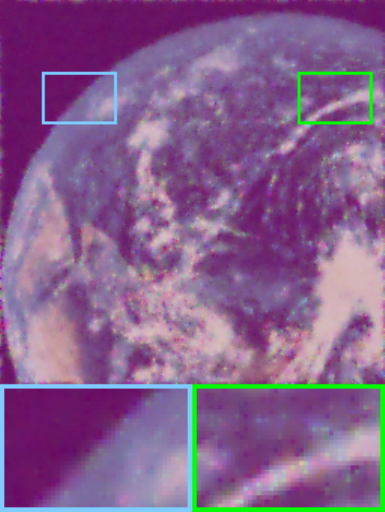}
			&\includegraphics[width=0.14\textwidth]{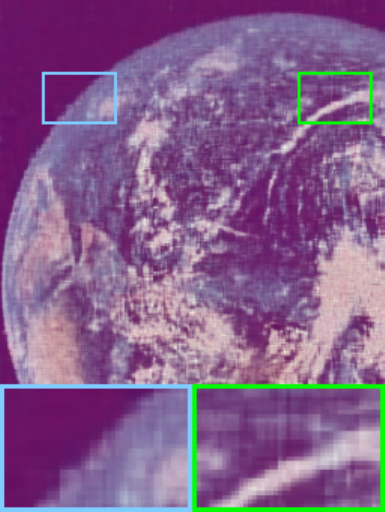}
			&\includegraphics[width=0.14\textwidth]{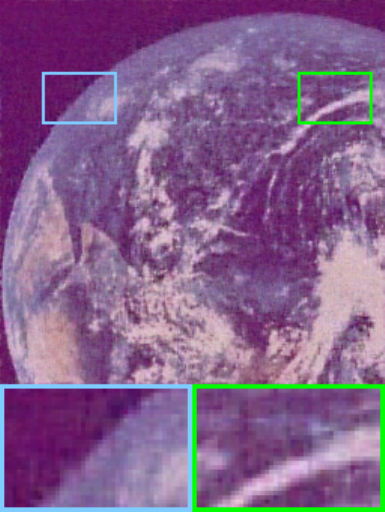}
			&\includegraphics[width=0.14\textwidth]{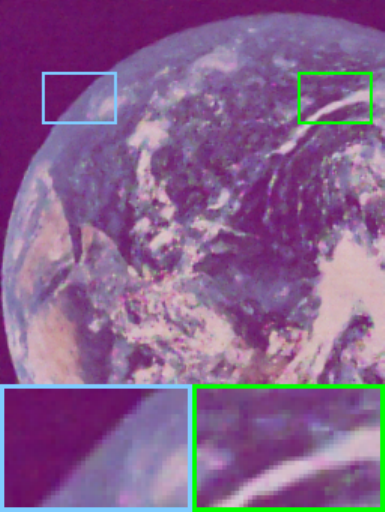}\\	
			PSNR Inf&PSNR 6.70  &  
			
			PSNR 25.31&
			PSNR 26.78  &
			PSNR 27.14 &
			PSNR 27.18\\
			Original&Noisy&STTC&HLRTF&TCTV&${\rm NeurTV}_\theta^\alpha$\\
		\end{tabular}
		\vspace{-0.2cm}
	\end{center}
	\caption{The image inpainting results by different methods on ``Plane'' and ``Earth'' with sampling rate 0.2. Here, STTC and HLRTF are TV-based methods, and TCTV is a tensor correlated TV-based method. \label{fig_image_inpainting}}
	\vspace{-0.2cm}
\end{figure}
\begin{table}[t]
	\caption{The quantitative results by different methods for image inpainting.\label{tab_inpainting}}\vspace{-0.2cm}
	\begin{center}
		\tiny
		\setlength{\tabcolsep}{6pt}
		\begin{tabular}{clcccccccccc}
			\toprule
			\multirow{2}*{Sampling rate}&\multirow{2}*{Method}&\multicolumn{2}{c}{``Peppers''}&\multicolumn{2}{c}{``Boat''}&\multicolumn{2}{c}{``House''}
			&\multicolumn{2}{c}{``Plane''}&\multicolumn{2}{c}{``Earth''}\\
			~&~&PSNR&SSIM&PSNR&SSIM&PSNR&SSIM&PSNR&SSIM&PSNR&SSIM\\
			\midrule
			\multirow{5}*{0.1}&Observed&7.02 & 0.037 &
			5.63 & 0.023 &
			4.27 & 0.014 &
			3.14 & 0.009 &
			6.18 & 0.017 \\
			~&STTC&21.75 & 0.667 &
			20.43 & 0.595 &
			20.75 & 0.630 &
			22.40 & 0.720 &
			23.71 & 0.605 \\
			~&HLRTF&22.48 & 0.642 &
			21.10 & 0.575 &
			22.01 & 0.654 &
			23.31 & 0.735 &
			24.12 & 0.604 \\
			~&TCTV&21.39 & 0.569 &
			20.70 & 0.589 &
			21.89 & 0.662 &
			23.22 & 0.725 &
			24.64 & 0.654 \\
			~&${\rm NeurTV}_\theta^\alpha$&\bf23.74 & \bf0.725 &
			\bf21.77 & \bf0.650 &
			\bf22.24 & \bf0.655 &
			\bf24.10 & \bf0.760 &
			\bf24.94 & \bf0.661 \\
			\midrule
			\multirow{5}*{0.2}&Observed&7.53 & 0.063 &
			6.12 & 0.05 &
			4.79 & 0.031 &
			3.65 & 0.019 &
			6.70 & 0.032 \\
			~&STTC&23.45 & 0.758 &
			21.86 & 0.693 &
			22.19 & 0.721 &
			23.66 & 0.783 &
			25.31 & 0.708 \\
			~&HLRTF&24.04 & 0.725 &
			22.65 & 0.675 &
			23.35 & 0.716 &
			25.43 & 0.816 &
			26.78 & 0.755 \\
			~&TCTV&23.94 & 0.698 &
			22.72 & 0.698 &
			\bf24.20 & 0.767 &
			25.65 & 0.818 &
			27.14 & \bf0.766 \\
			~&${\rm NeurTV}_\theta^\alpha$&\bf25.59 & \bf0.793 &
			\bf23.44 & \bf0.747 &
			24.11 & \bf0.770 &
			\bf26.13 & \bf0.840 &
			\bf27.18 & \bf0.766 \\
			\bottomrule
		\end{tabular}
	\end{center}
	\vspace{-0.2cm}
\end{table}    
The results of image denoising are shown in Table \ref{tab_denoising} and Fig. \ref{fig_image_denoising}. It can be observed that NeurTV considerably outperforms TV-based methods including the classical TV, DIPWTV, and HDTV. Moreover, the competing TV-based methods are specifically designed for image denoising, while our NeurTV method can be applied for diverse imaging applications on and beyond meshgrid. Hence, it is rational to say that our method has a wider applicability for different imaging applications.
\subsection{Image Inpainting (on Meshgrid)} For image inpainting, the observed image ${\cal O}\in{\mathbb R}^{n_1\times n_2\times 3}$ is incompleted and the index set of observed pixels is denoted by $\Omega$. The goal is to infer the values of the image on the complementary set of $\Omega$. {We consider the following optimization model based on the derivative-based space-variant NeurTV for image inpainting:}\begin{equation}\small\label{inpainting_model}
		\min_{\Theta}\sum_{(i,j,k)\in\Omega}({\cal O}_{(i,j,k)}-f_\Theta(i,j,k))^2+\lambda\Psi_{{\rm NeurTV}_{\theta}^\alpha}(\Theta),
	\end{equation}\noindent
	where $\Psi_{{\rm NeurTV}_{\theta}^\alpha}(\Theta)$ denotes the space-variant NeurTV defined in \eqref{WGV}. The scale parameter $\alpha$ and the directional parameters $\theta,a$ in the space-variant NeurTV regularization are determined through \eqref{WTV_para2} \& \eqref{DTV_para}. Here, we impose the space-variant NeurTV on the two spatial dimensions of the color image. For the third dimension (i.e., the channel dimension), we do not impose the NeurTV constraint. We use the tensor factorization-based DNN \cite{TPAMI_Luo} (see Fig. \ref{fig_INR} for details) as the function $f_\Theta:{\mathbb R}^3\rightarrow{\mathbb R}$ for continuous representation in model \eqref{inpainting_model}. We use a higher-resolution meshgrid (three times larger than that of the image) to define the space-variant NeurTV in model \eqref{inpainting_model}. The overall algorithm of tackling the model \eqref{inpainting_model} is similar to Algorithm \ref{alg}. We fix the trade-off parameter $\lambda$ as $3.5\times10^{-5}$ for all datasets. {The width and depth of the DNN are set to 300 and 3, respectively.} We use three baselines for image inpainting, including the tensor tree decomposition and TV-based method STTC \cite{STTC}, the deep tensor decomposition and TV-based method HLRTF \cite{HLRTF}, and the tensor correlated TV-based method TCTV \cite{TCTV}. {Here, STTC and TCTV are model-based methods. HLRTF is a DNN-based method which uses the observed data to train its DNN in an unsupervised manner.} We use five color images for testing, including ``Peppers'', ``Boat'', ``House'', ``Plane'', and ``Earth''. We consider the sampling rate 0.1 and 0.2 to produce incompleted images.\par 
	\begin{figure}[t]
	\scriptsize
	\setlength{\tabcolsep}{0.9pt}
	\begin{center}
		\begin{tabular}{cccccc}
			\includegraphics[width=0.14\textwidth]{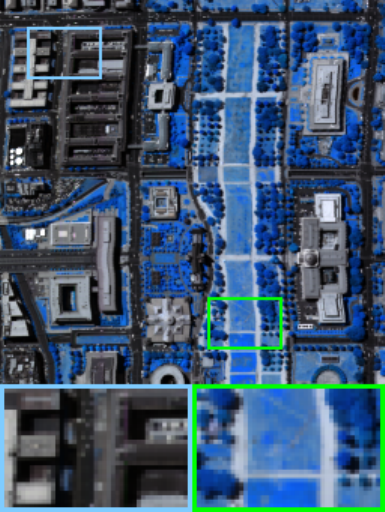}
			&\includegraphics[width=0.14\textwidth]{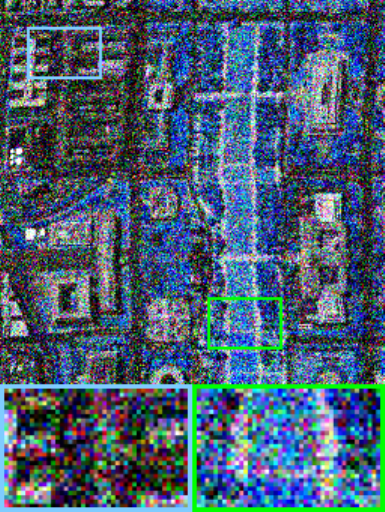}
			&\includegraphics[width=0.14\textwidth]{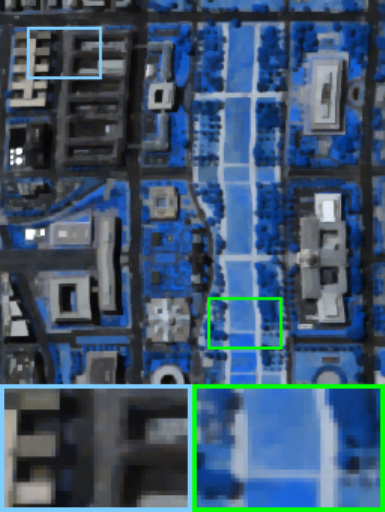}
			&\includegraphics[width=0.14\textwidth]{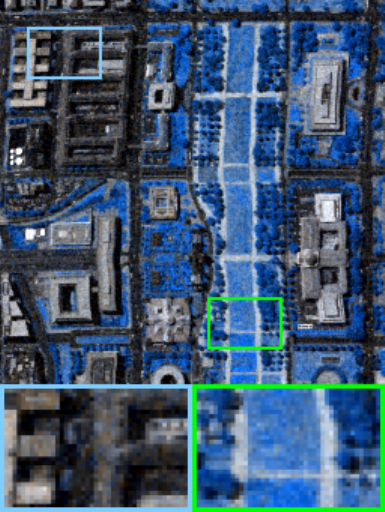}
			&\includegraphics[width=0.14\textwidth]{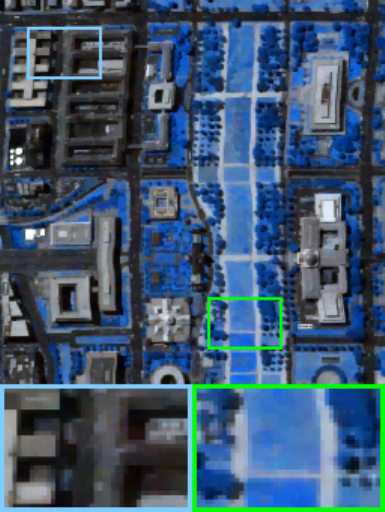}
			&\includegraphics[width=0.14\textwidth]{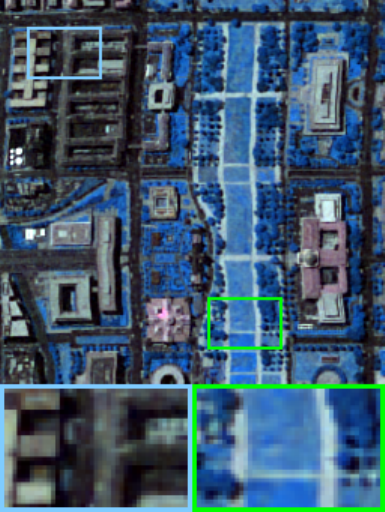}\\	
			PSNR Inf &PSNR 15.65  &
			
			PSNR 29.09 &
			PSNR 30.98  &
			PSNR 31.98  &
			PSNR 32.05  \\
			\includegraphics[width=0.14\textwidth]{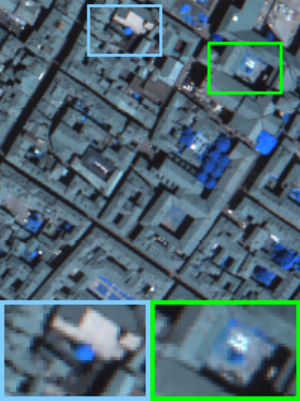}
			&\includegraphics[width=0.14\textwidth]{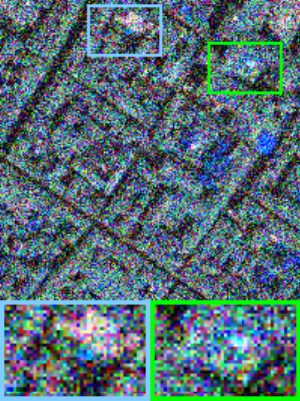}
			&\includegraphics[width=0.14\textwidth]{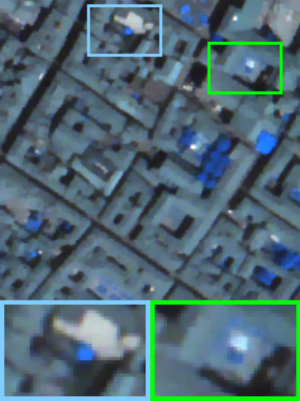}
			&\includegraphics[width=0.14\textwidth]{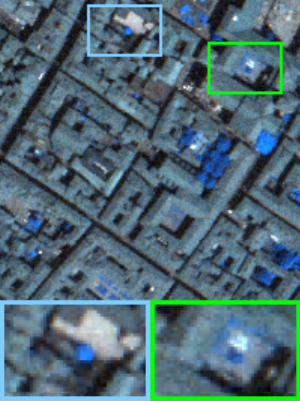}
			&\includegraphics[width=0.14\textwidth]{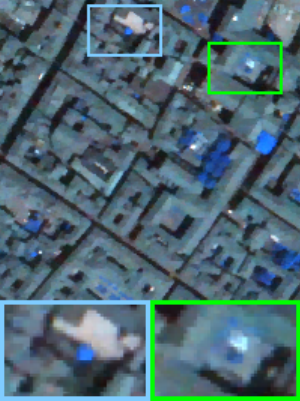}
			&\includegraphics[width=0.14\textwidth]{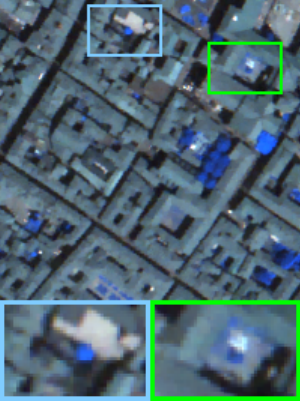}\\	
			PSNR Inf &PSNR 15.02 &
			
			PSNR 29.49  &
			PSNR 31.57 &
			PSNR 30.24 &
			PSNR 32.05  \\
			Original&Noisy&LRTDTV&LRTF-DFR&RCTV&NeurSSTV\\
		\end{tabular}
		\vspace{-0.2cm}
	\end{center}
	\caption{The results of HSI denoising by different methods on ``WDC'' and ``Pavia'' with Gaussian noise under noise deviation 0.2. Here, LRTDTV is an SSTV-based method. LRTF-DFR and RCTV are TV-based methods. \label{fig_HSI}}
	\vspace{-0.2cm}
\end{figure}
\begin{table}[t]
	\caption{The quantitative results by different methods for HSI mixed noise removal.\label{tab_HSI}}\vspace{-0.2cm}
	\begin{center}
		\tiny
		\setlength{\tabcolsep}{6pt}
		\begin{tabular}{clcccccccccc}
			\toprule
			\multirow{2}*{Noise level}&\multirow{2}*{Method}&\multicolumn{2}{c}{``WDC''}&\multicolumn{2}{c}{``Pavia''}&\multicolumn{2}{c}{``Cloth''}
			&\multicolumn{2}{c}{``Cups''}&\multicolumn{2}{c}{``Fruits''}\\
			~&~&PSNR&SSIM&PSNR&SSIM&PSNR&SSIM&PSNR&SSIM&PSNR&SSIM\\
			\midrule
			\multirow{5}*{\tabincell{c}{Gaussian 0.2}}&Observed&15.65 & 0.249 &
			15.02 & 0.223 &
			19.35 & 0.285 &
			14.77 & 0.075 &
			21.89 & 0.088 \\
			~&LRTDTV&
			29.09 & 0.826 &
			29.49 & 0.850 &
			30.36 & 0.776 &
			33.50 & 0.902 &
			38.77 & 0.843 \\
			~&LRTF-DFR&
			30.98 & 0.887 &
			31.57 & 0.910 &
			31.52 & 0.804 &
			34.30 & 0.899 &
			38.51 & 0.804 \\
			~&RCTV&
			31.98 & \bf0.917 &
			30.24 & 0.883 &
			29.73 & 0.739 &
			30.75 & 0.719 &
			36.54 & 0.718 \\
			~&NeurSSTV&
			\bf32.05 & \bf0.917 &
			\bf32.05 & \bf0.916 &
			\bf31.64 & \bf0.828 &
			\bf36.91 & \bf0.960 &
			\bf41.18 & \bf0.926 \\
			\midrule
			\multirow{5}*{\tabincell{c}{Gaussian 0.2\\Impulse 0.1}}&Observed&
			12.48 & 0.153 &
			12.57 & 0.142 &
			16.60 & 0.188 &
			12.57 & 0.049 &
			17.72 & 0.048 \\
			~&LRTDTV&
			27.92 & 0.790 &
			28.46 & 0.821 &
			29.36 & 0.733 &
			32.05 & 0.856 &
			35.92 & 0.686 \\
			~&LRTF-DFR&
			30.86 & 0.884 &
			31.12 & 0.903 &
			31.19 & 0.795 &
			33.32 & 0.855 &
			37.86 & 0.785 \\
			~&RCTV&
			30.84 & 0.897 &
			29.52 & 0.862 &
			29.13 & 0.704 &
			30.14 & 0.697 &
			34.77 & 0.631 \\
			~&NeurSSTV&
			\bf31.63 & \bf0.907 &
			\bf31.59 & \bf0.912 &
			\bf31.39 & \bf0.808 &
			\bf34.87 & \bf0.909 &
			\bf39.76 & \bf0.844 \\
			\bottomrule
		\end{tabular}
	\end{center}
\end{table}    
	The results of image inpainting are shown in Table \ref{tab_inpainting} and Fig. \ref{fig_image_inpainting}. It is seen that NeurTV generally has better performances than other TV-based methods, including STTC, HLRTF, and TCTV. Specifically, NeurTV quantitatively outperforms other competing methods in most cases, and could also obtain more pleasant visual results for image inpainting as shown in Fig. \ref{fig_image_inpainting}. These results further validate the effectiveness of our NeurTV on meshgrid.
	\subsection{Hyperspectral Image Mixed Noise Removal (on Meshgrid)}
	Then, we consider the HSI mixed noise removal \cite{LRTF-DFR,LRTDTV}. We assume that the observed HSI ${\cal O}\in{\mathbb R}^{n_1\times n_2\times n_3}$ is corrupted by Gaussian and impulse noise. {We then consider the following optimization model based on the proposed second-order derivative-based NeurSSTV \eqref{SSNeurTV} to capture spatial-spectral local smoothness for HSI mixed noise removal:}
	\begin{equation}\small\label{HSI_model}
		\min_{\Theta,{\cal S}}\sum_{(i,j,k)\in{\rm meshgrid}}({\cal O}_{(i,j,k)}-f_\Theta(i,j,k)-{\cal S}_{(i,j,k)})^2+\lambda\Psi_{\rm NeurSSTV}(\Theta)+\gamma\lVert{\cal S}\rVert_{\ell_1},
	\end{equation}\noindent
	where $\Psi_{\rm NeurSSTV}(\Theta)$ denotes the NeurSSTV defined in \eqref{SSNeurTV} and $\cal S$ denotes the sparse noise to be estimated. We use the tensor factorization-based DNN \cite{TPAMI_Luo} (see Fig. \ref{fig_INR}) as the continuous representation $f_\Theta:{\mathbb R}^3\rightarrow{\mathbb R}$. We use a higher-resolution meshgrid (three times larger than the resolution of HSI) in all three dimensions to define the NeurSSTV regularization. To tackle model \eqref{inpainting_model}, we tackle the following sub-problems alternately: 
	\begin{equation}\small
		\begin{split}
			&\min_{\Theta}\sum_{(i,j,k)\in{\rm meshgrid}}({\cal O}_{(i,j,k)}-f_\Theta(i,j,k)-{\cal S}_{(i,j,k)})^2+\lambda\Psi_{\rm NeurSSTV}(\Theta),\\
			&\min_{{\cal S}}\sum_{(i,j,k)\in{\rm meshgrid}}({\cal O}_{(i,j,k)}-f_\Theta(i,j,k)-{\cal S}_{(i,j,k)})^2+\gamma\lVert{\cal S}\rVert_{\ell_1},
		\end{split}
	\end{equation}\noindent
where the first problem is tackled by the Adam algorithm \cite{Adam} and the second one can be tackled by the soft thresholding method. 
\begin{figure}[t]
	\scriptsize
	\setlength{\tabcolsep}{0.9pt}
	\begin{center}
		\begin{tabular}{cccccc}
			\includegraphics[width=0.14\textwidth]{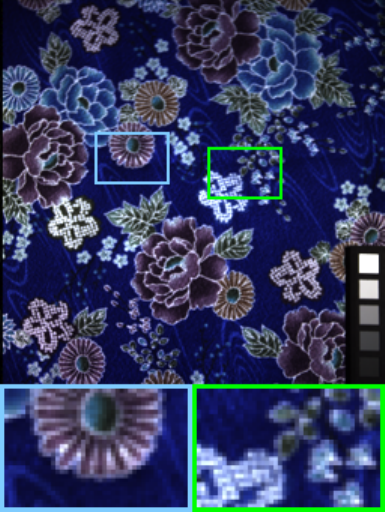}
			&\includegraphics[width=0.14\textwidth]{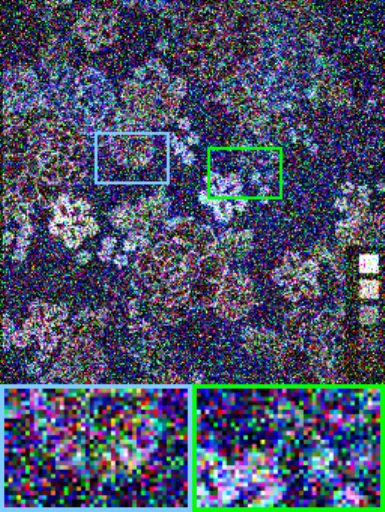}
			&\includegraphics[width=0.14\textwidth]{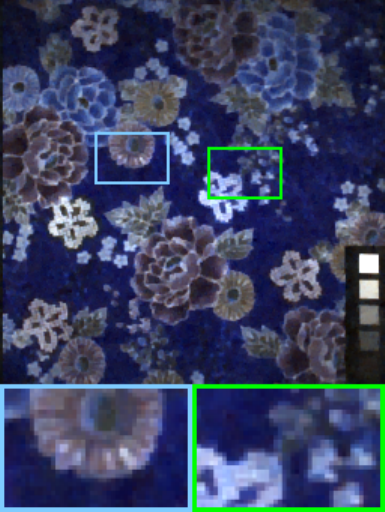}
			&\includegraphics[width=0.14\textwidth]{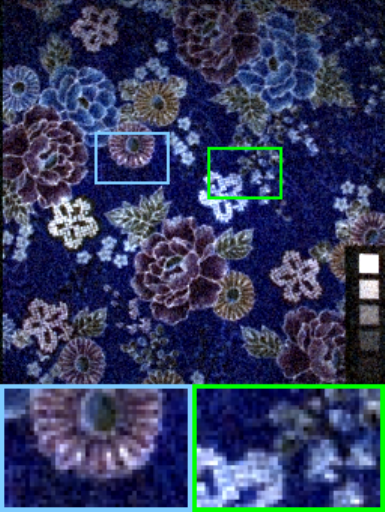}
			&\includegraphics[width=0.14\textwidth]{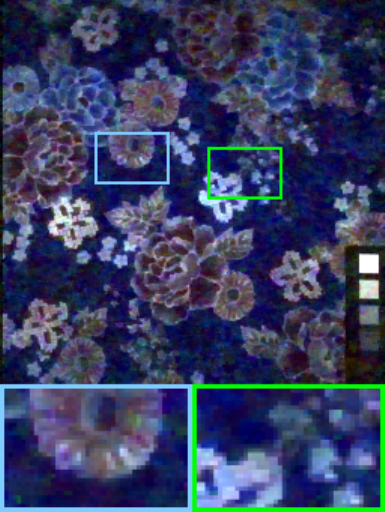}
			&\includegraphics[width=0.14\textwidth]{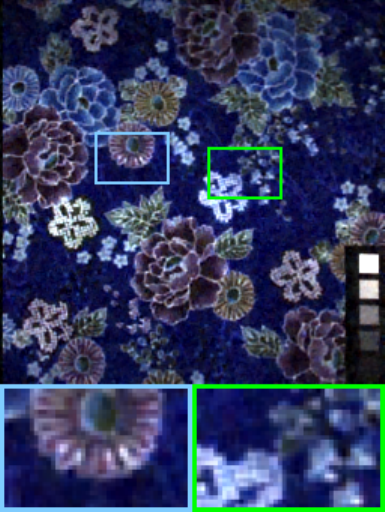}\\	
			PSNR Inf&PSNR 16.60  &
			
			PSNR 29.36  &
			PSNR 31.19 &
			PSNR 29.13  &
			PSNR 31.39 \\
			\includegraphics[width=0.14\textwidth]{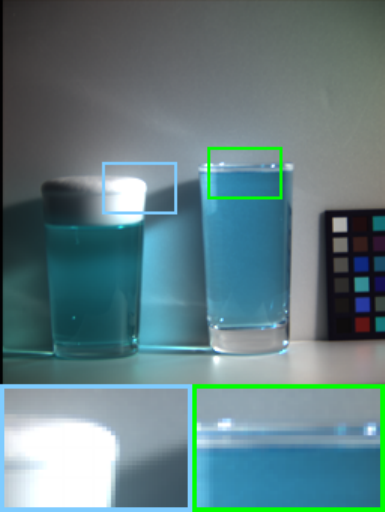}
			&\includegraphics[width=0.14\textwidth]{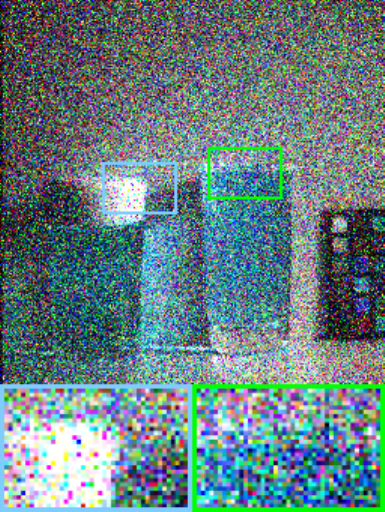}
			&\includegraphics[width=0.14\textwidth]{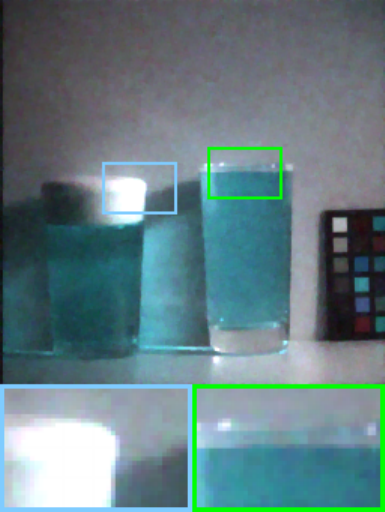}
			&\includegraphics[width=0.14\textwidth]{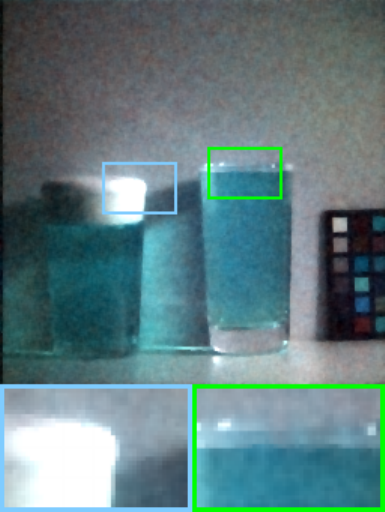}
			&\includegraphics[width=0.14\textwidth]{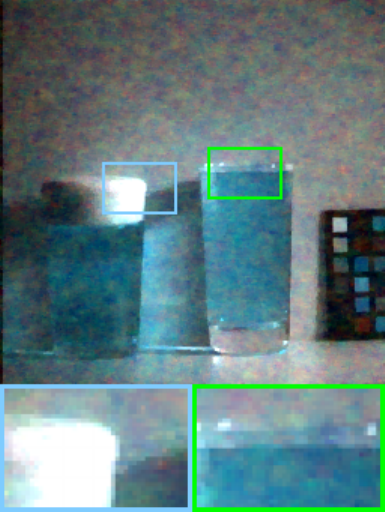}
			&\includegraphics[width=0.14\textwidth]{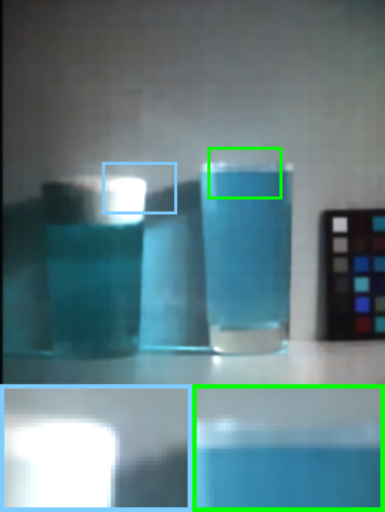}\\				
			PSNR Inf&PSNR 12.57  &
			
			PSNR 32.05 &
			PSNR 33.32 &
			PSNR 30.14 &
			PSNR 34.87  \\
			\includegraphics[width=0.14\textwidth]{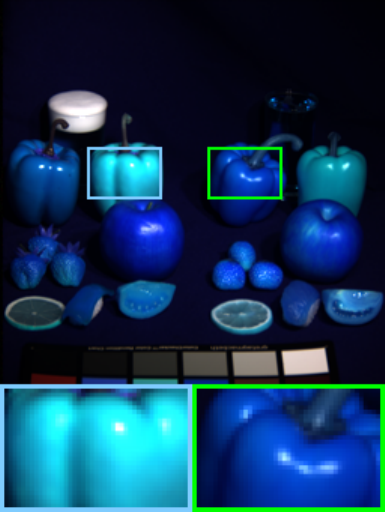}
			&\includegraphics[width=0.14\textwidth]{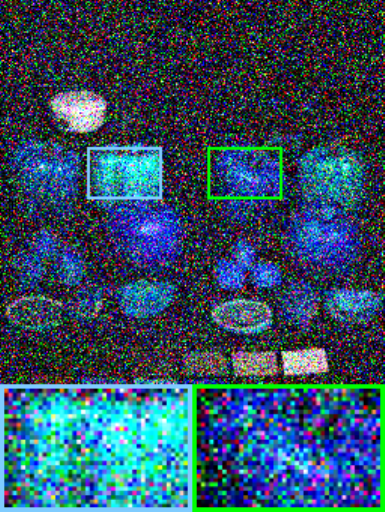}
			&\includegraphics[width=0.14\textwidth]{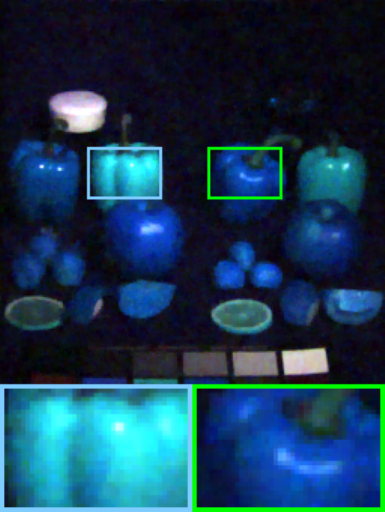}
			&\includegraphics[width=0.14\textwidth]{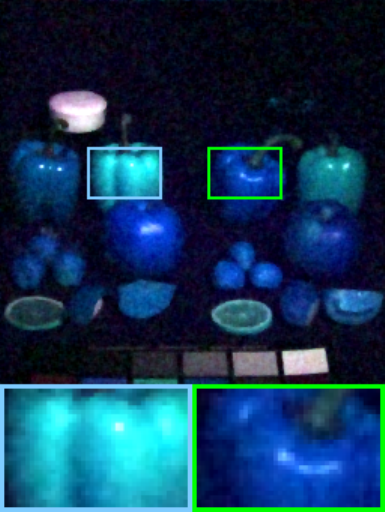}
			&\includegraphics[width=0.14\textwidth]{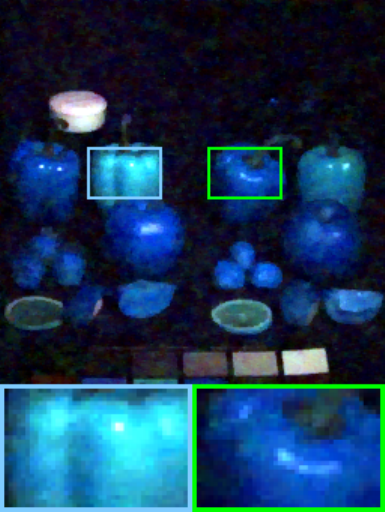}
			&\includegraphics[width=0.14\textwidth]{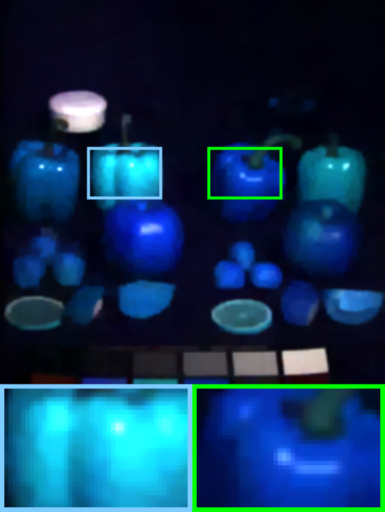}\\		
			PSNR Inf&PSNR 17.72  &
			
			PSNR 35.92  &
			PSNR 37.86 &
			PSNR 34.77 &
			PSNR 39.76  \\
			Noisy&Original&LRTDTV&LRTF-DFR&RCTV&NeurSSTV\\
		\end{tabular}
		\vspace{-0.2cm}
	\end{center}
	\caption{The results of HSI mixed noise removal by different methods on ``Cloth'', ``Cups'', and ``Fruits'' with mixed Gaussian noise (noise deviation 0.2) and impulse noise (sampling rate 0.1). Here, LRTDTV is an SSTV-based method. LRTF-DFR and RCTV are TV-based methods.\label{fig_HSI2}}
	\vspace{-0.2cm}
\end{figure}
We set the trade-off parameters $\lambda=3\times10^{-4}$ and $\gamma=0.25$ for cases with both Gaussian and impulse noise. {The width and depth of the DNN are set to 300 and 3, respectively.} For cases with only Gaussian noise, we set $\lambda=3\times10^{-4}$ and $\gamma=10$. We use three baselines for HSI mixed noise removal, including the tensor decomposition and SSTV-based method LRTDTV \cite{LRTDTV}, the double factor TV-based method LRTF-DFR \cite{LRTF-DFR}, and the representative coefficient TV-based method RCTV \cite{RCTV}. {Here, all competing methods are model-based methods. Especially, LRTDTV is an SSTV-based method. LRTF-DFR and RCTV are TV-based methods.} We use five HSIs to test these methods, including ``WDC'', ``Pavia''\footnote{\url{https://engineering.purdue.edu/~biehl/MultiSpec/hyperspectral.html}}, and three HSIs from the CAVE dataset \cite{cave}, named ``Cloth'', ``Cups'', and ``Fruits''. We consider two noisy cases. The first case contains Gaussian noise with noise deviation 0.2. The second case contains mixed Gaussian noise with deviation 0.2 and sparse impulse noise with sampling rate 0.1.\par 
The results of HSI mixed noise removal are shown in Table \ref{tab_HSI} and Figs. \ref{fig_HSI} \& \ref{fig_HSI2}. It can be observed that NeurSSTV consistently outperforms other TV-based methods LRTDTV, LRTF-DFR, and RCTV. According to Fig. \ref{fig_HSI}, NeurSSTV could obtain better visual results than other competing methods. Specifically, NeurSSTV could attenuate noise well and also preserve the details of the HSI better. 
\begin{figure}[t]
	\scriptsize
	\setlength{\tabcolsep}{0.9pt}
	\begin{center}
		\begin{tabular}{cccccccc}
			\vspace{-0.2cm}
			\includegraphics[width=0.115\textwidth]{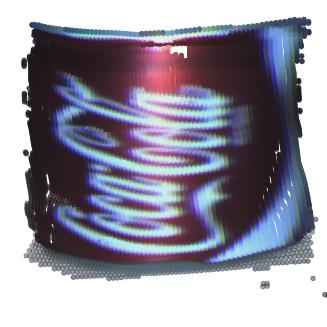}
			&\includegraphics[width=0.115\textwidth]{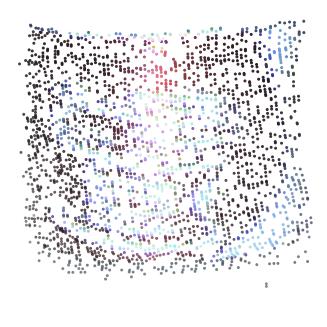}
			&\includegraphics[width=0.115\textwidth]{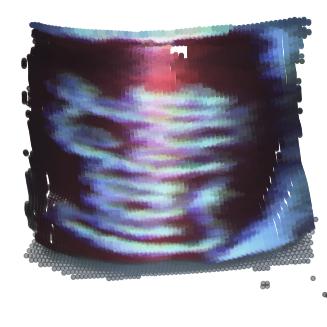}
			&\includegraphics[width=0.115\textwidth]{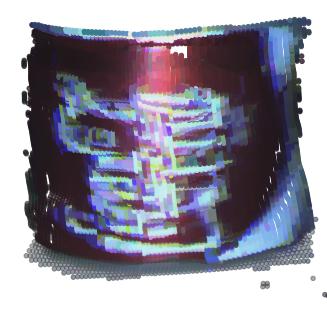}
			&\includegraphics[width=0.115\textwidth]{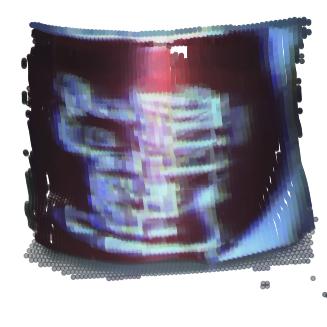}
			&\includegraphics[width=0.115\textwidth]{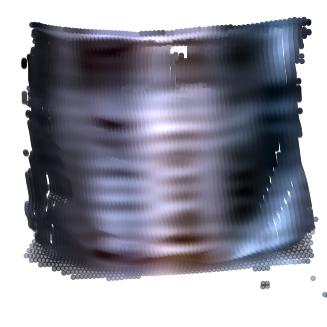}
			&\includegraphics[width=0.115\textwidth]{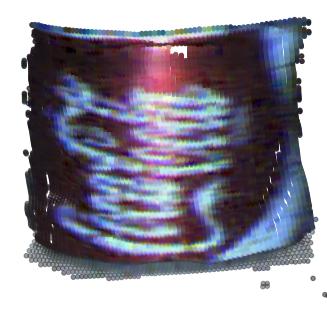}
			&\includegraphics[width=0.115\textwidth]{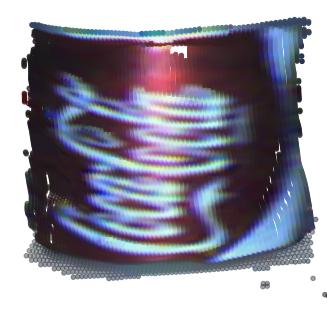}\\	
			\vspace{0.2cm}
			MSE 0&
			MSE \--\--&
			MSE 0.199 &
			MSE 0.247 &
			MSE 0.197 &
MSE 0.324 &
MSE 0.173 &
			MSE 0.118 \\
			\includegraphics[width=0.115\textwidth]{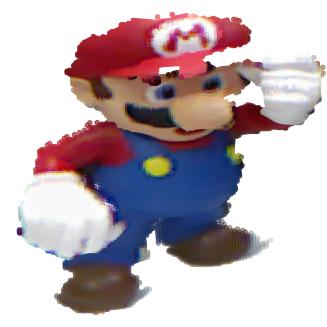}
			&\includegraphics[width=0.115\textwidth]{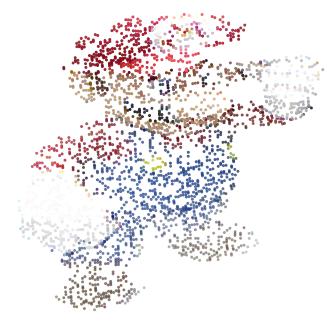}
			&\includegraphics[width=0.115\textwidth]{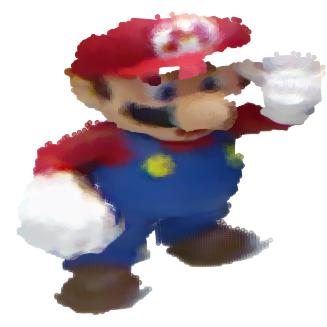}
			&\includegraphics[width=0.115\textwidth]{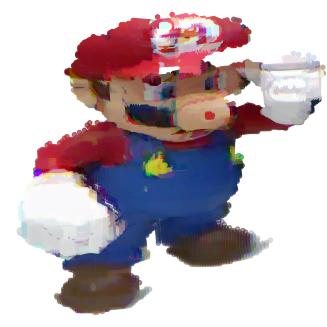}
			&\includegraphics[width=0.115\textwidth]{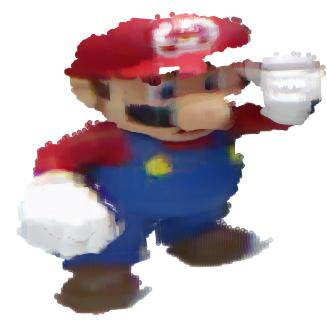}
			&\includegraphics[width=0.115\textwidth]{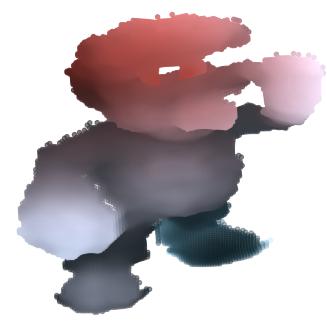}
						&\includegraphics[width=0.115\textwidth]{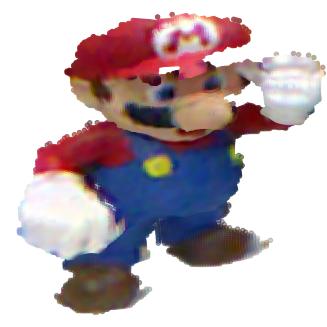}
			&\includegraphics[width=0.115\textwidth]{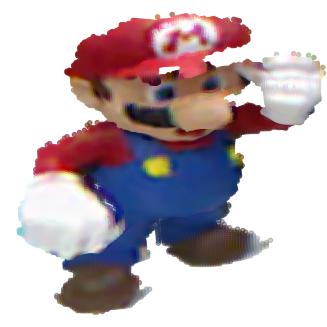}\\	
			MSE 0&MSE \--\--&
			MSE 0.123 &
			MSE 0.161 &
			MSE 0.120 &
MSE 0.373 &
MSE 0.106 &
			MSE 0.090 \\
			Original&Observed&KNR&DT&RF&FSA-HTF&SIREN&${\rm NeurTV}^\alpha$\\
		\end{tabular}
		\vspace{-0.2cm}
	\end{center}
	\caption{The results of point cloud recovery by different methods on ``Cola'' and ``Mario'' with sampling rate 0.2.  \label{fig_point}}
	\vspace{-0.2cm}
\end{figure} 
\begin{table}[t]
	\caption{The quantitative results by different methods for point cloud recovery.\label{tab_point}}\vspace{-0.2cm}
	\begin{center}
		\tiny
\setlength{\tabcolsep}{4pt}
		\begin{tabular}{clcccccccccc}
			\toprule
			\multirow{2}*{Sampling rate}&\multirow{2}*{Method}&\multicolumn{2}{c}{``Cola''}&\multicolumn{2}{c}{``Mario''}&\multicolumn{2}{c}{``Duck''}&\multicolumn{2}{c}{``Squirrel''}&\multicolumn{2}{c}{``Rabbit''}\\
			~&~&MSE&R-Square&MSE&R-Square&MSE&R-Square&MSE&R-Square&MSE&R-Square\\
			\midrule
			\multirow{6}*{0.1}
			&KNR&0.264&0.787&0.162&0.912&0.108&0.891&0.153&0.820&0.135&0.841\\
			~&DT&0.332&0.692&0.211&0.856&0.133&0.841&0.164&0.802&0.170&0.766\\
			~&RF&0.257&0.799&0.152&0.922&0.098&0.910&0.140&0.851&0.130&0.853\\
			~&FSA-HTF&0.467&0.447&0.376&0.530&0.259&0.413&0.259&0.486&0.269&0.371\\
			~&SIREN&0.209&0.867&0.138&0.936&0.097&0.911&0.129&0.874&0.119&0.878\\
			~&${\rm NeurTV}^\alpha$&\bf0.191&\bf0.889&\bf0.126&\bf0.947&\bf0.087&\bf0.929&\bf0.126&\bf0.878&\bf0.108&\bf0.899\\
			\midrule
			\multirow{6}*{0.2}
			&KNR&0.199&0.879&0.123&0.949&0.082&0.937&0.114&0.901&0.104&0.906\\
			~&DT&0.247&0.821&0.161&0.915&0.095&0.917&0.133&0.868&0.126&0.867\\
			~&RF&0.197&0.882&0.120&0.952&0.071&0.954&0.105&0.916&0.099&0.915 \\
			~&FSA-HTF&0.324&0.679&0.373&0.531&0.250&0.417&0.245&0.541&0.266&0.385\\
			~&SIREN&0.173&0.914&0.106&0.962&0.078&0.944&0.103&0.919&0.082&0.942\\
			~&${\rm NeurTV}^\alpha$&\bf0.118&\bf0.957&\bf0.090&\bf0.973&\bf0.067&\bf0.958&\bf0.085&\bf0.945&\bf0.079&\bf0.945\\
			\bottomrule
		\end{tabular}
	\end{center}
\end{table}   
\begin{figure}[t]
	\scriptsize
	\setlength{\tabcolsep}{0.9pt}
	\begin{center}
		\begin{tabular}{cccccccc}
			\vspace{-0.1cm}
			\includegraphics[width=0.115\textwidth]{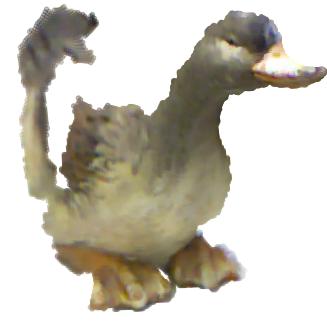}
			&\includegraphics[width=0.115\textwidth]{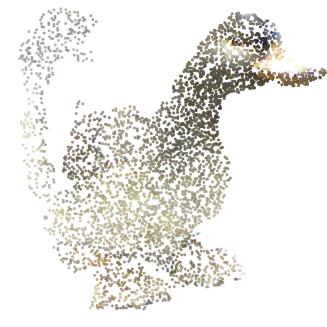}
			&\includegraphics[width=0.115\textwidth]{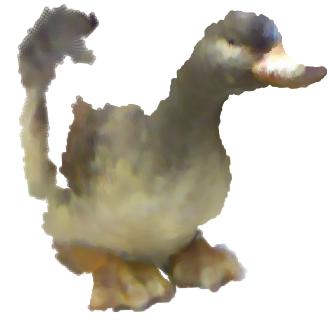}
			&\includegraphics[width=0.115\textwidth]{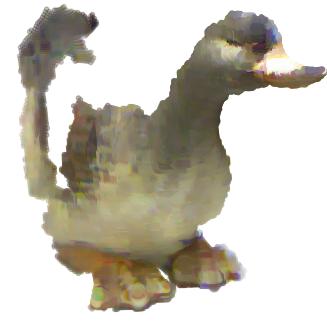}
			&\includegraphics[width=0.115\textwidth]{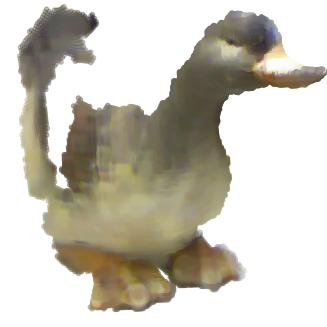}
					&\includegraphics[width=0.115\textwidth]{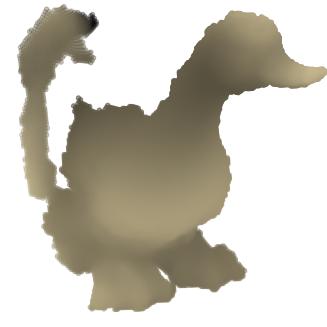}
					&\includegraphics[width=0.115\textwidth]{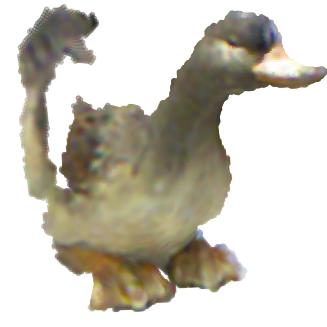}
			&\includegraphics[width=0.115\textwidth]{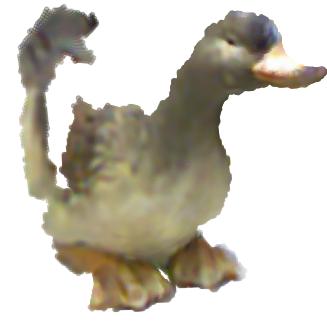}\\	
			MSE 0 &MSE \--\--&	
			
			MSE 0.082 &
			MSE 0.095 &
			MSE 0.071 &
MSE 0.250 &
MSE 0.078 &
			MSE 0.067 \\
			\vspace{-0.1cm}
			\includegraphics[width=0.115\textwidth]{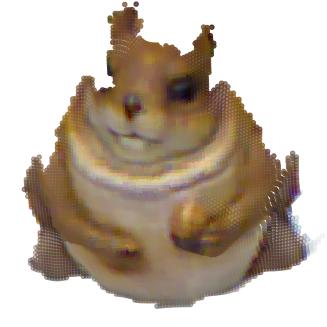}
			&\includegraphics[width=0.115\textwidth]{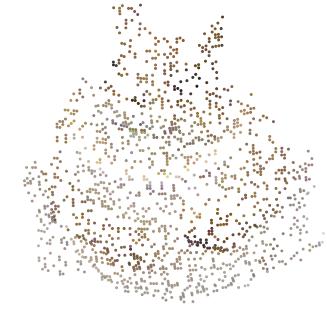}
			&\includegraphics[width=0.115\textwidth]{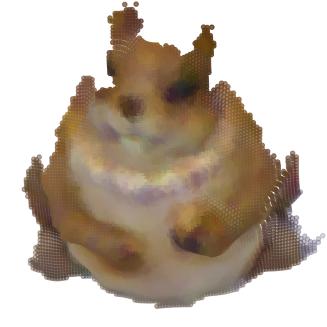}
			&\includegraphics[width=0.115\textwidth]{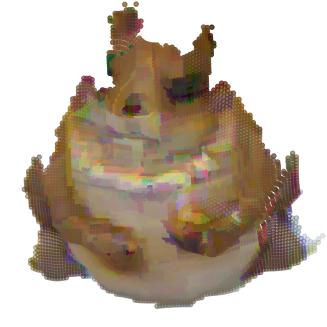}
			&\includegraphics[width=0.115\textwidth]{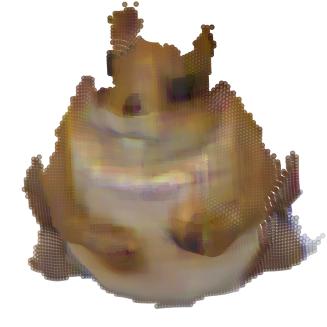}
			&\includegraphics[width=0.115\textwidth]{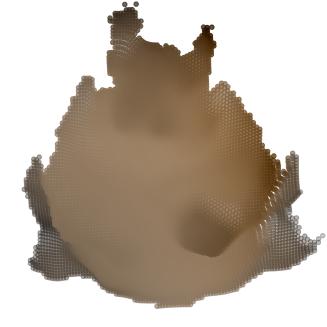}
			&\includegraphics[width=0.115\textwidth]{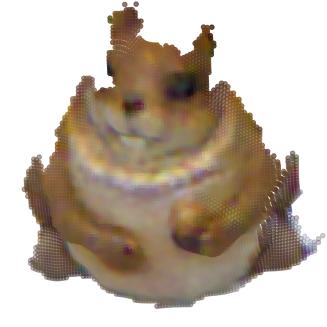}
			&\includegraphics[width=0.115\textwidth]{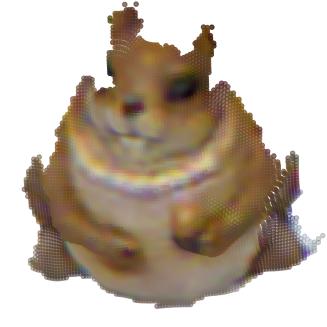}\\	
			MSE 0 &MSE \--\--&	
			
			MSE 0.114 &
			MSE 0.133 &
			MSE 0.105 &
			MSE 0.245 &
			MSE 0.103 &
			MSE 0.085 \\
			\vspace{-0.1cm}
			\includegraphics[width=0.115\textwidth]{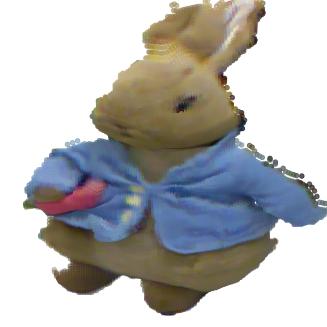}
			&\includegraphics[width=0.115\textwidth]{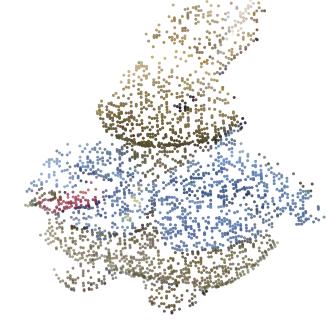}
			&\includegraphics[width=0.115\textwidth]{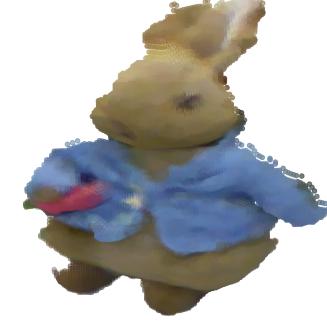}
			&\includegraphics[width=0.115\textwidth]{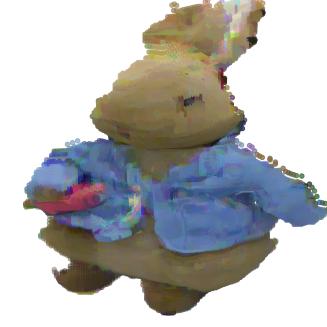}
			&\includegraphics[width=0.115\textwidth]{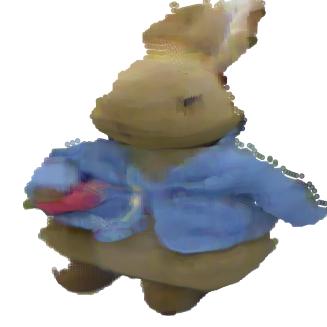}
			&\includegraphics[width=0.115\textwidth]{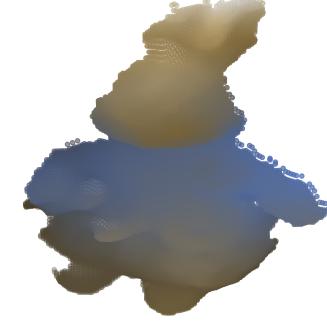}
						&\includegraphics[width=0.115\textwidth]{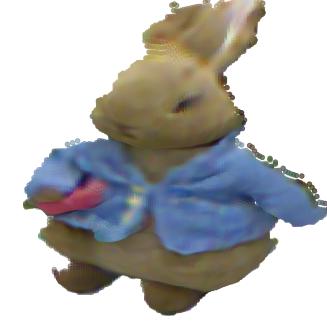}
			&\includegraphics[width=0.115\textwidth]{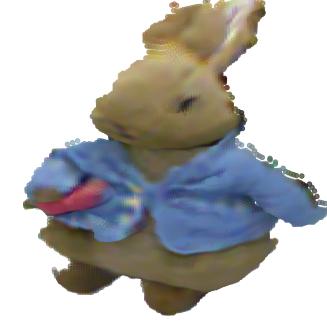}\\	
			MSE 0 &MSE \--\--&
			MSE 0.104 &
			MSE 0.126 &
			MSE 0.099 &
			MSE 0.266 &
			MSE 0.082 &
			MSE 0.079 \\
			Original&Observed&KNR&DT&RF&FSA-HTF&SIREN&${\rm NeurTV}^\alpha$\\
		\end{tabular}
		\vspace{-0.2cm}
	\end{center}
	\caption{The results of point cloud recovery by different methods on ``Duck'', ``Squirrel'', and ``Rabbit'' with sampling rate 0.2.  \label{fig_point2}}
	\vspace{-0.2cm}
\end{figure}
\subsection{Point Cloud Recovery (Beyond Meshgrid)}
NeurTV is suitable for both meshgrid and non-meshgrid data attributed to the continuous representation, while traditional TV is not suitable for non-meshgrid data (e.g., point cloud). To validate this advantage, we consider the point cloud recovery problem, which aims to estimate the color information of all points of the point cloud by giving the color information of some partially observed points. We assume that the observed point cloud with $n$ points is represented by an $n$-by-5 matrix ${\bf O}\in{\mathbb R}^{n\times 5}$, where each point ${\bf O}_{(i,:)}$ ($i=1,2,\cdots,n$) is an ($x,y,z,C,v$) format five-dimensional vector, containing coordinate information (i.e., $(x,y,z)$), channel information (i.e., $C\in\{1,2,3\}$), and the corresponding color value $v$ in the channel $C$ at the coordinate $(x,y,z)$. The continuous representation $f_\Theta:{\mathbb R}^4\rightarrow{\mathbb R}$ takes the coordinate and channel vector ($x,y,z,C$) as the input and is expected to output the corresponding color information $v$. Then, the trained model $f_\Theta(\cdot)$ can estimate the color of the point cloud at any other input location. \par
	For the point cloud recovery, we propose to impose the NeurTV regularization with space-variant scale parameters $\alpha$ in the first three dimensions $(x,y,z)$ and do not impose the constraint in the channel dimension, since the local correlations along channel dimension are not significant. We consider the following derivative-based NeurTV for point cloud data:
	\begin{equation}\small\label{NeurTV_point}
		\Psi_{{\rm NeurTV}^\alpha}(\Theta)=\sum_{i=1}^{n}\alpha_{i}
		\sum_{d=1}^3\left|
		\frac{\partial f_\Theta({\bf O}_{{(i,1:4)}})}{\partial {\bf O}_{{(i,d)}}}
		\right|,
	\end{equation}\noindent
	where each scale parameter $\alpha_{i}$ is determined through the second-order derivatives, i.e., $\alpha_{i} = (\sum_{d=1}^3|\frac{\partial^2f_\Theta({\bf O}_{{(i,1:4)}})}{\partial {\bf O}^2_{{(i,d)}}}|+\epsilon)^{-1}$. Here, we do not consider the space-variant directional parameters since extending the directional NeurTV to three dimensional point cloud is not a trivial task, and we leave it to future work. {Based on the proposed derivative-based space-variant NeurTV regularization \eqref{NeurTV_point}, the optimization model of training the continuous representation $f_\Theta(\cdot)$ for point cloud recovery is formulated as:}
	\begin{equation}\small\label{point_model}
		\min_{\Theta}\sum_{i=1}^{n}({\bf O}_{{(i,5)}}-f_\Theta({\bf O}_{{(i,1:4)}}))^2+\lambda\Psi_{{\rm NeurTV}^\alpha}(\Theta).
	\end{equation}\noindent
	We use the tensor factorization-based DNN \cite{TPAMI_Luo} (see Fig. \ref{fig_INR}) as the continuous representation $f_\Theta:{\mathbb R}^4\rightarrow{\mathbb R}$. The algorithm of tackling the point cloud recovery model \eqref{point_model} is similar to Algorithm \ref{alg} by using iterative updates of the scale parameters $\alpha_{i}$ and the DNN weights $\Theta$. We set the trade-off parameter $\lambda=1\times10^{-4}$. {The width and depth of the DNN are set to 300 and 3, respectively.} {We use five methods as baselines for point cloud recovery, including the K-neighbors regressor (KNR) \cite{sklearn}, the decision tree (DT) regressor \cite{sklearn}, the random forest (RF) regressor \cite{sklearn}, the canonical decomposition-based regression method FSA-HTF \cite{TSP_CP}, and the continuous representation-based method SIREN \cite{sine}. Here, KNR, DT, RF, and FSA-HTF are model-based methods. SIREN is a DNN-based method which uses the observed data to train its DNN in an unsupervised manner.} Five point cloud datasets are used to test these methods, including ``Cola'', ``Mario'', ``Duck'', ``Squirrel'', and ``Rabbit'', which are online available\footnote{\url{http://www.vision.deis.unibo.it/research/80-shot}}. We consider the sampling rate 0.1 and 0.2 to test different methods. The results are quantitatively evaluated by normalized root MSE and R-Square.\par 
\begin{figure}[t]
	\scriptsize
	\setlength{\tabcolsep}{0.9pt}
	\begin{center}
		\begin{tabular}{ccccccc}
			\includegraphics[width=0.125\textwidth]{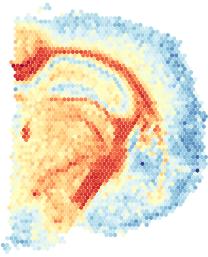}
			&\includegraphics[width=0.125\textwidth]{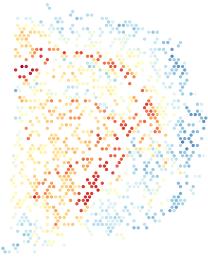}
			&\includegraphics[width=0.125\textwidth]{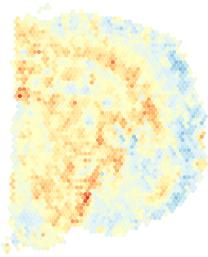}
			&\includegraphics[width=0.125\textwidth]{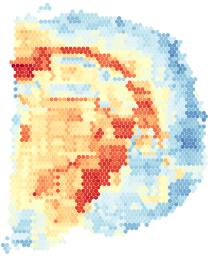}
			&\includegraphics[width=0.125\textwidth]{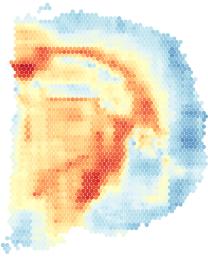}
			&\includegraphics[width=0.125\textwidth]{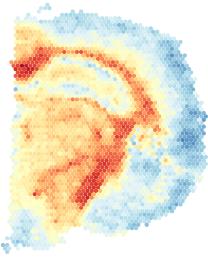}
			&\includegraphics[width=0.125\textwidth]{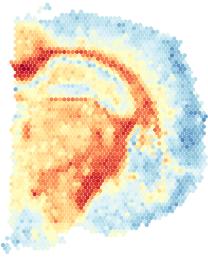}\\
			MSE 0
			&MSE \--\--&MSE 0.277
			&MSE 0.146
			&MSE 0.118
			&MSE 0.106
			&MSE 0.097
			\\
			\includegraphics[width=0.125\textwidth]{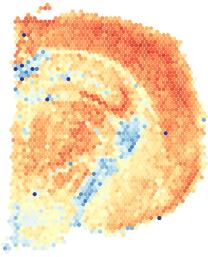}
			&\includegraphics[width=0.125\textwidth]{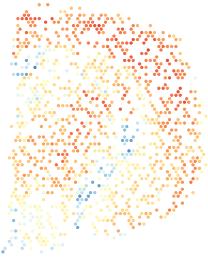}
			&\includegraphics[width=0.125\textwidth]{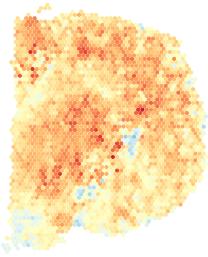}
			&\includegraphics[width=0.125\textwidth]{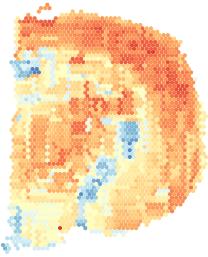}
			&\includegraphics[width=0.125\textwidth]{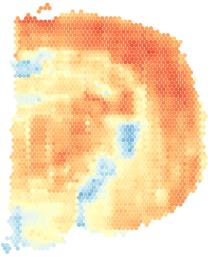}
			&\includegraphics[width=0.125\textwidth]{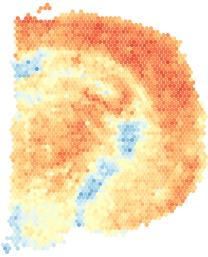}
			&\includegraphics[width=0.125\textwidth]{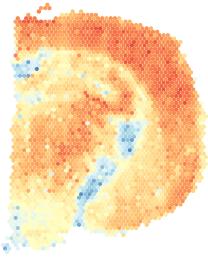}\\
			MSE 0
			&MSE \--\--&MSE 0.224
			&MSE 0.132
			&MSE 0.114
			&MSE 0.109
			&MSE 0.104
			\\
			\includegraphics[width=0.125\textwidth]{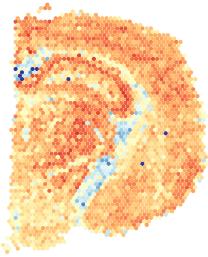}
			&\includegraphics[width=0.125\textwidth]{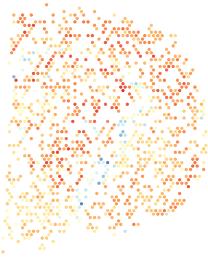}
			&\includegraphics[width=0.125\textwidth]{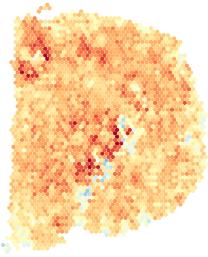}
			&\includegraphics[width=0.125\textwidth]{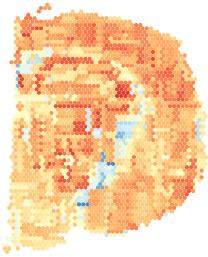}
			&\includegraphics[width=0.125\textwidth]{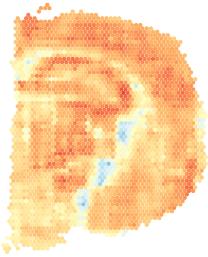}
			&\includegraphics[width=0.125\textwidth]{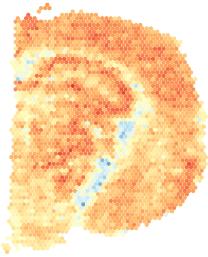}
			&\includegraphics[width=0.125\textwidth]{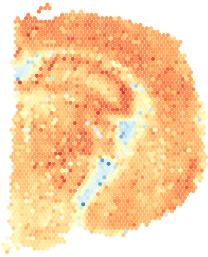}\\
			MSE 0
			&MSE \--\--
			&MSE 0.212
			&MSE 0.148
			&MSE 0.120
			&MSE 0.114
			&MSE 0.105\\
			Original&Observed&KNR&DT&RF&GNTD&${\rm NeurTV}_\theta^\alpha$\\
		\end{tabular}
		\vspace{-0.2cm}
	\end{center}
	\caption{The results of spatial transcriptomics data reconstruction by different competing methods. The three rows respectively list the results of all comparison methods on genes ``mbp'', ``snap25'', and ``atp1b1'' of the mouse brain dataset with sampling rate 0.4.\label{fig_ST}}
	\vspace{-0.2cm}
\end{figure}
\begin{table}[t]
	\caption{The quantitative results by different methods for spatial transcriptomics reconstruction on different genes of the mouse brain dataset.\label{tab_ST}}\vspace{-0.2cm}
	\begin{center}
		\tiny
		\setlength{\tabcolsep}{4pt}
		\begin{tabular}{clcccccccccc}
			\toprule
			\multirow{2}*{Sampling rate}&\multirow{2}*{Method}&\multicolumn{2}{c}{``mbp''}&\multicolumn{2}{c}{``snap25''}&\multicolumn{2}{c}{``atp1b1''}
			&\multicolumn{2}{c}{``plp1''}&\multicolumn{2}{c}{``rtn1''}\\
			~&~&MSE&R-Square&MSE&R-Square&MSE&R-Square&MSE&R-Square&MSE&R-Square\\
			\midrule
			\multirow{5}*{{0.3}}
			&KNR
			& 0.299 & 0.352
			& 0.232 & 0.125
			& 0.215 & 0.030
			& 0.417 & 0.242
			& 0.183 & 0.396\\
			~&DT
			& 0.177 & 0.737
			& 0.148 & 0.619
			& 0.171 & 0.331
			& 0.275 & 0.639
			& 0.161 & 0.553\\
			~&RF
			& 0.138 & 0.837
			& 0.118 & 0.737
			& 0.127 & 0.539
			& 0.234 & 0.728
			& 0.130 & 0.669 \\
			~&GNTD
			& 0.125 & 0.873
			& 0.117 & 0.744
			& 0.121 & 0.584
			& 0.216 & 0.768
			& 0.125 & 0.694\\
			~&${\rm NeurTV}_\theta^\alpha$
			& \bf0.113 & \bf0.890
			& \bf0.113 & \bf0.768
			& \bf0.117 & \bf0.635
			& \bf0.197 & \bf0.809
			& \bf0.120 & \bf0.735\\
			\midrule
			\multirow{5}*{{0.4}}
			&KNR 
			& 0.277 & 0.445
			& 0.224 & 0.147
			& 0.212 & 0.034
			& 0.381 & 0.382
			& 0.172 & 0.452 \\
			~&DT
			& 0.146 & 0.820
			& 0.132 & 0.686
			& 0.148 & 0.432
			& 0.242 & 0.710
			& 0.140 & 0.642\\
			~&RF
			& 0.118 & 0.880
			& 0.114 & 0.754
			& 0.120 & 0.584
			& 0.204 & 0.791
			& 0.117 & 0.724 \\
			~&GNTD
			& 0.106 & 0.903
			& 0.109 & 0.774
			& 0.114 & 0.621
			& 0.186 & 0.826
			& 0.117 & 0.726\\
			~&${\rm NeurTV}_\theta^\alpha$
			& \bf0.097 & \bf0.916
			& \bf0.104 & \bf0.796
			& \bf0.105 & \bf0.680
			& \bf0.164 & \bf0.860
			& \bf0.107 & \bf0.768\\
			\bottomrule
		\end{tabular}
	\end{center}
\end{table}    
	The results of point cloud recovery are shown in Table \ref{tab_point} and Figs. \ref{fig_point} \& \ref{fig_point2}. It can be observed that our NeurTV method quantitatively outperforms other competing methods for point cloud recovery, which validates the effectiveness of NeurTV beyond meshgrid. From the visual results in Fig. \ref{fig_point}, we can observe that our NeurTV could better recover the local structures and color information of the point clouds, which further verifies its capability to characterize the structures of point cloud data beyond meshgrid. 
	\subsection{Spatial Transcriptomics Reconstruction (Beyond Meshgrid)}
	Spatial transcriptomics data \cite{NMI_ST,GNTD} is a novel biological data that reveals informative gene expressions over a spatial area of tissues \cite{NM}. A distinct feature of spatial transcriptomics data is that it is not arranged as meshgrid data but rather placed at non-meshgrid spots in the spatial area due to the sequencing technology and the non-cubic shape of tissues, which poses challenges for classical methods like TV for characterizing it. As compared, our NeurTV benefits from the continuous representation and is suitable for the spatial transcriptomics data. Specifically, we consider the spatial transcriptomics reconstruction problem \cite{GNTD}, which aims to reconstruct the whole spatial transcriptomics from the partially sampled observation. \par
	We assume that the observed three-dimensional spatial transcriptomics data with $n$ spots is represented by an $n$-by-4 matrix ${\bf O}\in{\mathbb R}^{n\times 4}$. Each spot ${\bf O}_{(i,:)}$ ($i=1,2,\cdots,n$) is an ($x,y,g,v$) format four-dimensional vector, where $(x,y)$ denotes the spatial coordinate, $g$ corresponds to a gene type, and $v$ is the gene expression of the gene $g$ at the spot with coordinate $(x,y)$. The continuous representation $f_\Theta:{\mathbb R}^3\rightarrow{\mathbb R}$ takes the coordinate and gene vector ($x,y,g$) as input and is expected to output the corresponding gene expression value $v$. {We then consider the following optimization model for training the continuous representation $f_\Theta(\cdot)$ based on the proposed derivative-based space-variant NeurTV regularization:}
	\begin{equation}\small\label{ST_model}
		\min_{\Theta}\sum_{i=1}^{n}({\bf O}_{{(i,4)}}-f_\Theta({\bf O}_{{(i,1:3)}}))^2+\lambda\Psi_{{\rm NeurTV}_\theta^\alpha}(\Theta),
	\end{equation}\noindent
	where $\Psi_{{\rm NeurTV}_{\theta}^\alpha}(\Theta)$ denotes the space-variant NeurTV defined in \eqref{WGV}. We impose the NeurTV constraint in the spatial dimensions $(x,y)$ and do not impose the constraint in the gene dimension $g$ since the local correlations in the gene dimension are not significant. We use the tensor factorization-based DNN \cite{TPAMI_Luo} (see Fig. \ref{fig_INR}) as the continuous representation $f_\Theta:{\mathbb R}^3\rightarrow{\mathbb R}$. The algorithm of tackling the model \eqref{ST_model} is similar to Algorithm \ref{alg} by using iterative updates of the scale and directional parameters $\alpha,\theta,a$ and the DNN weights $\Theta$. We set the trade-off parameter $\lambda$ as $2.5\times10^{-4}$. {The width and depth of the DNN are set to 300 and 3, respectively.} {We use four baselines, including the three regression methods KNR \cite{sklearn}, DT \cite{sklearn}, RF \cite{sklearn}, and a graph tensor decomposition-based method GNTD \cite{GNTD}. Here, KNR, DT, and RF are model-based methods. GNTD is a DNN-based method which uses the observed spatial transcriptomics data to train its DNN in an unsupervised manner.} We use the spatial transcriptomics data of a mouse brain dataset\footnote{\url{https://www.10xgenomics.com/datasets}} to test these methods. We use five gene types ``mbp'', ``snap25', ``atp1b1'', ``plp1'', and ``rtn1'' for testing. We consider the sampling rate 0.3 and 0.4 to generate observed data.\par
\begin{figure}[t]
	\scriptsize
	
	\setlength{\tabcolsep}{0.9pt}
	\begin{center}
		\includegraphics[width=0.7\textwidth]{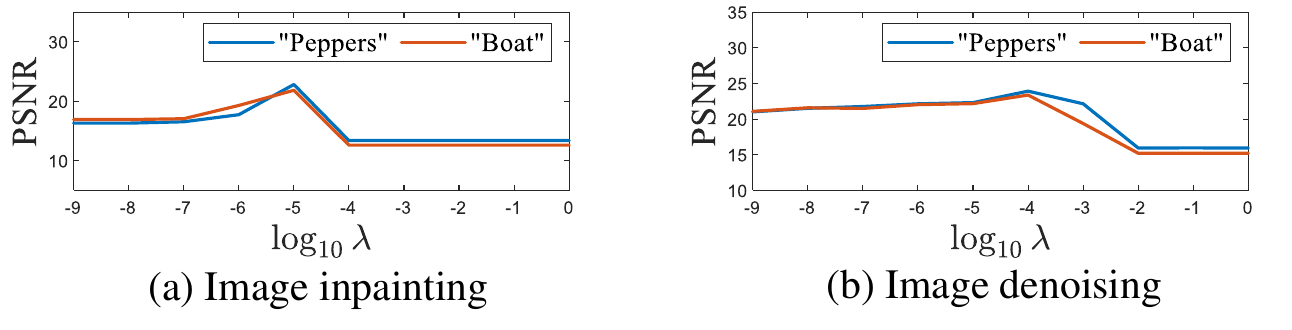}
		\vspace{-0.3cm}
	\end{center}
	\caption{The PSNR w.r.t. the value of the trade-off parameter $\lambda$ by using our space-variant NeurTV for (a) image inpainting with sampling rate 0.1 and (b) image denoising with Gaussian noise deviation 0.1.\label{fig_large}}\vspace{-0.2cm}
\end{figure}
\begin{figure}[t]
	\scriptsize
	\setlength{\tabcolsep}{0.9pt}
	\begin{center}
		\includegraphics[width=0.7\textwidth]{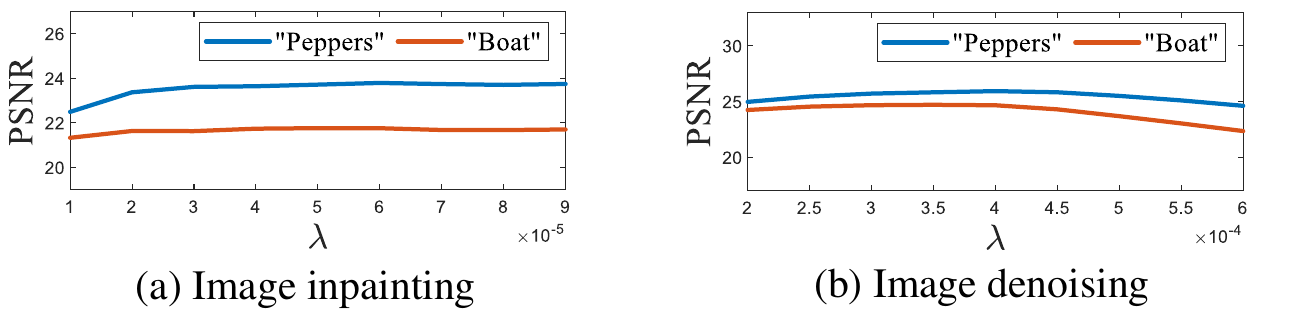}
		\vspace{-0.3cm}
	\end{center}
	\caption{The PSNR w.r.t. the value of the trade-off parameter $\lambda$ by using our space-variant NeurTV for (a) image inpainting with sampling rate 0.1 and (b) image denoising with Gaussian noise deviation 0.1.\label{fig_hyper}}\vspace{-0.2cm}
\end{figure}
\begin{figure}[t]
	\scriptsize
	
	\setlength{\tabcolsep}{0.9pt}
	\begin{center}
		\includegraphics[width=0.7\textwidth]{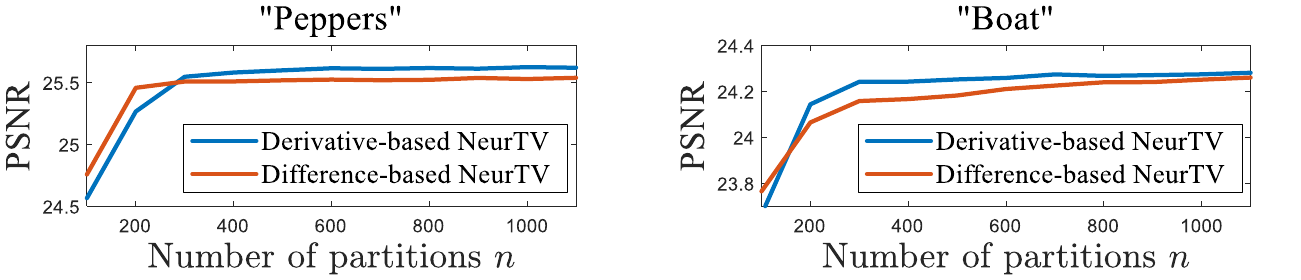}
		\vspace{-0.3cm}
	\end{center}
	\caption{The PSNR w.r.t. the number of partitions $n$ for image denoising with noise deviation 0.1 by using the derivative-based NeurTV model \eqref{model_gtv} and the difference-based NeurTV model \eqref{model_tv}.\label{fig_n}}\vspace{-0.2cm}
\end{figure}
\begin{figure}[t]
	
	\scriptsize
	\setlength{\tabcolsep}{0.9pt}
	\begin{center}
		\includegraphics[width=0.7\textwidth]{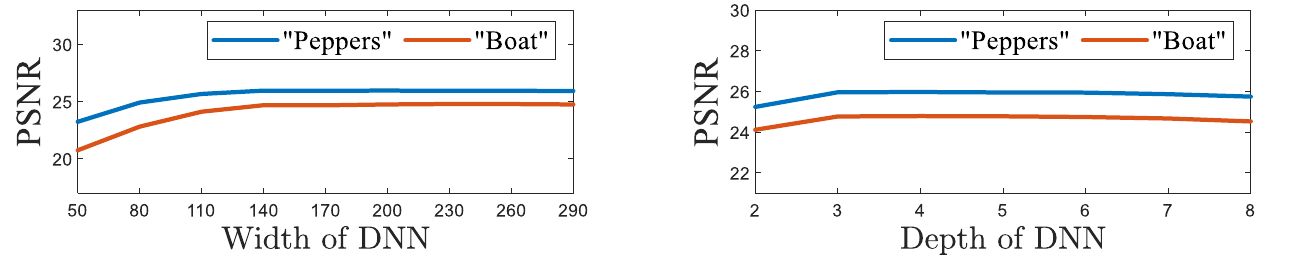}
		\vspace{-0.3cm}
	\end{center}
	\caption{The PSNR w.r.t. the width and depth of the DNN by using our space-variant NeurTV for image denoising with Gaussian noise deviation 0.1.\label{fig_width}}\vspace{-0.2cm}
\end{figure}
\begin{figure}[!h]
	\tiny
	\setlength{\tabcolsep}{0.9pt}
	\begin{center}\vspace{0.3cm}
		\begin{tabular}{cccccc}
			\includegraphics[width=0.13\textwidth]{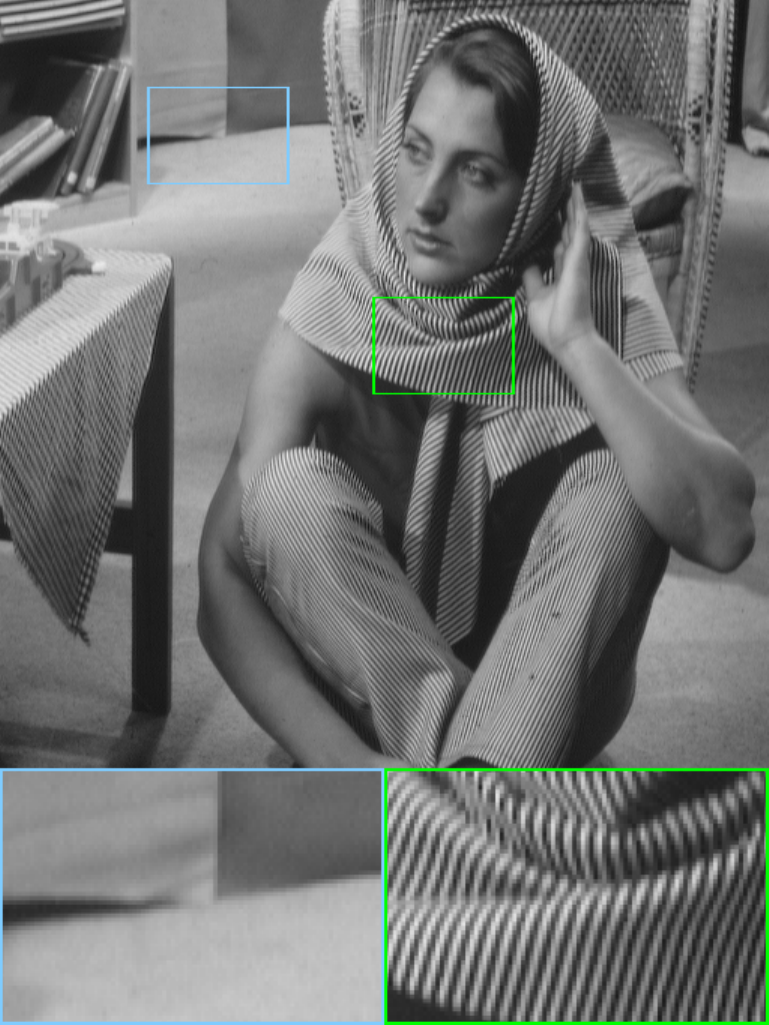}
			&\includegraphics[width=0.13\textwidth]{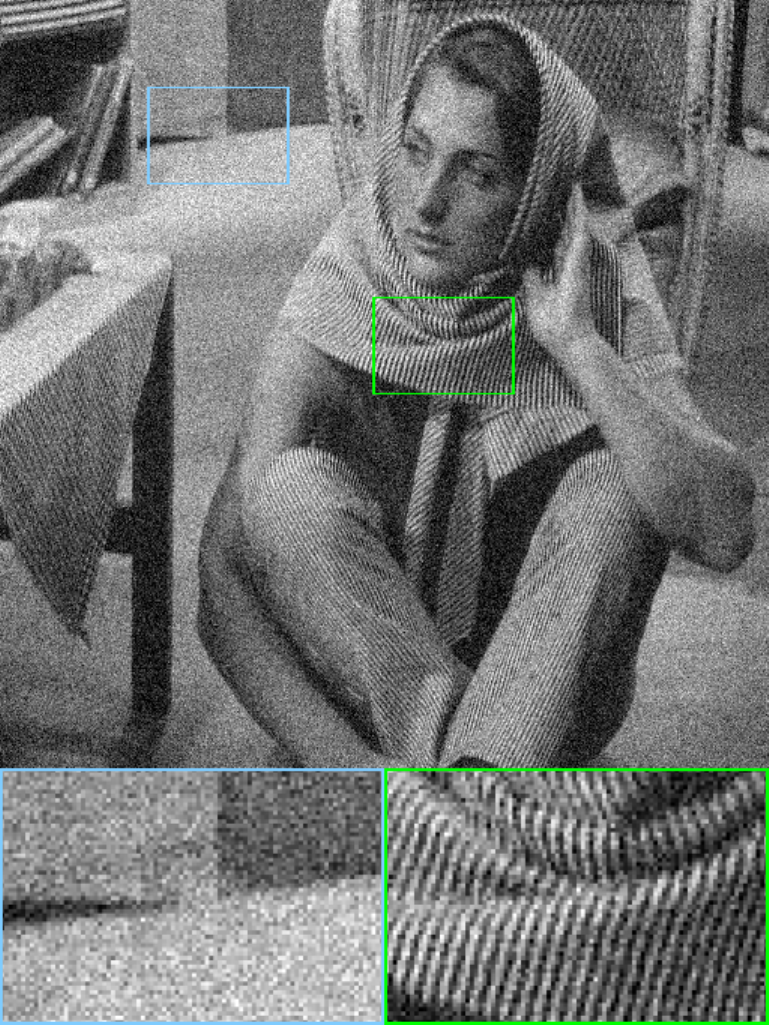}
			&\includegraphics[width=0.13\textwidth]{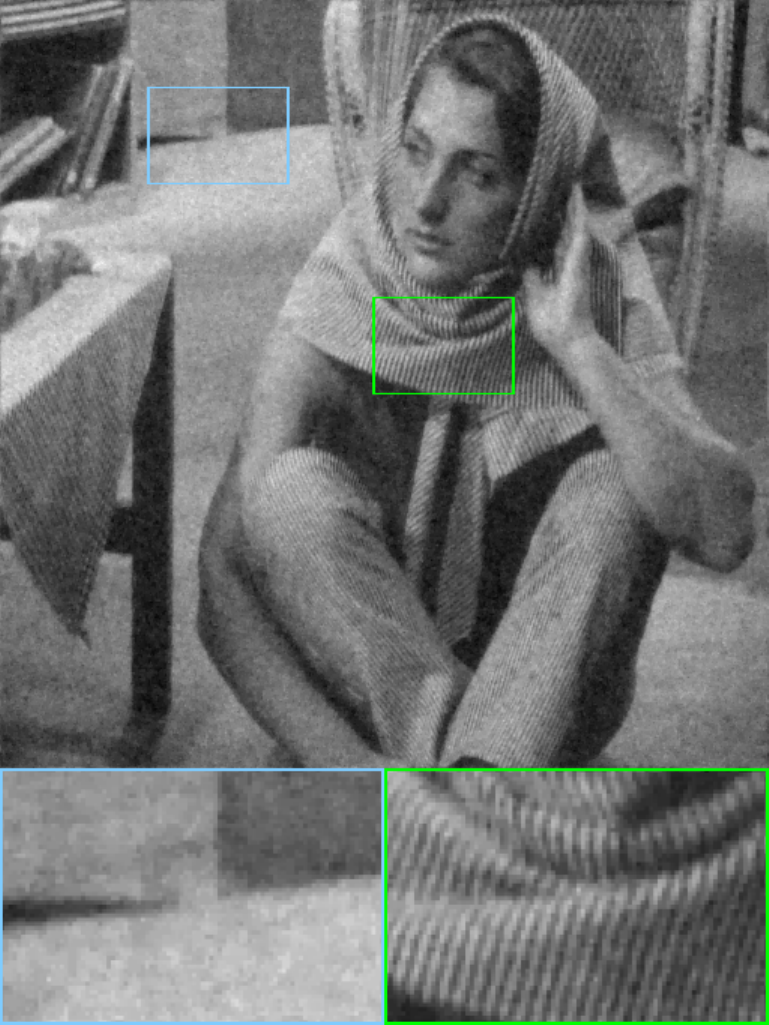}
			&\includegraphics[width=0.13\textwidth]{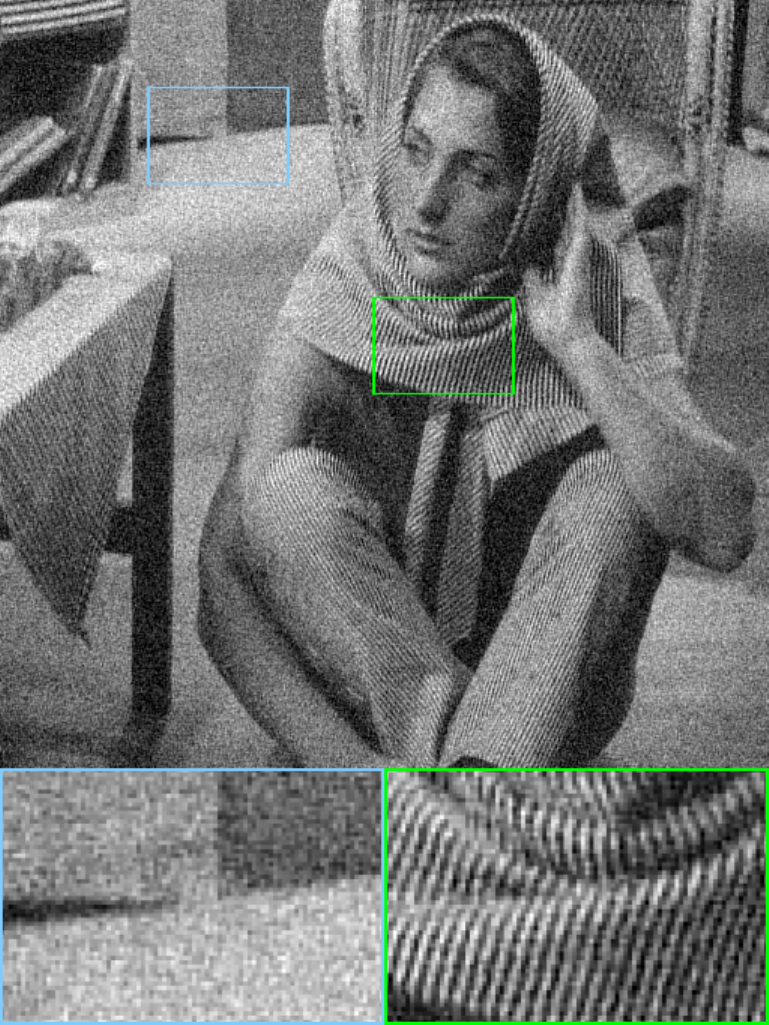}
			&\includegraphics[width=0.13\textwidth]{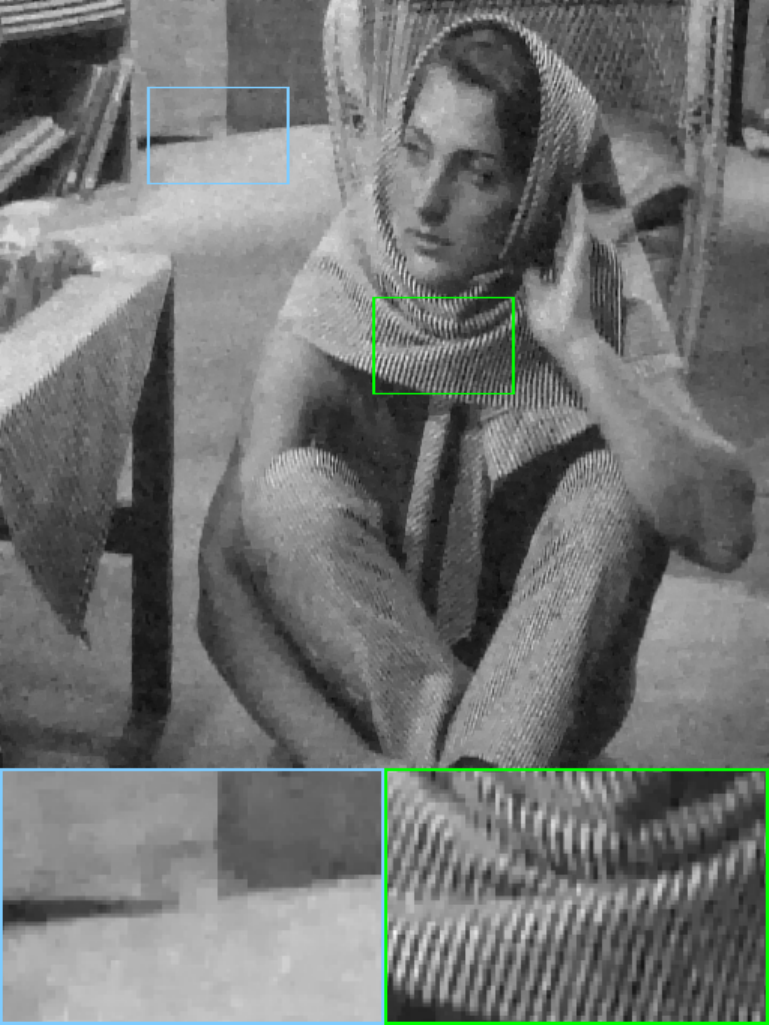}
			&\includegraphics[width=0.13\textwidth]{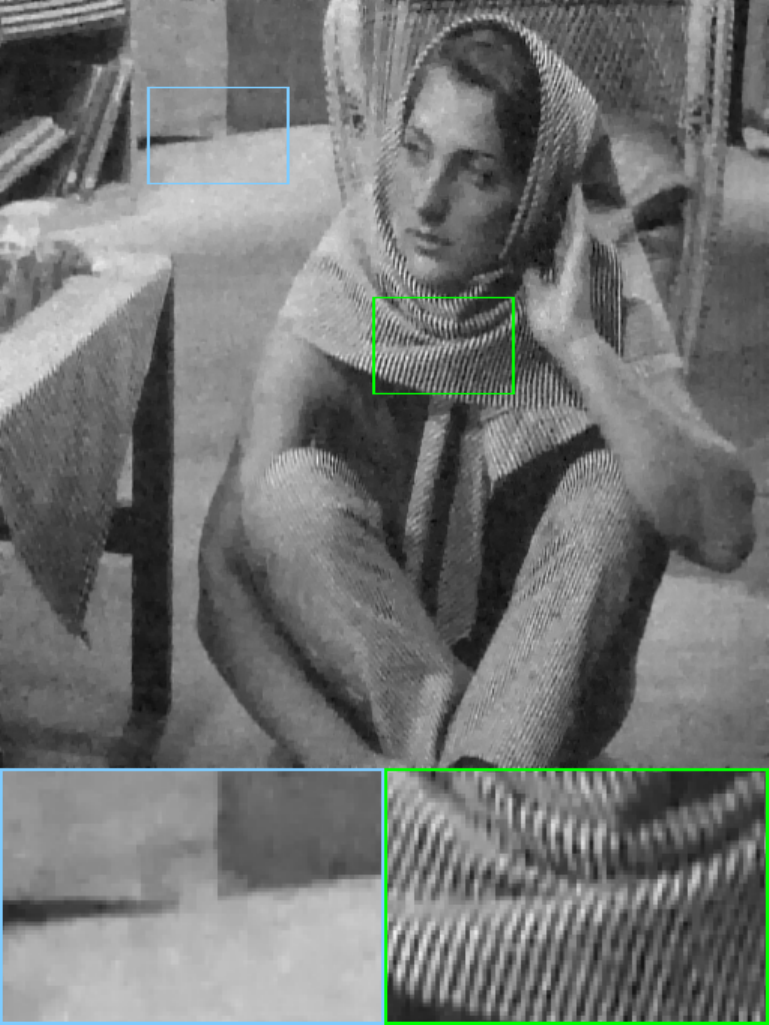}\\	
			PSNR Inf &
			PSNR 20.10 &
			PSNR 25.39 &
			PSNR 21.90 &
			PSNR 26.27  &
			PSNR 26.29   \\
			Original&Noisy&TV&Loss w/o reg.&\tabincell{c}{Difference-based\\NeurTV}&\tabincell{c}{Derivative-based\\NeurTV}\\
		\end{tabular}
		\vspace{-0.3cm}
	\end{center}
	\caption{The results of image denoising under noise deviation 0.1 by different methods (the classical model-based TV (denoted by TV), the continuous representation method without any regularization (denoted by Loss w/o reg.), the proposed difference-based NeurTV method, and the proposed derivative-based NeurTV method) on the ``Barbara'' image.  \label{fig_Barbara}\vspace{-0.4cm}}
\end{figure} 
	The results of spatial transcriptomics reconstruction are shown in Table \ref{tab_ST} and Fig. \ref{fig_ST}. It can be observed that our NeurTV quantitatively outperforms the other methods. Especially, our method outperforms GNTD \cite{GNTD}, which is a state-of-the-art method for spatial transcriptomics reconstruction. The visual results in Fig. \ref{fig_ST} show that our NeurTV can better identify local structures of the spatial transcriptomics and recover the spatial gene expressions more accurately than other comparison methods. 
	\subsection{Influence of Hyperparameters and Ablation Study}\label{sec_ablation} 
In this section, we test the influence of hyperparameters and perform the ablation study. First, we evaluate the sensitivity of NeurTV models (e.g., models \eqref{denoising_model} \& \eqref{inpainting_model}) w.r.t. the trade-off hyperparameter $\lambda$. {We first use a larger range of $\lambda$ (i.e., from $10^{-9}$ to $1$) to test the proposed method; see Fig. \ref{fig_large}. It can be observed that the trade-off parameter $\lambda$ needs to be set to a proper value for optimal performance. Based on this observation, we further set a smaller range of $\lambda$ within a suitable order of magnitude; see Fig. \ref{fig_hyper}.} It can be observed that our method could attain satisfactory PSNR values for a relatively wide range of $\lambda$ under a suitable order of magnitude, which validates the robustness of our method w.r.t. the trade-off hyperparameter. \par 
Meanwhile, the number of partitions $n$ is another important hyperparameter in our NeurTV-based models, e.g., models \eqref{model_gtv} \& \eqref{model_tv}. Here, we change the number of partitions and test our NeurTV-based models \eqref{model_gtv} \& \eqref{model_tv} for image denoising. {We have implemented normalization by dividing by the number of partitions in each test to ensure the consistency between NeurTV regularizations with different numbers of partitions. For fairness, we have separately tuned the trade-off parameter $\lambda$ for each number of partitions to achieve the optimal performance.} The results are shown in Fig. \ref{fig_n}. It is clear that larger $n$ inclines to obtain better performances for both derivative-based and difference-based NeurTV models, which coincides with the variational approximation analysis in Lemmas \ref{lemma_error_gv}-\ref{lemma_error_tv}. \par 
{Here, we additionally test the influence of width and depth of the DNN. Specifically, we have changed the number of neurons (i.e., width) and the number of layers (i.e., depth) of the tensor factorization-based DNN and tested the space-variant NeurTV for image denoising. The results are shown in Fig. \ref{fig_width}. It can be observed that our method can obtain satisfactory performances across a wide range of DNN width and depth settings.}\par 
{Finally, we perform an additional ablation study for the proposed method with and without the regularization $\Psi(\Theta)$. Especially, the benefit of the regularization term $\Psi(\Theta)$ is to encode prior information (i.e., the local correlations of data) into the model to alleviate the ill-posedness of the data recovery problem. Without the regularization term, the DNN $f_\Theta(\cdot)$ would overfit to the observed data, resulting in unsatisfactory performances for data recovery. Here, we conduct an ablation study for the proposed method with and without the NeurTV regularization on the ``Barbara'' image, which has abundant textures. The results are shown in Fig. \ref{fig_Barbara}. It can be observed that both the difference-based and derivative-based NeurTV regularizations work well for the ``Barbara'' image. As compared, the continuous representation method without regularization (i.e., Loss w/o reg. in Fig. \ref{fig_Barbara}) does not attenuate noise well, which validates the benefit of the NeurTV regularization.}
	\section{Conclusion}\label{sec_con}
We have proposed the NeurTV regularization to capture local correlations of data based on continuous representation. As compared with classical discrete meshgrid-based TV, our NeurTV is free of discretization error induced by the difference operator, and is suitable for both meshgrid and non-meshgrid data. By virtue of the continuous and differentiable nature of DNN, NeurTV can be readily extended to capture local correlations of data for any direction and any order of derivatives. We have reinterpreted NeurTV from the variational approximation perspective, which allows us to draw the connection between NeurTV and classical TV regularizations and motivates us to develop variants such as arbitrary resolution and space-variant NeurTV regularizations. Numerical experiments on different inverse imaging problems with meshgrid and non-meshgrid data validated the effectiveness and superiority of our NeurTV methods. {In this work, we have considered selecting the trade-off parameter of the NeurTV regularization by manual tuning, which is consistent with standard TV methods in the literature. It is also interesting to consider training the trade-off parameter through the DNN. We will investigate this promising idea in future work.}
	\appendix
	\section{Proof of Lemmas}
	\setcounter{theorem}{0} 
	\renewcommand{\thetheorem}{3.6}
\subsection{Proof of Lemma \ref{lemma_TV_DF}}\label{proof_lemma_TV_DF}
\begin{lemma}[Total variation of function \cite{TV}]
Given a function $f_\Theta({\bf x}):\Omega\rightarrow{\mathbb R}$ parameterized by $\Theta$, where $\Omega\subset{\mathbb R}^N$, the TV of $f_\Theta(\cdot)$ in $\Omega$ is defined as
		\begin{equation}\small
			V(f_\Theta;\Omega) := \sup\{\int_\Omega f_\Theta({\bf x})\;{\rm div}\phi({\bf x}) d{\bf x}: \phi\in C_c^\infty(\Omega,{\mathbb R}^N),\lVert\phi({\bf x})\rVert_{\infty}\leq1\},
		\end{equation}\noindent
		where $\rm div$ is the divergence operator and $\phi(\cdot)$ is a differentiable vector field with compact support. If the function $f_\Theta({\bf x}):\Omega\rightarrow{\mathbb R}$ is differentiable w.r.t. the input $\bf x$, then its functional TV \eqref{FunTV} is equal to $
			V(f_\Theta;\Omega) = \sum_{d=1}^N\int_\Omega \left|\frac{\partial f_\Theta({\bf x})}{\partial {\bf x}_{(d)}}\right| d{\bf x}.$
	\end{lemma}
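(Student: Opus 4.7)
The plan is to establish the identity by proving two inequalities via integration by parts followed by a duality/density argument. The key observation is that the constraint $\|\phi({\bf x})\|_{\infty}\leq1$ together with the divergence operator turns the variational formulation into an $\ell^\infty$--$\ell^1$ duality statement on the gradient field of $f_\Theta$.

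First I would rewrite the integrand using integration by parts. Because $\phi\in C_c^\infty(\Omega,\mathbb{R}^N)$ has compact support in $\Omega$, all boundary contributions vanish, so the divergence theorem gives
\begin{equation*}
\int_\Omega f_\Theta({\bf x})\,\mathrm{div}\,\phi({\bf x})\,d{\bf x} \;=\; -\sum_{d=1}^N \int_\Omega \frac{\partial f_\Theta({\bf x})}{\partial {\bf x}_{(d)}}\,\phi_d({\bf x})\,d{\bf x}.
\end{equation*}
Applying $|\phi_d({\bf x})|\leq\|\phi({\bf x})\|_\infty\leq 1$ pointwise and taking absolute values inside the sum immediately yields the upper bound
\begin{equation*}
\int_\Omega f_\Theta({\bf x})\,\mathrm{div}\,\phi({\bf x})\,d{\bf x} \;\leq\; \sum_{d=1}^N \int_\Omega \left|\frac{\partial f_\Theta({\bf x})}{\partial {\bf x}_{(d)}}\right|d{\bf x},
\end{equation*}
so taking the supremum over admissible $\phi$ gives $V(f_\Theta;\Omega)\leq\sum_{d=1}^N\int_\Omega |\partial f_\Theta/\partial {\bf x}_{(d)}|\,d{\bf x}$.

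Next I would establish the reverse inequality by constructing near-optimal test fields. The natural candidate is $\phi_d^\star({\bf x}) = -\mathrm{sgn}(\partial f_\Theta({\bf x})/\partial {\bf x}_{(d)})$, which plugged into the identity above recovers the desired sum. The obstruction is that $\phi^\star$ is neither smooth nor compactly supported, so it does not lie in $C_c^\infty(\Omega,\mathbb{R}^N)$. I would address this by a standard two-step approximation: (i) truncate to a compact subset $K_\epsilon\subset\Omega$ that exhausts $\Omega$ as $\epsilon\to 0$, controlling the error via the dominated convergence theorem applied to the integrable gradient components; and (ii) mollify $\phi^\star$ componentwise with a standard mollifier $\rho_\delta$. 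The mollified field $\phi^{\star,\delta}$ is in $C_c^\infty$ and still satisfies $\|\phi^{\star,\delta}\|_\infty\leq 1$ because mollification preserves the pointwise sup-norm bound. Passing $\delta\to 0$ (and using continuity of the pairing with the integrable partial derivatives) recovers the matching lower bound, completing the proof.

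The main obstacle will be the density/mollification step, since one must verify that the approximating sequence $\phi^{\star,\delta}$ both respects the $\ell^\infty$-constraint and yields the correct limit for the bilinear pairing against the (not necessarily continuous) partial derivatives $\partial f_\Theta/\partial {\bf x}_{(d)}$. This is a standard fact in $BV$ theory (indeed, it is the classical reason the $BV$ seminorm coincides with $\int|\nabla f|\,d{\bf x}$ for $W^{1,1}$ functions), and I would cite this rather than redo the mollification argument in detail, since under the differentiability hypothesis on $f_\Theta$ the partial derivatives are locally integrable and the argument reduces to a routine application of the dominated convergence theorem.
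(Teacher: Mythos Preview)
Your proposal is correct and follows essentially the same route as the paper: integration by parts to transfer the divergence onto the gradient of $f_\Theta$, followed by the $\ell^\infty$--$\ell^1$ duality with the sign function as the (near-)optimal test field. In fact, your treatment is more careful than the paper's, which simply asserts that equality holds at $\phi_d({\bf x}) = -\mathrm{sgn}\bigl(\partial f_\Theta({\bf x})/\partial {\bf x}_{(d)}\bigr)$ without addressing that this choice is not in $C_c^\infty(\Omega,\mathbb{R}^N)$; your mollification/truncation sketch fills exactly that gap.
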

	\begin{proof}
		By integration by parts, we have 
		\begin{equation}\small
			\begin{split}
				\int_\Omega f_\Theta({\bf x})\;{\rm div}\phi({\bf x}) d{\bf x} &= \sum_{d=1}^N\int_\Omega f_\Theta({\bf x})\;\frac{\partial\phi_d({\bf x})}{\partial {\bf x}_{(d)}} d{\bf x}=\sum_{d=1}^N(-\int_\Omega \frac{\partial f_\Theta({\bf x})}{\partial {\bf x}_{(d)}}\;\phi_d({\bf x})d{\bf x}),
			\end{split}
		\end{equation}\noindent
		where $\phi_d$ denotes the $d$-th component of the vector field $\phi(\cdot)$. Here, the first term of integration by parts vanishes since $\phi(\cdot)$ has compact support and hence $\phi_d({\bf x})=0$ for ${\bf x}\in\partial\Omega$. Since 
		\begin{equation}\small		
-\int_\Omega \frac{\partial f_\Theta({\bf x})}{\partial {\bf x}_{(d)}}\phi_d({\bf x}) d{\bf x}\leq\int_\Omega \left|\frac{\partial f_\Theta({\bf x})}{\partial {\bf x}_{(d)}}\phi_d({\bf x})\right| d{\bf x}\leq\int_\Omega \left|\frac{\partial f_\Theta({\bf x})}{\partial {\bf x}_{(d)}}\right|d{\bf x}
	\end{equation}\noindent
and the equality attains when $\phi_d({\bf x})=-\frac{\partial f_\Theta({\bf x})}{\partial {\bf x}_{(d)}}\big{/}|\frac{\partial f_\Theta({\bf x})}{\partial {\bf x}_{(d)}}|$, we have
		\begin{equation}\small
			\begin{split}
				V(f_\Theta;\Omega) &=\sup\{\int_\Omega f_\Theta({\bf x})\;{\rm div}\phi({\bf x}) d{\bf x}: \lVert\phi({\bf x})\rVert_{\infty}\leq1\}\\&
				=\sup\{\sum_{d=1}^N(-\int_\Omega \frac{\partial f_\Theta({\bf x})}{\partial {\bf x}_{(d)}}\;\phi_d({\bf x})d{\bf x}): \lVert\phi({\bf x})\rVert_{\infty}\leq1\}
				=\sum_{d=1}^N \int_\Omega \left|\frac{\partial f_\Theta({\bf x})}{\partial {\bf x}_{(d)}}\right|d{\bf x}.
			\end{split}
		\end{equation}\noindent
	\end{proof}
	\setcounter{theorem}{0} 
	\renewcommand{\thetheorem}{3.8}
\subsection{Proof of Lemma \ref{lemma_uniform}}\label{proof_lemma_uniform}
	\begin{lemma}
For a function $f_\Theta(x):[a,b]\rightarrow{\mathbb R}$ that is differentiable w.r.t. the input $x$, we consider the uniform partitions over $[a,b]$, i.e., $x_i = a+\frac{i(b-a)}{n}$ $(i=0,1,\cdots,n)$. Then we have $V(f_\Theta;[a,b])=\lim_{n\rightarrow \infty}\sum_{i=1}^n{|f_\Theta(x_i)-f_\Theta(x_{i-1})|}$.
	\end{lemma}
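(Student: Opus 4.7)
The plan is to reduce the claim to a standard Riemann sum identity by combining Lemma \ref{lemma_TV_DF} with the Mean Value Theorem. First, since $f_\Theta$ is assumed differentiable on $[a,b]$, Lemma \ref{lemma_TV_DF} immediately yields
$$V(f_\Theta;[a,b]) \;=\; \int_a^b \left|\frac{df_\Theta(x)}{dx}\right| dx,$$
so the claim reduces to showing that the sum $\sum_{i=1}^n |f_\Theta(x_i)-f_\Theta(x_{i-1})|$ converges to this integral as $n\to\infty$.

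Second, I would apply the Mean Value Theorem on each subinterval $[x_{i-1}, x_i]$ of length $(b-a)/n$: there exists $\xi_i \in (x_{i-1}, x_i)$ with
$$f_\Theta(x_i) - f_\Theta(x_{i-1}) \;=\; \frac{b-a}{n}\; \frac{df_\Theta(\xi_i)}{dx},$$
so that summing absolute values gives
$$\sum_{i=1}^n |f_\Theta(x_i)-f_\Theta(x_{i-1})| \;=\; \frac{b-a}{n}\sum_{i=1}^n \left|\frac{df_\Theta(\xi_i)}{dx}\right|.$$
The right-hand side is a tagged Riemann sum for the function $|df_\Theta/dx|$ with uniform mesh of width $(b-a)/n$ and tags $\xi_i$. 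Taking $n\to\infty$, this converges to $\int_a^b |df_\Theta(x)/dx|\,dx$, which equals $V(f_\Theta;[a,b])$ by the first step, completing the proof.

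The main obstacle is ensuring that the Riemann sum actually converges to the integral. If $f_\Theta$ were merely differentiable, then $df_\Theta/dx$ would only satisfy the intermediate value property (Darboux) and need not be Riemann integrable, in which case the convergence of tagged sums is subtle. Under the paper's operating assumption that $f_\Theta$ is a smooth neural network (e.g., built from sine activations, as in the main experiments), the derivative $df_\Theta/dx$ is continuous on $[a,b]$, hence $|df_\Theta/dx|$ is continuous and Riemann integrable, and the standard convergence theorem for tagged Riemann sums applies directly. In a more general setting one could bypass the MVT step by invoking the fact that, for absolutely continuous functions, the classical total variation on uniform partitions converges to $\int_a^b |f'|\,dx$; but in the paper's regime the MVT route is the most transparent.
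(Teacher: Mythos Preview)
Your argument is correct under the additional assumption that $|f'_\Theta|$ is Riemann integrable (e.g., when $f'_\Theta$ is continuous, as you note is the case for the sine-activated networks used throughout the paper). The route, however, is genuinely different from the paper's. The paper does not pass through the Mean Value Theorem or tagged Riemann sums at all. Instead, it first invokes Lemma~\ref{lemma_P} to identify $V(f_\Theta;[a,b])$ with the supremum of partition sums $V_\Gamma(f_\Theta)$, and then runs a classical $\epsilon$-refinement argument: pick a partition $\Gamma'$ whose sum is within $\epsilon/2$ of the supremum, use \emph{uniform continuity of $f_\Theta$ itself} (not of its derivative) to choose a fine uniform partition $\Gamma''$, and control the increase in the partition sum when merging $\Gamma'$ into $\Gamma''$ to squeeze $V_{\Gamma''}$ between $V-\epsilon$ and $V$.

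The trade-off is this: your MVT-plus-Riemann-sum route is shorter and more transparent, but it genuinely needs integrability of $|f'_\Theta|$, which does not follow from mere differentiability, so you are (as you acknowledge) strengthening the hypothesis. The paper's $\epsilon$-refinement argument, by contrast, uses only continuity of $f_\Theta$ on a compact interval and the sup characterization from Lemma~\ref{lemma_P}; once that lemma is in hand, the convergence of uniform-partition sums to $V$ holds for any continuous function of bounded variation, with no direct appeal to properties of the derivative. In the paper's smooth-network regime both arguments go through, but the paper's is the more robust one under the lemma's stated hypothesis.
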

	\begin{proof}
		Given any partitions ${\Gamma}=(x_1,\cdots,x_n)$, denote $V_{\Gamma}(f_\Theta;[a,b]) := \sum_{i=1}^n|f_\Theta(x_i)-f_\Theta(x_{i-1})|$. Let $\epsilon>0$, according to Lemma \ref{lemma_P} (at the end of the appendix) there exist some partitions ${\Gamma}'=(x_0',x_1',\cdots,x_m')$ such that $V(f_\Theta;[a,b])-\frac{\epsilon}{2}<V_{{\Gamma}'}(f_\Theta;[a,b])\leq V(f_\Theta;[a,b])$. Since $f_\Theta(\cdot)$ is continuous in the interval $[a,b]$, there exists $\delta>0$
		such that $|f_\Theta(x)-f_\Theta(y)|<\epsilon/(4(m+1))$ for any $|x-y|<\delta$. Consider an uniform partition ${\Gamma}''=(x''_0,\cdots,x''_n)$ such that $\lVert{\Gamma}''\rVert:=\max_{i=1,2,\cdots,n}|x_i''-x_{i-1}''|<\delta$. Let ${\Gamma}={\Gamma}'\cup{\Gamma}''$. Then we have $\lVert{\Gamma}\rVert<\delta$ and
		\begin{equation}\small
			V_{{\Gamma}'}(f_\Theta;[a,b])<V_{{\Gamma}}(f_\Theta;[a,b]),\;V_{{\Gamma}''}(f_\Theta;[a,b])<V_{{\Gamma}}(f_\Theta;[a,b]).
		\end{equation}\noindent
		Note that adding every point $x'_i$ ($i=0,1,\cdots,m$) in ${\Gamma}'$ to the partitions ${\Gamma}''$ leads to an increase of function variation at most $2{\epsilon}/{4(m+1)}$ since $\lVert {\Gamma}'\cup {\Gamma}''\rVert=\lVert{\Gamma}\rVert<\delta$. Hence the total increase of function variation of $V_{{\Gamma}}(f_\Theta;[a,b])$ against $V_{{\Gamma}''}(f_\Theta;[a,b])$ admits $
			V_{{\Gamma}}(f_\Theta;[a,b])-\frac{\epsilon}{2}\leq V_{{\Gamma}''}(f_\Theta;[a,b])$. Hence we have 
		\begin{equation}\small
			\begin{split}
				V(f_\Theta;[a,b])-{\epsilon}\leq V_{{\Gamma}'}(f_\Theta;[a,b])-\frac{\epsilon}{2}\leq V_{{\Gamma}}(f_\Theta;[a,b])-\frac{\epsilon}{2}\leq V_{{\Gamma}''}(f_\Theta;[a,b])\leq V(f_\Theta;[a,b]).	
			\end{split}
		\end{equation}\noindent
		Let $\epsilon\rightarrow0$ we have $V(f_\Theta;[a,b])=V_{{\Gamma}''}(f_\Theta;[a,b])$ with $\delta\rightarrow0$, i.e., with $n\rightarrow\infty$. Hence $V(f_\Theta;[a,b])=\lim_{n\rightarrow\infty}V_{{\Gamma}''}(f_\Theta;[a,b])$, which completes the proof.
	\end{proof}
	\renewcommand{\thetheorem}{3.9}
\subsection{Proof of Lemma \ref{lemma_error_gv}}\label{proof_lemma_error_gv}
	\begin{lemma}
		Suppose that $f_\Theta(x):[a,b]\rightarrow{\mathbb R}$ is differentiable w.r.t. the input $x$. The truncation error $R$ of numerical integration by using uniform partitioned quadratures for approximating the functional TV (i.e., approximating $V(f_\Theta;[a,b])$) satisfies
		\begin{equation}\small
			R:=\left|\left(V(f_\Theta;[a,b])-\frac{b-a}{n}{\sum_{i=1}^n\left|\frac{df_\Theta(x_i)}{dx_i}\right|}\right)\right|\leq
			\frac{(b-a)^2}{2n}\left|\frac{df^2_\Theta(\eta)}{d\eta^2}\right|,
		\end{equation}\noindent
		where $\eta\in(a,b)$ and $\frac{df^2_\Theta(\eta)}{d\eta^2}$ denotes the second-order derivative.
	\end{lemma}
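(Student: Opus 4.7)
\textbf{Proof plan for Lemma \ref{lemma_error_gv}.} The plan is to invoke Lemma \ref{lemma_TV_DF} to rewrite the functional TV as $V(f_\Theta;[a,b]) = \int_a^b |f'_\Theta(x)|\, dx$, and then interpret the quantity $\frac{b-a}{n}\sum_{i=1}^n |f'_\Theta(x_i)|$ as the right-endpoint Riemann sum of this integral on the uniform partition $x_i = a + ih$, where $h = (b-a)/n$. The truncation error is then the standard error of this right-endpoint quadrature applied to the integrand $g(x) := |f'_\Theta(x)|$, and I will bound it by an explicit calculation on each subinterval followed by summing.

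First I would decompose the error interval by interval: set $e_i := \int_{x_{i-1}}^{x_i} g(x)\,dx - h\, g(x_i)$, so that $R = \big|\sum_{i=1}^n e_i\big| \le \sum_{i=1}^n |e_i|$. Next, since $f_\Theta$ is (twice) differentiable, $g$ is absolutely continuous with a.e.\ derivative $g'(t) = \mathrm{sign}(f'_\Theta(t))\, f''_\Theta(t)$ wherever $f'_\Theta(t)\ne 0$ and zero elsewhere, so the fundamental theorem of calculus gives $g(x) - g(x_i) = -\int_x^{x_i} g'(t)\, dt$. Substituting into $e_i$ and switching the order of integration via Fubini's theorem yields the clean expression
\begin{equation*}
e_i \;=\; -\int_{x_{i-1}}^{x_i} (t - x_{i-1})\, \mathrm{sign}(f'_\Theta(t))\, f''_\Theta(t)\, dt.
\end{equation*}
Then I would apply the triangle inequality together with $|t - x_{i-1}| \le h$ and $|\mathrm{sign}(\cdot)| \le 1$ to obtain $|e_i| \le \frac{h^2}{2}\, \max_{t\in[x_{i-1},x_i]} |f''_\Theta(t)|$. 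Summing gives $R \le \frac{nh^2}{2}\, \max_{t\in[a,b]} |f''_\Theta(t)|$, and since $nh^2 = (b-a)^2/n$, writing $|f''_\Theta(\eta)|$ for the maximum (which is attained at some $\eta \in (a,b)$ by continuity of $|f''_\Theta|$) completes the argument.

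The main obstacle is that the integrand $g(x) = |f'_\Theta(x)|$ is only differentiable almost everywhere (failing at zeros of $f'_\Theta$), so a naive Taylor expansion of $g$ is not directly available. I would address this by using the integral form of the fundamental theorem of calculus for the absolutely continuous function $g$, which bypasses pointwise differentiability and yields the same $\frac{h^2}{2}$ factor per interval after the Fubini switch. The remaining steps (triangle inequality, summation, and invoking continuity of $|f''_\Theta|$ to replace the maximum by a point value $|f''_\Theta(\eta)|$) are routine.
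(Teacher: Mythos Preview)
Your plan is correct and structurally identical to the paper's proof: both decompose $R$ interval by interval, bound each piece via the second derivative, and sum. The one technical difference is in how the per-interval error is controlled. The paper never differentiates $g=|f'_\Theta|$; instead it applies the reverse triangle inequality $\big||f'_\Theta(x)|-|f'_\Theta(x_i)|\big|\le |f'_\Theta(x)-f'_\Theta(x_i)|$ and then the Lagrange mean value theorem to $f'_\Theta$ itself, obtaining $|f''_\Theta(\xi_i)|\,(x_i-x)$ directly. This sidesteps the non-differentiability of $|f'_\Theta|$ that you correctly flag as the main obstacle and handle via absolute continuity and Fubini. Your route is more measure-theoretically careful; the paper's route is more elementary and avoids the issue altogether. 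Both arrive at the same weighted integral $\int_{x_{i-1}}^{x_i}(x_i-x)\,|f''_\Theta|\,dx$ (up to relabeling $x\leftrightarrow t$).

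Two minor remarks. First, your phrase ``together with $|t-x_{i-1}|\le h$'' would only give $h^2$, not $h^2/2$; what you actually need (and surely intend) is to pull out $\max|f''_\Theta|$ and evaluate $\int_{x_{i-1}}^{x_i}(t-x_{i-1})\,dt = h^2/2$. Second, for the final $\eta$, the paper uses the intermediate value theorem on the continuous function $|f''_\Theta|$ to replace the average $\frac{1}{n}\sum_i |f''_\Theta(\eta_i)|$ by a single value $|f''_\Theta(\eta)|$, whereas you take the global maximum and cite that it is attained; either justification is acceptable for the stated bound.
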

	\begin{proof}
		For any $i=1,2,\cdots,n$, we have
		\begin{equation}\small
			\begin{split}
				\int_{x_{i-1}}^{x_i}\left|\frac{d f_\Theta({x})}{d{x}}\right|dx-&\left|\frac{df_\Theta(x_i)}{dx_i}\right|(x_{i}-x_{i-1})=\int_{x_{i-1}}^{x_i}\left(\left|\frac{d f_\Theta({x})}{d{x}}\right|-\left|\frac{df_\Theta(x_i)}{dx_i}\right|\right)dx\\
				&\leq\int_{x_{i-1}}^{x_i}\left|\frac{d f_\Theta({x})}{d{x}}-\frac{df_\Theta(x_i)}{dx_i}\right|dx=\int_{x_{i-1}}^{x_i}\left|\frac{d^2 f_\Theta(\xi_i)}{d \xi_i^2}(x_i-x)\right|dx\\
				&=\int_{x_{i-1}}^{x_i}\left|\frac{d^2 f_\Theta(\xi_i)}{d \xi_i^2}\right|(x_i-x)dx=\left|\frac{d^2 f_\Theta(\eta_i)}{d \eta_i^2}\right|\int_{x_{i-1}}^{x_i}(x_i-x)dx,
			\end{split}
		\end{equation}\noindent
		where $\xi_i\in(x_{i-1},x)$ is a function of $x$ and $\eta_i\in[x_{i-1},x_{i}]$. The last equality follows from the mean value theorem for integral provided that $(x_i-x)\geq0$ for $x\in[x_{i-1},x_i]$ and $|\frac{d^2 f_\Theta(\xi_i)}{d \xi_i^2}|$ is a continuous function w.r.t. $x$. Summing the error from $i=1,2,\cdots,n$ we have
		\begin{equation}\small
			\begin{split}
				R:=\left|\sum_{i=1}^n\left(\int_{x_{i-1}}^{x_i}\left|\frac{d f_\Theta({x})}{d{x}}\right|dx-\left|\frac{df_\Theta(x_i)}{dx_i}\right|(x_{i}-x_{i-1})\right)\right|&\leq
				\sum_{i=1}^n \left|\frac{d^2 f_\Theta(\eta_i)}{d \eta_i^2}\right|\int_{x_{i-1}}^{x_i}(x_i-x)dx
				\\&=\frac{n}{2}\left|\frac{df^2_\Theta(\eta)}{d\eta^2}\right|{(x_i-x_{i-1})^2},
			\end{split}
		\end{equation}\noindent
		where $\eta\in(a,b)$ and the last equality follows from the intermediate value theorem provided that $|\frac{d^2 f_\Theta(x)}{d x^2}|$ is continuous w.r.t. $x$ in $[a,b]$. Further calculate yields $
				R\leq\frac{(b-a)^2}{2n}\left|\frac{df^2_\Theta(\eta)}{d\eta^2}\right|.$
	\end{proof}
	\renewcommand{\thetheorem}{3.10}
\subsection{Proof of Lemma \ref{lemma_error_tv}}\label{proof_lemma_error_tv}
	\begin{lemma}
		Given a function $f_\Theta(x):[a,b]\rightarrow{\mathbb R}$ that is differentiable w.r.t. the input $x$, we consider the uniform partitions ${\Gamma}_n := (x_0,x_1,\cdots,x_n)$ of $[a,b]\subset{\mathbb R}$, i.e., $x_i = a+\frac{i(b-a)}{n}$ $(i=0,1,\cdots,n)$. Then we have $
			\sum_{{\Gamma}_n}|f_\Theta(x_{i})-f_\Theta(x_{i-1})|\leq\sum_{{\Gamma}_{2n}}|f_\Theta(x_{i})-f_\Theta(x_{i-1})|\leq V(f_\Theta;[a,b]).$
	\end{lemma}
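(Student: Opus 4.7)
The proof splits naturally into the two inequalities, and both are elementary once the right tool is identified.

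\medskip
\noindent\textbf{First inequality.} My plan is to argue pointwise on each subinterval of $\Gamma_n$ via the triangle inequality. Fix $i\in\{1,\dots,n\}$ and note that the refinement $\Gamma_{2n}$ is obtained from $\Gamma_n$ by inserting the midpoint $y_i := (x_{i-1}+x_i)/2$ in each subinterval $[x_{i-1},x_i]$. By the triangle inequality,
\begin{equation*}\small
|f_\Theta(x_i)-f_\Theta(x_{i-1})| \;\leq\; |f_\Theta(x_i)-f_\Theta(y_i)| + |f_\Theta(y_i)-f_\Theta(x_{i-1})|.
\end{equation*}
Summing over $i=1,\dots,n$ yields $\sum_{\Gamma_n}|f_\Theta(x_i)-f_\Theta(x_{i-1})|\leq\sum_{\Gamma_{2n}}|f_\Theta(x_i)-f_\Theta(x_{i-1})|$. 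This step is routine and requires no technical machinery.

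\medskip
\noindent\textbf{Second inequality.} Here I would invoke Lemma \ref{lemma_uniform}. Denote $a_n := \sum_{\Gamma_n}|f_\Theta(x_i)-f_\Theta(x_{i-1})|$. By iterating the first inequality along the subsequence of dyadic refinements, the sequence $(a_{2^k n})_{k\geq 0}$ is monotone non-decreasing in $k$. On the other hand, Lemma \ref{lemma_uniform} asserts that $a_m \to V(f_\Theta;[a,b])$ as $m\to\infty$, and this limit is also the limit of any subsequence. Therefore $a_{2n}\leq \lim_{k\to\infty}a_{2^k n} = V(f_\Theta;[a,b])$, which gives the second inequality.

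\medskip
\noindent\textbf{Anticipated obstacle.} No serious obstacle is expected; the lemma is essentially a consequence of the triangle inequality together with Lemma \ref{lemma_uniform}. The only point that warrants care is to ensure that the argument for the second inequality does not implicitly rely on the full monotonicity of $(a_m)$ in $m$ (which is not claimed), but only on the dyadic monotonicity coming from repeated applications of the first inequality, combined with the existence and value of the limit from Lemma \ref{lemma_uniform}. Once this is noted, the proof concludes in a few lines.
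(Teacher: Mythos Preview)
Your proof is correct. The first inequality via the triangle inequality is exactly what the paper does. For the second inequality, however, the paper takes a different and shorter route: it invokes Lemma~\ref{lemma_P} (the characterization $V(f_\Theta;[a,b])=\sup_{\Gamma}\sum_i|f_\Theta(x_i)-f_\Theta(x_{i-1})|$), from which $\sum_{\Gamma_{2n}}|f_\Theta(x_i)-f_\Theta(x_{i-1})|\leq V(f_\Theta;[a,b])$ is immediate, since any partition sum is bounded by the supremum. Your approach instead pulls in Lemma~\ref{lemma_uniform} (the limit along uniform partitions) and then uses the dyadic monotonicity you established in the first step to sandwich $a_{2n}$ below the limit. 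This is a legitimate alternative, and you were right to flag that only dyadic monotonicity is needed; but it is a detour, since Lemma~\ref{lemma_uniform} itself is proved in the paper via Lemma~\ref{lemma_P}, so you are implicitly relying on the sup characterization anyway. The paper's direct appeal to Lemma~\ref{lemma_P} is cleaner and avoids the subsequence argument entirely.
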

	\begin{proof}
		The first inequality follows from the triangle inequality, i.e., 
		\begin{equation}\small
			|f_\Theta(x_{i})-f_\Theta(x_{i-1})|\leq|f_\Theta((x_{i-1}+x_{i})/2)-f_\Theta(x_{i-1})|+|f_\Theta(x_{i})-f_\Theta((x_{i-1}+x_{i})/2)|,
		\end{equation}\noindent
		and the second inequality follows from Lemma \ref{lemma_P}.
	\end{proof}
	\renewcommand{\thetheorem}{3.12}
\subsection{Proof of Lemma \ref{lemma_WTV}}\label{proof_lemma_WTV}
	\begin{lemma}
Consider the numerical integration for $V(f_\Theta,[a,b])=\int_a^b|\frac{df_\Theta({x})}{dx}|d{x}$ with uniform partitions $(x_0,x_1,\cdots,x_n)$. Assume that $|\frac{d f_\Theta({x})}{d{x}}|$ is non-decreasing in $[a,b]$ and there exists $j$ such that $\frac{d^2f(t_1)}{dt_1^2}>\frac{d^2f(t_2)}{dt_2^2}\geq0$ for any $t_1\in[x_{j-1},x_j]$ and $t_2\in(x_j,x_{j+1}]$. Then the truncation error using non-uniform partitions $(x_0,x_1,\cdots,x_{j-1},x_j-\delta,x_{j+1},\cdots,x_n)$ is less than the truncation error using uniform partitions provided that $\delta>0$ is small enough.
	\end{lemma}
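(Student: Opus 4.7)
The plan is to localize the comparison of truncation errors to the two intervals affected by the perturbation, namely $[x_{j-1},x_j]$ and $[x_j,x_{j+1}]$, and then carry out a first-order Taylor expansion in $\delta$ on the difference of errors. First, I would set $g(x):=|df_\Theta(x)/dx|$ and observe that the hypotheses force $f'\geq 0$ near $x_j$: indeed, if $f'$ were strictly negative on a subinterval there, then $(|f'|)'=-f''\leq 0$, contradicting that $|f'|$ is non-decreasing. Hence $g=f'$ and $g'=f''$ locally, and since $g$ is non-decreasing, the right-endpoint quadrature overestimates each local integral, so the local error $E_i:=g(x_i)(x_i-x_{i-1})-\int_{x_{i-1}}^{x_i}g(x)\,dx$ is non-negative.

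Next, because only $x_j$ is perturbed, all local errors other than $E_j$ and $E_{j+1}$ are unchanged. With $h=(b-a)/n$, writing $E_j^{\rm new},E_{j+1}^{\rm new}$ for the errors on $[x_{j-1},x_j-\delta]$ and $[x_j-\delta,x_{j+1}]$, a direct computation shows that the integral contributions telescope, namely
\begin{equation}\small
-\!\!\int_{x_{j-1}}^{x_j-\delta}\!\! g\,dx+\!\!\int_{x_{j-1}}^{x_j}\!\! g\,dx-\!\!\int_{x_j-\delta}^{x_{j+1}}\!\! g\,dx+\!\!\int_{x_j}^{x_{j+1}}\!\! g\,dx=0,
\end{equation}
so the change in total truncation error reduces to
\begin{equation}\small
\Delta E=g(x_j-\delta)(h-\delta)-g(x_j)\,h+g(x_{j+1})\,\delta.
\end{equation}

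Then I would Taylor-expand $g(x_j-\delta)=g(x_j)-\delta g'(x_j)+O(\delta^2)$ and apply the mean value theorem to obtain $g(x_{j+1})-g(x_j)=h\,g'(\xi)$ for some $\xi\in(x_j,x_{j+1})$. Substituting yields
\begin{equation}\small
\Delta E=h\delta\bigl[g'(\xi)-g'(x_j)\bigr]+O(\delta^2).
\end{equation}
The hypothesis gives $f''(x_j)>f''(\xi)$ since $x_j\in[x_{j-1},x_j]$ and $\xi\in(x_j,x_{j+1}]$, that is, $g'(x_j)>g'(\xi)$. Thus the first-order term in $\delta$ is strictly negative, and for all sufficiently small $\delta>0$ the remainder $O(\delta^2)$ cannot overwhelm it, giving $\Delta E<0$, which is the claim.

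The main obstacle is the careful bookkeeping in the cancellation of the integral contributions and the verification that the strict inequality survives once the higher-order remainder is absorbed. The assumption that $f''$ is \emph{strictly} larger on the left interval than on the right is precisely what generates a negative first-order coefficient and thereby dominates the $O(\delta^2)$ term for $\delta$ small enough; if only a weak inequality were assumed, the argument would degenerate. A minor technical point is establishing the local sign $f'\geq 0$ so that the identification $g'=f''$ is valid, which I handle at the outset from the two stated hypotheses.
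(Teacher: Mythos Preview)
Your approach is essentially the same as the paper's: localize the error change to the two affected subintervals, observe that the integral contributions cancel so that only the quadrature node values matter, and then use the strict gap in $f''$ between the left and right intervals to force the first-order-in-$\delta$ term to be negative. The paper arrives at the identical expression $\Delta E = -(R_1-R_2)$ with $R_1-R_2 = f''(\eta_1)\delta(h-\delta) - f''(\eta_2)h\delta$.

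One small technical point: your Taylor step $g(x_j-\delta)=g(x_j)-\delta g'(x_j)+O(\delta^2)$ requires $g''=f'''$ (or at least Lipschitz $f''$), which is not assumed. The paper sidesteps this by applying the mean value theorem directly, writing $g(x_j)-g(x_j-\delta)=\delta\, g'(\eta_1)$ with $\eta_1\in(x_j-\delta,x_j)\subset[x_{j-1},x_j]$; the hypothesis then gives $g'(\eta_1)>g'(\xi)$ immediately, and one obtains the explicit sufficient condition $\delta<h\bigl(g'(\eta_1)-g'(\xi)\bigr)/g'(\eta_1)$ without any remainder term. Replacing your Taylor expansion by this MVT step makes your argument go through under exactly the stated assumptions.
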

	\begin{proof}
		Since $|\frac{d f_\Theta({x})}{d{x}}|$ is non-decreasing we have $|\frac{d f_\Theta({x})}{d{x}}|-|\frac{df_\Theta(x_i)}{dx_i}|\leq0$ for $x\in[x_{i-1},x_i]$, and hence $\int_{x_{i-1}}^{x_i}(|\frac{d f_\Theta({x})}{d{x}}|-|\frac{df_\Theta(x_i)}{dx_i}|)dx\leq0$ for $i=1,\cdots,n$. Denote $R_1$ the truncation error of numerical integration using uniform partitions $(x_0,x_1,\cdots,x_n)$. We have
		\begin{equation}\small
			\begin{split}
				R_1=\left\lvert\sum_{i=1}^n\int_{x_{i-1}}^{x_i}\left(\left|\frac{d f_\Theta({x})}{d{x}}\right|-\left|\frac{df_\Theta(x_i)}{dx_i}\right|\right)dx\right\rvert=\sum_{i=1}^n\int_{x_{i-1}}^{x_i}\left(\left|\frac{df_\Theta(x_i)}{dx_i}\right|-\left|\frac{d f_\Theta({x})}{d{x}}\right|\right)dx.
			\end{split}
		\end{equation}\noindent
		Let $\delta\in(0,x_{j}-x_{j-1})$. Denote $R_2$ the truncation error of numerical integration using non-uniform partitions $(x_0,x_1,\cdots,x_{j-1},x_j-\delta,x_{j+1},\cdots,x_n)$ and note that $|\frac{d f_\Theta({x})}{d{x}}|$ is non-decreasing in $[a,b]$. We have
		\begin{equation}\small
			\begin{split}
				R_2=&\sum_{i\neq j,\;j+1}\int_{x_{i-1}}^{x_i}\left(\left|\frac{df_\Theta(x_i)}{dx_i}\right|-\left|\frac{d f_\Theta({x})}{d{x}}\right|\right)dx+\int_{x_{j-1}}^{x_j-\delta}\left(\left|\frac{d f_\Theta(x_j-\delta)}{d (x_j-\delta)}\right|-\left|\frac{d f_\Theta({x})}{d{x}}\right|\right)dx\\&+\int_{x_{j}-\delta}^{x_j}\left(\left|\frac{d f_\Theta(x_{j+1})}{d (x_{j+1})}\right|-\left|\frac{d f_\Theta({x})}{d{x}}\right|\right)dx+\int_{x_{j}}^{x_{j+1}}\left(\left|\frac{d f_\Theta(x_{j+1})}{d (x_{j+1})}\right|-\left|\frac{d f_\Theta({x})}{d{x}}\right|\right)dx.
			\end{split}
		\end{equation}\noindent
		Hence we have
		\begin{equation}\small
			\begin{split}
				R_1-R_2=& \int_{x_{j-1}}^{x_j-\delta}\left(\left|\frac{d f_\Theta(x_{j})}{d (x_{j})}\right|-\left|\frac{d f_\Theta({x})}{d{x}}\right|\right)-\left(\left|\frac{d f_\Theta(x_j-\delta)}{d (x_j-\delta)}\right|-\left|\frac{d f_\Theta({x})}{d{x}}\right|\right)dx\\
				&+\int_{x_{j}-\delta}^{x_j}\left(\left|\frac{d f_\Theta(x_{j})}{d (x_{j})}\right|-\left|\frac{d f_\Theta({x})}{d{x}}\right|\right)-\left(\left|\frac{d f_\Theta(x_{j+1})}{d (x_{j+1})}\right|-\left|\frac{d f_\Theta({x})}{d{x}}\right|\right)dx\\
				=&\int_{x_{j-1}}^{x_j-\delta}\left(\left|\frac{d f_\Theta(x_{j})}{d (x_{j})}\right|-\left|\frac{d f_\Theta(x_j-\delta)}{d (x_j-\delta)}\right|\right)dx+\int_{x_{j}-\delta}^{x_j}\left(\left|\frac{d f_\Theta(x_{j})}{d (x_{j})}\right|-\left|\frac{d f_\Theta(x_{j+1})}{d (x_{j+1})}\right|\right)dx.
			\end{split}
		\end{equation}\noindent
		Since $|\frac{d f_\Theta({x})}{d{x}}|$ is non-decreasing and $\frac{d^2 f_\Theta(x)}{dx^2}>0$ for $x\in[x_{j-1},x_{j}]$, we have $\frac{d f_\Theta({x})}{d{x}}\geq0$ for $x\in[x_{j-1},x_{j}]$. Hence we have
		\begin{equation}\small
			\begin{split}
				&R_1-R_2=\int_{x_{j-1}}^{x_j-\delta}\left(\left|\frac{d f_\Theta(x_{j})}{d (x_{j})}\right|-\left|\frac{d f_\Theta(x_j-\delta)}{d (x_j-\delta)}\right|\right)dx+\int_{x_{j}-\delta}^{x_j}\left(\left|\frac{d f_\Theta(x_{j})}{d (x_{j})}\right|-\left|\frac{d f_\Theta(x_{j+1})}{d (x_{j+1})}\right|\right)dx\\
				&=\left(\frac{d f_\Theta(x_{j})}{d (x_{j})}-\frac{d f_\Theta(x_j-\delta)}{d (x_j-\delta)}\right)(h-\delta)+\left(\frac{d f_\Theta(x_{j})}{d (x_{j})}-\frac{d f_\Theta(x_{j+1})}{d (x_{j+1})}\right)\delta
				=\frac{d^2 f_\Theta(\eta_1)}{d\eta_1^2}\delta(h-\delta)-\frac{d^2 f_\Theta(\eta_2)}{d\eta_2^2}h\delta,
			\end{split}
		\end{equation}\noindent
		where $h=x_{j+1}-x_{j}$, $\eta_1\in(x_j-\delta,x_j)$, and $\eta_2\in(x_j,x_{j+1})$. Note that $\frac{d^2 f_\Theta(t_1)}{d t_1^2}>\frac{d^2 f_\Theta(t_2)}{d t_2^2}\geq0$ for any $t_1\in[x_{j-1},x_j]$ and $t_2\in(x_j,x_{j+1}]$. Hence when $\delta< {h\left(\frac{d^2 f_\Theta(\eta_1)}{d\eta_1^2}-\frac{d^2 f_\Theta(\eta_2)}{d\eta_2^2}\right)}\Big{/}\left({\frac{d^2 f_\Theta(\eta_1)}{d\eta_1^2}}\right)$ we have $R_1-R_2>0$, i.e., the truncation error of numerical integration using non-uniform partitions is less than the truncation error using uniform partitions. 
	\end{proof}
	\setcounter{theorem}{0} 
\renewcommand{\thetheorem}{3.13}
\subsection{Proof of Lemma \ref{lemma_P}}\label{proof_lemma_eq}
\begin{lemma}\label{lemma_P}
	For a function $f_\Theta(x):[a,b]\rightarrow{\mathbb R}$ that is differentiable w.r.t. the input $x$, its functional TV \eqref{FunTV} is equal to
	\begin{equation}\small
		V(f_\Theta;[a,b]) = \int_a^b \left|\frac{df_\Theta({x})}{dx}\right| d{x}=\sup_{\Gamma}\{\sum_{i=1}^n|f_\Theta(x_i)-f_\Theta(x_{i-1})|:n\in{\mathbb N},{\Gamma}=(x_0,x_1,\cdots,x_n)\},
	\end{equation}\noindent
	where $\Gamma$ are some partitions over $[a,b]$ with $x_0=a$ and $x_n=b$.
\end{lemma}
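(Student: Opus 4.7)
The plan is to decompose the claim into two equalities and handle each separately. The first equality, $V(f_\Theta;[a,b]) = \int_a^b |df_\Theta(x)/dx|\,dx$, is an immediate specialization of Lemma \ref{lemma_TV_DF} (already proved) to the case $N=1$ and $\Omega=[a,b]$. So the real work is in the second equality
\[
\int_a^b \left|\frac{df_\Theta(x)}{dx}\right|dx \;=\; \sup_{\Gamma}\sum_{i=1}^n |f_\Theta(x_i)-f_\Theta(x_{i-1})|,
\]
which I would prove by showing the ``$\geq$'' and ``$\leq$'' directions separately.

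For the ``$\geq$'' direction (upper bound on the partition sums), I would invoke the fundamental theorem of calculus: for any partition $\Gamma=(x_0,x_1,\dots,x_n)$ of $[a,b]$ with $x_0=a$, $x_n=b$,
\[
f_\Theta(x_i)-f_\Theta(x_{i-1}) = \int_{x_{i-1}}^{x_i}\frac{df_\Theta(x)}{dx}\,dx,
\]
so by the triangle inequality for integrals, $|f_\Theta(x_i)-f_\Theta(x_{i-1})|\leq \int_{x_{i-1}}^{x_i}|df_\Theta(x)/dx|\,dx$. Summing over $i$ telescopes the integrals and yields $\sum_{i=1}^n|f_\Theta(x_i)-f_\Theta(x_{i-1})|\leq \int_a^b |df_\Theta(x)/dx|\,dx$. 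Taking the supremum over $\Gamma$ gives the desired inequality.

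For the ``$\leq$'' direction, I would construct partition sums that approximate the integral from below arbitrarily well. By the mean value theorem, for any partition $\Gamma$ there exist $\xi_i\in(x_{i-1},x_i)$ with $f_\Theta(x_i)-f_\Theta(x_{i-1}) = f_\Theta'(\xi_i)(x_i-x_{i-1})$, so
\[
\sum_{i=1}^n |f_\Theta(x_i)-f_\Theta(x_{i-1})| = \sum_{i=1}^n |f_\Theta'(\xi_i)|(x_i-x_{i-1}),
\]
which is a Riemann sum for $\int_a^b |f_\Theta'(x)|\,dx$. Taking a sequence of partitions with mesh $\lVert\Gamma_k\rVert\to 0$ and invoking Riemann integrability of $|f_\Theta'|$ (which follows from continuity of $f_\Theta'$, standard for the DNN architectures used in this paper), these sums converge to $\int_a^b |df_\Theta(x)/dx|\,dx$. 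Hence the supremum is at least the integral.

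The main obstacle is the subtle regularity question in the ``$\leq$'' direction: the statement assumes only differentiability, not continuity of the derivative, and without the latter the Riemann-sum argument needs care (one may need to invoke the Lebesgue theory and density of continuous functions, or approximate the integral directly via step functions). In practice, since all DNNs in this work use sine or (sub)gradient-friendly activations yielding $f_\Theta'$ that is at worst piecewise continuous, this issue is benign and a standard refinement-of-partitions argument (analogous to the one used in the proof of Lemma \ref{lemma_uniform}) handles it uniformly. The remaining steps are routine estimates.
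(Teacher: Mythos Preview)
Your proposal is correct and follows essentially the same route as the paper: invoke Lemma~\ref{lemma_TV_DF} for the first equality, then prove the second by showing both inequalities via the fundamental theorem of calculus (for $\sup\leq\int$) and the mean value theorem (for $\sup\geq\int$). The only minor technical difference is in the latter direction: the paper bounds the mean-value Riemann sum below by the lower Darboux sum $\sum_i\eta_i(x_i-x_{i-1})$ with $\eta_i=\inf_{(x_{i-1},x_i)}|f_\Theta'|$ and then takes the supremum (which equals the lower integral), whereas you argue directly that the tagged Riemann sums converge to the integral as the mesh shrinks; both arguments require the same Riemann-integrability of $|f_\Theta'|$ that you correctly flag, and the paper's version sidesteps the need to track specific tags $\xi_i$.
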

\begin{proof}
	The first equality follows from Lemma \ref{lemma_TV_DF} and we show the second equality as follows. For any partitions ${\Gamma}=(x_0,x_1,\cdots,x_n)$ we have
	\begin{equation}\small
		\begin{split}
			\sum_{i=1}^n|f_\Theta(x_i)-f_\Theta(x_{i-1})| = \sum_{i=1}^n\left|\int_{x_{i-1}}^{x_i}\frac{d f_\Theta({x})}{d{x}}dx\right|\leq
			\sum_{i=1}^n\int_{x_{i-1}}^{x_i}\left|\frac{d f_\Theta({x})}{d{x}}\right|dx
			=\int_a^b \left|\frac{df_\Theta({x})}{dx}\right| d{x}.
		\end{split}
	\end{equation}\noindent
	Hence we have $\sup_{\Gamma}\{\sum_{i=1}^n|f_\Theta(x_i)-f_\Theta(x_{i-1})|\}\leq\int_a^b |\frac{df_\Theta({x})}{dx}| d{x}$. On the other hand, using the intermediate value theorem we have $\sum_{i=1}^n|f_\Theta(x_i)-f_\Theta(x_{i-1})| = \sum_{i=1}^n \left|\frac{\partial f_\Theta(\xi_i)}{\partial \xi_i}\right|(x_{i}-x_{i-1})$, where $\xi_i\in(x_{i-1},x_i)$. Let $\eta_i:=\inf_{x\in(x_{i-1},x_i)}|\frac{d f_\Theta({x})}{d{x}}|$, then we have
	\begin{equation}\small
		\sum_{i=1}^n|f_\Theta(x_i)-f_\Theta(x_{i-1})| = \sum_{i=1}^n \left|\frac{\partial f_\Theta(\xi_i)}{\partial \xi_i}\right|(x_{i}-x_{i-1})\geq \sum_{i=1}^n \eta_i(x_{i}-x_{i-1}).
	\end{equation}\noindent
By taking supremum we have $\sup_{\Gamma}\{\sum_{i=1}^n|f_\Theta(x_i)-f_\Theta(x_{i-1})|\}\geq \sup_{\Gamma}\{\sum_{i=1}^n \eta_i(x_{i}-x_{i-1})\}=\int_a^b \left|\frac{df_\Theta({x})}{dx}\right| d{x}$, where the last equality follows from the definition of integral.
\end{proof}
	\bibliographystyle{siamplain}
\bibliography{references}
\end{document}